\def\eqref#1{equation~\ref{#1}}
\def\1{\bm{1}}
\def\eps{{\epsilon}}
\DeclareMathAlphabet{\mathsfit}{\encodingdefault}{\sfdefault}{m}{sl}
\SetMathAlphabet{\mathsfit}{bold}{\encodingdefault}{\sfdefault}{bx}{n}
\def\gB{{\mathcal{B}}}
\def\gD{{\mathcal{D}}}
\def\gF{{\mathcal{F}}}
\def\gH{{\mathcal{H}}}
\def\gI{{\mathcal{I}}}
\def\gL{{\mathcal{L}}}
\def\gN{{\mathcal{N}}}
\def\gO{{\mathcal{O}}}
\def\gR{{\N^2}}
\def\gT{{\mathcal{T}}}
\def\gV{{\mathcal{V}}}
\def\gX{{\mathcal{X}}}
\def\gY{{\mathcal{Y}}}
\def\gZ{{\mathcal{Z}}}
\newcommand{\R}{\mathbb{R}}
\newtheorem{thm}{Theorem}
\newtheorem*{thm*}{Theorem}
\newtheorem{lemma}{Lemma}
\newtheorem{cor}{Corollary}
\newtheorem*{cor*}{Corollary}
\newtheorem{defn}{Definition}
\newtheorem{prop}{Proposition}
\newtheorem*{prop*}{Proposition}
\newcommand{\N}{\mathbb{N}}
\newcommand{\leqc}{\mathrel{\mathchoice%
  {\overset{+}{\leq}}%
  {\text{\scriptsize $\overset{+}{\leq}$}}%
  {\text{\scriptsize $\overset{+}{\leq}$}}%
  {\text{\scriptsize $\overset{+}{\leq}$}}%
}}
\newcommand{\geqc}{\stackrel{+}{\geq}}
\newcommand{\eqc}{\stackrel{+}{=}}
\newcommand{\B}{\{0,1\}^*}
\newcommand{\bigast}{\raisebox{-0.3ex}{\scalebox{1.2}{\ensuremath{\ast}}}}
\newcommand{\eat}[1]{}
\newcommand{\zmap}{\texttt{zmap}}
\DeclareMathOperator{\im}{im}
\DeclareMathOperator{\dom}{dom}
\definecolor{gdmblue900}{HTML}{174EA6}
\definecolor{gdmblue800}{HTML}{185ABC}
\definecolor{gdmblue700}{HTML}{1967D2}
\definecolor{gdmblue600}{HTML}{1A73E8}
\definecolor{gdmblue500}{HTML}{4285F4}
\definecolor{gdmblue400}{HTML}{669DF6}
\definecolor{gdmblue300}{HTML}{8AB4F8}
\definecolor{gdmblue200}{HTML}{AECBFA}
\definecolor{gdmblue100}{HTML}{D2E3FC}
\newcommand{\greencheck}{\textcolor{green!70!black}{\ding{51}}}
\newcommand{\redcross}{\textcolor{red!70!black}{\ding{55}}}
\title{Bridging Kolmogorov Complexity and Deep Learning: Asymptotically Optimal Description Length Objectives for Transformers}
\author{
Peter Shaw \\
Google DeepMind \\
\And
James Cohan \\
Google Research \\
\And
Jacob Eisenstein \\
Google DeepMind \\
\And 
Kristina Toutanova \\
Google DeepMind \\
}
\begin{document}

\maketitle

\begin{abstract}
The Minimum Description Length (MDL) principle offers a formal framework for applying Occam's razor in machine learning. However, its application to neural networks such as Transformers is challenging due to the lack of a principled, universal measure for model complexity.
This paper introduces the theoretical notion of asymptotically optimal description length objectives, grounded in the theory of Kolmogorov complexity. We establish that a minimizer of such an objective achieves optimal compression, for any dataset, up to an additive constant, in the limit as model resource bounds increase.
We prove that asymptotically optimal objectives exist for Transformers, building on a new demonstration of their computational universality. We further show that such objectives can be tractable and differentiable by constructing and analyzing a variational objective based on an adaptive Gaussian mixture prior.
Our empirical analysis shows that this variational objective selects for a low-complexity solution with strong generalization on an algorithmic task, but standard optimizers fail to find such solutions from a random initialization, highlighting key optimization challenges. More broadly, by providing a theoretical framework for identifying description length objectives with strong asymptotic guarantees, we outline a potential path towards training neural networks that achieve greater compression and generalization.
\end{abstract}

\section{Introduction}

The principle of Occam's razor---that simpler explanations are preferable---is a foundational concept in machine learning. Algorithmic information theory provides an elegant formalization through the \emph{Minimum Description Length}~\citep[MDL;][]{rissanen1978modeling} principle. The MDL principle frames learning as data compression: the best model for a dataset is the one that minimizes the combined length of the model's description plus the description of the data encoded with that model. Intuitively, any regularity in the data that is useful for prediction is also useful for compression, and vice versa.

The empirical success of large neural networks might seem to contradict the MDL principle if we consider a naive, uniform encoding of their weights. Such networks often generalize surprisingly well, even when highly over-parameterized and trained with simple maximum likelihood objectives~\citep{zhang2017understanding}. However, various methods have been proposed for training compressible neural networks based on, e.g., quantization~\citep{han2016deep}, subspace training~\citep{li2018measuring}, low-rank approximation, variational inference~\citep{louizos2017bayesian,hinton1993keeping}, or some combination~\citep{lotfi2022pac,lotfi2024non,demoss2025complexity}. These methods have highlighted that neural networks can often be highly compressed, i.e. their true complexity can be much smaller than naive parameter counting would suggest~\citep{blier2018description}. While such methods have thus far not led directly to MDL-inspired regularizers that reliably improve generalization, these methods have been successful for establishing tighter compression bounds~\citep{lotfi2022pac,lotfi2024non} and inspiring parameter-efficient fine-tuning methods such as LoRA ~\citep{hu2022lora}.

However, key theoretical questions remain. These methods from prior work can be interpreted as defining different description length measures over a network's weights. Different measures may capture different types of regularities, but fail to capture \emph{all} potential regularities in the network. MDL objectives based on such measures may therefore fail to incentivize capturing all of the regularities in the data, if certain patterns cannot be encoded efficiently in the network's weights. This can lead to sub-optimal compression and---per the MDL principle---sub-optimal generalization. Ideally, we want a description length objective that encourages the model to capture \emph{any} regularity in the data. Assuming perfect optimization, such an objective would lead to optimal compression, regardless of the dataset. To what extent can we implement such an idealized objective?

Algorithmic information theory offers a framework to address this question~\citep{li2008introduction,hutter2005universal,schmidhuber1997discovering}. MDL was inspired by Solomonoff's theory of inductive inference~\citep{solomonoff1964formal}, which proposed to favor hypotheses with low Kolmogorov complexity. Put simply, the Kolmogorov complexity of an object is the length of the shortest program that generates that object (see Section~\ref{sec:background}). Kolmogorov complexity offers optimal compression relative to any other computable description length measure, up to an additive constant. This \emph{universality} is rooted in the Church-Turing thesis that any sufficiently powerful model of computation can simulate any other. Computable resource-bounded approximations maintain this universality in the limit as resource bounds increase. The notion of Kolmogorov complexity can be directly applied to program induction -- priors based on program length have been empirically successful for inducing programs~\cite[e.g.,][]{romera2024mathematical} and grammars~\cite[e.g.,][]{shaw2021compositional} with strong generalization. However, applying this notion to neural networks is less straightforward. While we can easily compute the length of a discrete program, it is less clear how to quantify the complexity of the function a neural network computes in an analogous way. Therefore, a conceptual gap remains between theoretical frameworks based on algorithmic information theory and practical description length objectives for neural networks. We aim to narrow this gap. 

Building on the theory of Kolmogorov complexity and the computational universality of Transformers, we demonstrate the existence of \emph{asymptotically optimal} description length objectives for Transformers, with optimality guarantees analogous to those that hold for Kolmogorov complexity. Our theoretical framework therefore highlights a potential path forward for identifying description length objectives for neural networks that select for models offering greater compression -- and potentially greater generalization. Specifically, we develop the theory of asymptotically optimal description length measures for neural networks through the following contributions:
\begin{itemize}
    \item We define \emph{universal two-part codes} for probabilistic models whose minimum description length for any data sample is provably optimal, up to an additive constant, relative to \emph{any} other two-part code. %
    This framework circumvents the need for arbitrary domain-specific priors by leveraging the universal properties of Kolmogorov complexity. (Section~\ref{sec:two-part-codes})
    \item We prove the existence of \emph{asymptotically optimal families of codes for Transformers}, which are universal in the limit as resource bounds increase. This result is established via a new demonstration that Transformer encoders are computationally universal in their ability to represent any computable, rational-valued conditional probability distribution. (Section~\ref{sec:two-part-codes})
\item We further prove that \emph{tractable and differentiable} objectives for Transformers can be asymptotically optimal, establishing this by constructing and analyzing a variational objective based on an adaptive Gaussian mixture prior. (Section~\ref{sec:variational-codes})
    \item We analyze various codelength objectives \emph{empirically} and analytically. Using a manually constructed solution for an algorithmic task, we show that our variational objective  selects for a highly compressible model with strong generalization. However, we find that standard optimizers fail to discover such low-complexity solutions from a random initialization, highlighting a key challenge for future work. Additionally, we analytically derive asymptotic bounds for various alternative two-part codes. (Section~\ref{sec:experiments})

\end{itemize}

\vspace{-0.1cm}
\section{Background: Kolmogorov Complexity}
\label{sec:background}
\vspace{-0.1cm}

We give a brief introduction to Kolmogorov complexity, with an extended and more formal version in Appendix \ref{sec:app-background}. 
 Intuitively, the \emph{Kolmogorov complexity} $K(x)$ of an object $x$ is the length of the shortest program, written in some standard programming language, that prints $x$. Similarly, the Kolmogorov complexity $K(f)$ of a discrete function $f$ is the length of the shortest program that computes $f$. To formalize these notions, Kolmogorov complexity can be defined with respect to \emph{universal prefix Turing machines}, which are multi-tape Turing machines that represent programs as binary strings encoded on a read-only \emph{program tape}. The key \emph{invariance theorem} states that $K(x)$ is independent of the particular choice of universal prefix Turing machine up to an additive constant. %
 While Kolmogorov complexity is not strictly computable due to the halting problem, we write $K_{T,R}(f)$ to denote a computable approximation under a \emph{resource bound} $R = (R_t, R_s) \in \N^2$, which considers only programs that terminate within $R_t$ steps and require at most $R_s$ registers on any tape, with respect to universal prefix Turing machine $T$. As resource bounds increase, $K_{T,R}(f)$ monotonically decreases to $K(f)$, up to an additive constant, per the invariance theorem. We introduce concise notation to express such relations that hold up to additive constants next.

\vspace{-0.1cm}
\paragraph{Inequalities with additive constants} Let $f: \mathcal{X} \to \mathbb{R}$ and $g: \mathcal{X} \to \mathbb{R}$ be functions over the same domain $\gX$. Then we can write $\forall x,~f(x) \leqc g(x)$ if and only if there exists some constant $c\in \mathbb{R}$ such that $\forall x,~f(x) < g(x) + c$, where the additive constant $c$ must not depend on $x$. We write $\forall x,~f(x) \eqc g(x)$ to denote that both
$\forall x,~f(x) \leqc g(x)$ and $\forall x,~g(x) \leqc f(x)$ hold.\footnote{Extending this notation to codelength functions over variables $X$ and $Y$, $\forall X,Y,~L_1(Y \mid X) \leqc L_2(Y \mid X)$ implies $\exists c \in \mathbb{R},\forall X,Y,~L_1(Y \mid X) < L_2(Y \mid X) + c$, where $c$ does not depend on $X$ or $Y$. We omit explicit quantifiers when they are clear from context.}

\vspace{-0.1cm}
\section{Problem Setting and Model Functions}
\label{sec:problem-setting}

\paragraph{Problem setting}
Let the input space $\gX$ be a discrete and countable set, and the output space $\gY$ be a discrete and finite set. We define the space of all possible datasets as $\gD = (\gX \times \gY)^*$, the set of all finite sequences of input-output pairs. For any dataset $D = ((x_1, y_1), \ldots, (x_n, y_n)) \in \gD$, we denote the corresponding sequence of inputs as $X = (x_1, \ldots, x_n)$ and outputs as $Y = (y_1, \ldots, y_n)$. Our analysis focuses on properties of encoding schemes that hold for any such pair of sequences $(X,Y)$.

\vspace{-0.1cm}

\paragraph{Model functions}

Neural networks modeling distributions over a discrete output space are typically characterized by a \emph{model function}, $f$, which defines a conditional distribution over a discrete space $\gY$ by mapping an input $x \in \gX$ to a vector of unnormalized logits, $f(x) \in \gL$. Motivated by the finite-precision arithmetic used in neural networks, we assume these logits are rational numbers (i.e. $\gL = \mathbb{Q}^{|\gY|}$) which enables a straightforward definition for the Kolmogorov complexity of a model function, $K(f)$.\footnote{See Appendix~\ref{sec:real-valued-distributions} for a discussion of the complexities involved with real-valued functions.}
We denote the set of all such computable model functions as $\gF$. The conditional distribution for a function $f \in \gF$ is given by applying the softmax function, $\sigma$, to its output logits:
\begin{align}
p(Y \mid X;f) = \prod_{x,y \in X,Y} p(y \mid x;f) &&
p(y \mid x;f) = \sigma(f(x))_y \label{eq:softmax}
\end{align}
In the following sections, we apply the MDL principle to this setting, where we want to identify \emph{codes} that select for a probabilistic model of this form that optimally compresses a set of labels $Y$ given corresponding inputs $X$, yielding a \emph{description length objective}.

\vspace{-0.1cm}
\section{Two-Part Codes} 
\label{sec:two-part-codes}
\vspace{-0.1cm}

One way to apply the MDL principle is with two-part codes. Consider the cost for a sender (e.g., Alice) to transmit labels $Y$ to a receiver (e.g., Bob), if both are given the inputs $X$. With a \emph{two-part code}, Alice first transmits a model hypothesis, and then transmits the labels encoded given this model hypothesis (Figure~\ref{fig:two_part_code}). Specifically, we consider two-part codes that are parameterized by a probabilistic model function (Section~\ref{sec:problem-setting}), which we refer to simply as \emph{two-part codes} in this paper.

\begin{defn}[two-part code] A \emph{two-part code} $M$ is specified by a triplet $\langle \gH_M, m_M, \alpha_M \rangle$ with:%
\begin{itemize}
    \item A hypothesis space $\gH_M$ which we assume is countable, e.g., the parameter space of a particular neural network assuming parameters with some finite-precision.
    \item A mapping $m_M : \gH_M \rightarrow \gF$ from hypotheses to model functions (Section~\ref{sec:problem-setting}), e.g., the mapping defined by a particular neural network architecture.\footnote{This can be a partial function, e.g. a Turing machine with unbounded resources may never halt for some inputs.}
    \item A prior $\alpha_M(h)$ defining a distribution over model hypotheses, $\gH_M$.\footnote{Formally we allow the prior to be any lower semicomputable semimeasure (see Appendix \ref{sec:app-background}).}
\end{itemize}
\label{def:two-part-code}\end{defn}

\begin{figure}[t!]
    \centering
    \includegraphics[width=0.85\columnwidth,keepaspectratio]{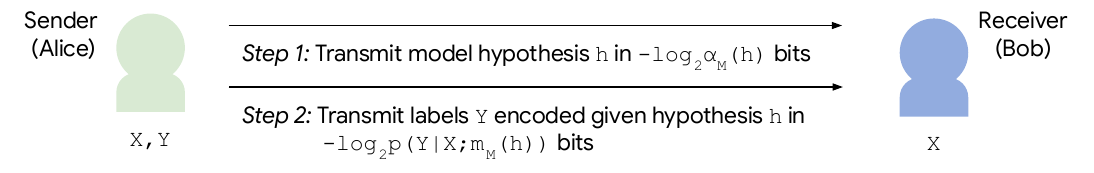}
    \caption{\textbf{Two-part code.} One way to formalize the MDL principle is with a two-part code. %
    Assume Alice and Bob agree on a two-part code $M$ and each are given inputs $X$. Alice then finds the hypothesis (e.g., model parameters) that enables sending Bob the labels $Y$ in the fewest total bits, balancing the complexity of the model with how well it fits the data. One key challenge we address is that the minimum codelength is dependent on the potentially arbitrary choice of prior $\alpha_M(h)$. %
    }
    \label{fig:two_part_code}
\end{figure}

Given a two-part code $M$, the codelength for transmitting $Y$ given $X$ is defined as the sum of the codelength of the hypothesis and the codelength of the data given the hypothesis:
\begin{align}
    L_{M}^{\text{two-part}}(Y \mid X;h) &= -\log_2 \alpha(h) -\log_2 p(Y \mid X;m_M(h)).
    \label{eq:two-part-code}
\end{align}
The $-\log_2$ terms represent the ideal codelength for transmitting the model and data under the distributions specified by the prior and the selected hypothesis, respectively; the latter term is recognizable as the standard negative log-likelihood (NLL) objective. From a Bayesian perspective, 
minimizing \eqref{eq:two-part-code} can be interpreted as finding the MAP estimate.
The minimum achievable codelength for transmitting the labels with the two-part code $M$ is then denoted as:
\begin{equation}
    L_M^{\bigast,\text{two-part}}(Y \mid X) = \min_{h \in \gH_M}
    L_M^{\text{two-part}}(Y \mid X;h).
    \label{eq:two-part-min}
\end{equation}

\paragraph{Universal two-part codes}

Notably, there is a considerable degree of freedom in specifying a two-part code, e.g. for a particular neural network architecture specifying the hypothesis space and mapping function, we can choose any prior over the model weights. The prior determines the description length measure over the weights, which, in effect, determines the inductive bias of the codelength objective.
Building on the invariance theorem for Kolmogorov complexity, we will show that there exists an equivalence class of two-part codes that are \emph{universal} in the sense that a minimizer of any universal two-part code offers at least as much compression as any other two-part code, for any dataset, up to an additive constant that does not depend on the data. This guarantee holds regardless of the specific prior, as long as it constructs a universal code, and is captured in the following theorem and its corollary.

\begin{defn}[universal two-part code]
Let $M^1$ be a two-part code. $M^1$ is a \emph{universal two-part code} if and only if, for any other two-part code, $M^2$, the following holds for any dataset $X,Y$:
\begin{equation}
L^{\bigast,\text{two-part}}_{M^1}(Y \mid X) \leqc L^{\bigast,\text{two-part}}_{M^2}(Y \mid X).
\end{equation}
\end{defn}

\begin{prop}
There exists a universal two-part code.
\label{thm:universal-two-part-existence}
\end{prop}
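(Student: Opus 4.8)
\emph{Proof proposal.} The plan is to exhibit a single two-part code $M^1$, built directly from a universal prefix Turing machine, and to show it dominates every other two-part code; this is essentially the invariance theorem repackaged for two-part codes. Fix a universal prefix Turing machine $U$ (Section~\ref{sec:background}). I would let $\gH_{M^1}$ be its domain, i.e. the prefix-free, computably enumerable set of valid self-delimiting programs; let $m_{M^1}(p)$ be the model function that $U$ computes when $p$ is on its program tape (a partial map into $\gF$); and take the prior $\alpha_{M^1}(p) = 2^{-|p|}$. Since $\gH_{M^1}$ is prefix-free, Kraft's inequality gives $\sum_p \alpha_{M^1}(p) \le 1$, and since $\gH_{M^1}$ is computably enumerable, $\alpha_{M^1}$ is a lower semicomputable semimeasure; hence $M^1 = \langle \gH_{M^1}, m_{M^1}, \alpha_{M^1}\rangle$ is a legitimate two-part code, and its minimum codelength equals, up to an additive constant, $\min_{f \in \gF}\big(K(f) - \log_2 p(Y \mid X; f)\big)$.

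Now fix an arbitrary two-part code $M^2 = \langle \gH_{M^2}, m_{M^2}, \alpha_{M^2}\rangle$ (with $m_{M^2}$ computable, as is implicit in the setting, cf. the footnote in Definition~\ref{def:two-part-code}) and let $h^\ast$ attain $L^{\bigast,\text{two-part}}_{M^2}(Y \mid X)$. I would construct a single program $p^\ast \in \gH_{M^1}$ that (i) carries a hard-coded, \emph{constant-length} description of the triple $\langle \gH_{M^2}, m_{M^2}, \alpha_{M^2}\rangle$ --- finite because $m_{M^2}$ is computable and $\alpha_{M^2}$ is lower semicomputable --- and then (ii) reads a self-delimiting codeword for $h^\ast$ off the program tape, decodes it, and runs $m_{M^2}$ on the result, so that $m_{M^1}(p^\ast) = m_{M^2}(h^\ast) \in \gF$. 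The codeword in (ii) should come from a prefix code tailored to $\alpha_{M^2}$: either via a Kraft--Chaitin / Shannon--Fano construction applied to the lower semicomputable semimeasure $\alpha_{M^2}$, or equivalently by invoking the coding theorem together with dominance of the universal semimeasure over $\gH_{M^2}$. Either way, the codeword has length at most $-\log_2 \alpha_{M^2}(h^\ast) + O(1)$ with the $O(1)$ depending only on (an index for) $M^2$, so $|p^\ast| \le c_{M^2} - \log_2 \alpha_{M^2}(h^\ast)$ for a constant $c_{M^2}$ independent of the data.

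Chaining the estimates then finishes the argument. Because $m_{M^1}(p^\ast) = m_{M^2}(h^\ast)$, the data terms $-\log_2 p(Y \mid X; \cdot)$ coincide for the two hypotheses, while $-\log_2 \alpha_{M^1}(p^\ast) = |p^\ast| \le c_{M^2} - \log_2 \alpha_{M^2}(h^\ast)$. Hence
\[ L^{\bigast,\text{two-part}}_{M^1}(Y \mid X) \le L^{\text{two-part}}_{M^1}(Y \mid X; p^\ast) \le c_{M^2} + L^{\bigast,\text{two-part}}_{M^2}(Y \mid X), \]
and since $c_{M^2}$ depends only on $M^2$, not on $X$ or $Y$, this is precisely $L^{\bigast,\text{two-part}}_{M^1}(Y \mid X) \leqc L^{\bigast,\text{two-part}}_{M^2}(Y \mid X)$. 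As $M^1$ was chosen before $M^2$, it is universal.

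The hard part will not be the overall simulation argument but the additive bookkeeping in step (ii): a naive self-delimiting encoding of $h^\ast$ (e.g. length-prefixing) costs an extra $\Theta(\log(-\log_2 \alpha_{M^2}(h^\ast)))$ bits, which grows with the complexity of the optimal hypothesis --- and hence with the dataset --- destroying the uniform additive constant. The remedy is exactly the $\alpha_{M^2}$-adapted prefix code above, whose effective construction depends only on $M^2$; the technical heart of the proof is to make this code precise and to verify that the resulting composite string $p^\ast$ genuinely lies in the prefix-free domain $\gH_{M^1}$ with $m_{M^1}$ defined there (using $m_{M^2}(h^\ast) \in \gF$), so that $p^\ast$ is an admissible hypothesis for $M^1$.
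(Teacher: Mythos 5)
Your construction of the universal code is exactly the paper's (programs of a universal prefix machine as hypotheses, $m$ given by running the machine, prior $2^{-|p|}$), but your domination argument takes a genuinely different route. You argue at the hypothesis level: take the minimizing hypothesis $h^\ast$ of $M^2$, build an explicit program that hard-codes a description of $M^2$, decodes $h^\ast$ from an $\alpha_{M^2}$-adapted prefix code of length $-\log_2\alpha_{M^2}(h^\ast)+O(1)$ (Kraft--Chaitin, i.e.\ the coding theorem applied to the prior over $\gH_{M^2}$), and then simulates $m_{M^2}$ --- essentially the classical invariance-theorem simulation, and you correctly flag that naive length-prefixing would leak a data-dependent $\log$ term. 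The paper instead lifts everything to the level of model functions: Lemma~\ref{lemma-two-part-f} rewrites the two-part minimum as a minimum over $f$ with an induced description length $L^\gF_M(f)=\min_{h:\,m_M(h)=f}-\log_2\alpha_M(h)$, and Lemma~\ref{lemma-two-part-f2} shows $2^{-L^\gF_M(f)}$ is a lower semicomputable semimeasure over $\gF$ and invokes the coding theorem for functions to get $K(f)\leqc L^\gF_M(f)$, so no explicit simulation program is ever constructed. The two arguments rest on the same underlying fact (the universal machine dominates any lower semicomputable semimeasure), and both implicitly need $m_{M^2}$ to be computable as a partial map; your version makes the source of the additive constant (description of $M^2$ plus coding overhead) very concrete, while the paper's function-level decomposition handles the redundancy of hypothesis spaces cleanly and lets the same two lemmas be reused verbatim for Corollary~\ref{thm:universal-two-part-min} and the Bayesian-code results. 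One small technicality common to both: if the minimum over hypotheses is not attained you should pass to near-minimizers, which changes nothing in the bound.
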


\begin{cor}\label{thm:universal-two-part-min} The minimum of any universal two-part code $M$ is equal to the following bound, denoted $C^{\text{two-part}}(Y \mid X)$, up to an additive constant:
\begin{equation}
L_M^{\bigast,\text{two-part}}(Y \mid X) \eqc C^{\text{two-part}}(Y \mid X) = \min_{f \in \gF} K(f) - \log_2~p(Y \mid X;f).
\end{equation}
\end{cor}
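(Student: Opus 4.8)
The plan is to bound $L_M^{\bigast,\text{two-part}}(Y \mid X)$ by $C^{\text{two-part}}(Y \mid X)$ from both sides, each bound up to an additive constant that depends only on the fixed codes involved and never on the data $(X,Y)$. The lower bound will come from a general statement about \emph{every} two-part code; the upper bound from exhibiting one explicit two-part code $M^{\mathrm{KC}}$, built from a universal prefix Turing machine, whose minimum codelength equals $C^{\text{two-part}}(Y\mid X)$ up to a constant, combined with the defining inequality of a universal two-part code applied to $M$.

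\emph{Lower bound (no two-part code beats $C^{\text{two-part}}$).} I would first show $C^{\text{two-part}}(Y \mid X) \leqc L_{M'}^{\bigast,\text{two-part}}(Y \mid X)$ for an arbitrary two-part code $M' = \langle \gH_{M'}, m_{M'}, \alpha_{M'}\rangle$, with constant depending only on $M'$. Fix $h \in \gH_{M'}$ with $m_{M'}(h)$ defined and set $f = m_{M'}(h) \in \gF$. Since $m_{M'}$ is a (partial) computable map, concatenating a fixed program for $m_{M'}$ with a shortest program for $h$ yields a program computing $x \mapsto f(x)$, so $K(f) \leqc K(h)$. Since $\alpha_{M'}$ is a lower semicomputable semimeasure, the coding theorem---domination of the universal lower semicomputable semimeasure $\mathbf{m}$ over $\alpha_{M'}$ up to the multiplicative constant $2^{-K(\alpha_{M'})}$, together with $-\log_2 \mathbf{m}(h) \eqc K(h)$---gives $K(h) \leqc -\log_2 \alpha_{M'}(h)$. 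Combining, $-\log_2 \alpha_{M'}(h) \geqc K(f)$, hence $L_{M'}^{\text{two-part}}(Y \mid X; h) = -\log_2 \alpha_{M'}(h) - \log_2 p(Y\mid X;f) \geqc K(f) - \log_2 p(Y\mid X;f) \geq C^{\text{two-part}}(Y\mid X)$; taking the minimum over $h$ preserves this.

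\emph{Upper bound.} I would take $M^{\mathrm{KC}} = \langle \B,\, m_U,\, \alpha_U\rangle$ with $U$ a universal prefix Turing machine, $m_U(p)$ the model function computed by $U$ on program $p$ (partial computable; every $f\in\gF$ arises this way by universality of $U$), and $\alpha_U(p) = 2^{-\ell(p)}$; prefix-freeness of $\dom m_U$ makes $\alpha_U$ a lower semicomputable semimeasure, so this is a legitimate two-part code. Grouping programs by the function they compute, $L_{M^{\mathrm{KC}}}^{\bigast,\text{two-part}}(Y\mid X) = \min_{f\in\gF}\bigl(\min_{p : m_U(p)=f}\ell(p) - \log_2 p(Y\mid X;f)\bigr) \eqc \min_{f\in\gF}\bigl(K(f) - \log_2 p(Y\mid X;f)\bigr) = C^{\text{two-part}}(Y\mid X)$, using the invariance theorem to replace $K_U$ by $K$. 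Because $M$ is a universal two-part code, instantiating its defining inequality with $M^2 = M^{\mathrm{KC}}$ gives $L_M^{\bigast,\text{two-part}}(Y\mid X) \leqc L_{M^{\mathrm{KC}}}^{\bigast,\text{two-part}}(Y\mid X) \eqc C^{\text{two-part}}(Y\mid X)$. Together with the lower bound applied to $M' = M$, this proves $L_M^{\bigast,\text{two-part}}(Y\mid X) \eqc C^{\text{two-part}}(Y\mid X)$. (As a byproduct $M^{\mathrm{KC}}$ is itself universal, matching Proposition~\ref{thm:universal-two-part-existence}.)

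\emph{Main obstacle.} The delicate step is the lower bound, specifically the inequality $K(h) \leqc -\log_2 \alpha_{M'}(h)$: this is not a naive counting bound but an appeal to the theory of lower semicomputable semimeasures and the coding theorem, and I will need to verify that the implied constant is $K(\alpha_{M'}) + O(1)$---finite, independent of $h$, and independent of $(X,Y)$---and that the argument survives $\alpha_{M'}$ being only a semimeasure (total mass possibly below $1$) and $m_{M'}$ being only partial. A minor but necessary point is to fix the reading of $K(f)$ for $f \in \gF$ as the length of a shortest program computing $x \mapsto f(x)$ (the rational-logit convention of Section~\ref{sec:problem-setting}), since that is exactly what legitimizes the composition bound $K(m_{M'}(h)) \leqc K(h)$.
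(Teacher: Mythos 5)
Your proof is correct, and its overall architecture is the one the paper uses: the paper's proof of this corollary just cites the code $U$ constructed in Proposition~\ref{thm:universal-two-part-existence} (identical to your $M^{\mathrm{KC}}$, including the grouping-by-function step, which is the paper's Lemma~\ref{lemma-two-part-f}), the identity $L^\gF_U(f) \eqc K(f)$, and then plays the universality of $M$ and of $U$ off against each other — exactly your upper bound plus your lower bound specialized to $M' = M$. The one genuinely different ingredient is how you establish the universal lower bound $C^{\text{two-part}}(Y \mid X) \leqc L^{\bigast,\text{two-part}}_{M'}(Y \mid X)$. The paper (Lemma~\ref{lemma-two-part-f2}) pushes the prior forward to the function level, shows $2^{-L^\gF_{M'}(f)} = \max_{h : m_{M'}(h)=f} \alpha_{M'}(h)$ is a lower semicomputable semimeasure over $\gF$, and applies the coding theorem to functions directly, never invoking $K(h)$ or the computability of the map $m_{M'}$. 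You instead apply the discrete coding theorem at the hypothesis level, $K(h) \leqc -\log_2 \alpha_{M'}(h)$, and then compose, $K(f) \leqc K(h)$; that composition step requires $m_{M'}$ to be uniformly computable as a map, which Definition~\ref{def:two-part-code} does not explicitly state (it only requires $\gH_{M'}$ countable, each $m_{M'}(h) \in \gF$, and $\alpha_{M'}$ a lower semicomputable semimeasure), so you are importing a mild extra assumption that the paper's route formally avoids — though your route buys something in return, since it only needs the standard coding theorem on a countable set rather than its function-level analogue and the lower semicomputability of the induced semimeasure, which the paper asserts without detail. Your flagged "main obstacle" is exactly the right place to be careful, and your handling of it (constant $K(\alpha_{M'}) + O(1)$, independent of $h$ and of $(X,Y)$) is sound; likewise your construction of $\alpha_U$ carries the same gloss about lower semicomputability of the halting-on-all-inputs prior that the paper's own Proposition~\ref{thm:universal-two-part-existence} makes, so there is no gap relative to the paper there.
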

The core requirement for a two-part code to be universal is that its underlying model class must be computationally universal. That is, for any computable model function $f$, there must be some hypothesis $h$ in the hypothesis space that computes it, i.e. where $m_M(h) = f$. Furthermore, the prior, $\alpha(h)$, must assign at least one such hypothesis a codelength, $-\log_2 \alpha(h)$, that is equivalent to the Kolmogorov complexity of the function being computed, $K(f)$, up to an additive constant (see \ref{sec:app-proof-of-universal-two-part-existence} for the formal proofs). This ensures the code can, in principle, efficiently represent any computable regularity in the data. 
Fully satisfying these conditions requires a model class with unbounded computational resources, like a Turing machine, and strictly minimizing the code is not computable, due to the halting problem. Since any real-world model, such as a Transformer, has finite resources (e.g., a finite number of layers and context window), it cannot be strictly universal. This motivates the notion of \emph{asymptotically optimal} codes, introduced next. %

\paragraph{Asymptotically optimal two-part codes} 
Neural network architectures, such as Transformers, define a family of models, with hyperparameters corresponding to finite time and space resource. 
Certain families of two-part codes can then be shown to be \emph{asymptotically optimal} in the sense that as the resource bounds increase, the minimum code length monotonically decreases to the minimum of a universal two-part code (see~\ref{sec:app-proof-of-asymptotically-optimal-two-part-codes} for the proof).

\begin{defn}[asymptotically optimal families of two-part codes]
A family of two-part codes $\{ M_R \mid R \in \gR \}$ is \emph{asymptotically optimal} with respect to a universal prefix Turing machine $T$ if:
\begin{equation}\label{eq:asymptotically-optimal-two-part-code-def}
    \forall R \in \gR,~ L^{\bigast,\text{two-part}}_{M_R}(Y \mid X) \leqc C^{\text{two-part}}_{T,R}(Y \mid X),
\end{equation}
where $C^{\text{two-part}}_{T,R}(Y \mid X) := \min_{f \in \gF} K_{T,R}(f) -\log_2~p(Y \mid X;f)$ and $K_{T,R}$ denotes Kolmogorov complexity under resource bound $R$. %
\end{defn}

\begin{prop}\label{thm:asymptotically-optimal-two-part-codes}
Given an asymptotically optimal family of two-part codes $\{ M_R \mid R \in \gR \}$:
\begin{equation}
    \lim_{R_t,R_s \rightarrow \infty} L^{\bigast,\text{two-part}}_{M_R}(Y \mid X) \eqc C^{\text{two-part}}(Y \mid X),
\end{equation}
with the bound $C^{\text{two-part}}_{T,R}(Y \mid X)$ monotonically non-increasing with increasing $R_t$ or $R_s$.
\end{prop}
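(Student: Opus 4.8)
The plan is to prove the displayed limit by a squeeze argument, using two inequalities that are already available: from below, the fact that the minimum codelength of any two-part code cannot beat that of a universal two-part code, and from above, the inequality that defines an asymptotically optimal family. I would establish the monotonicity claim first, since it also guarantees that the limit of the right-hand bound exists.

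For monotonicity, fix a model function $f \in \gF$. Increasing either $R_t$ or $R_s$ only enlarges the set of programs counted as resource-$R$-bounded, so $K_{T,R}(f)$, being the minimum program length over that set, is non-increasing in each of $R_t, R_s$; hence so is $K_{T,R}(f) - \log_2 p(Y \mid X; f)$, and a pointwise minimum over $f \in \gF$ of functions each non-increasing in $R$ is itself non-increasing. That minimum is exactly $C^{\text{two-part}}_{T,R}(Y \mid X)$, which gives the stated monotonicity. Moreover $K_{T,R}(f) \geqc K(f)$ uniformly — restricting to fewer programs cannot lower the minimum, combined with the invariance theorem — so $C^{\text{two-part}}_{T,R}(Y \mid X) \geqc C^{\text{two-part}}(Y \mid X)$; thus the monotone quantity $C^{\text{two-part}}_{T,R}(Y \mid X)$ is bounded below and its limit exists for every dataset.

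I would then show $\lim_{R_t,R_s \to \infty} C^{\text{two-part}}_{T,R}(Y \mid X) \eqc C^{\text{two-part}}(Y \mid X)$. The ``$\geqc$'' direction is the uniform lower bound just noted. For ``$\leqc$'', fix a dataset and take $f^\star \in \gF$ nearly attaining $\min_{f \in \gF} K(f) - \log_2 p(Y \mid X; f)$; since $f^\star$ is computable it has a $T$-program of length $\eqc K(f^\star)$ which, because the input sequence $X$ is finite, halts within some finite time and space, so for all sufficiently large $R$ it witnesses $K_{T,R}(f^\star) \leqc K(f^\star)$. Then $C^{\text{two-part}}_{T,R}(Y \mid X) \le K_{T,R}(f^\star) - \log_2 p(Y \mid X; f^\star) \leqc C^{\text{two-part}}(Y \mid X)$ in the limit, with every additive constant coming from the invariance theorem alone and hence independent of the dataset. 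Combining everything: the asymptotic-optimality definition gives $L^{\bigast,\text{two-part}}_{M_R}(Y \mid X) \leqc C^{\text{two-part}}_{T,R}(Y \mid X)$, while Proposition~\ref{thm:universal-two-part-existence} and Corollary~\ref{thm:universal-two-part-min} give $C^{\text{two-part}}(Y \mid X) \leqc L^{\bigast,\text{two-part}}_{M_R}(Y \mid X)$, because $M_R$ is a two-part code and a universal two-part code attains $C^{\text{two-part}}$; sending $R_t, R_s \to \infty$ in these two bounds and invoking the convergence above pins $\lim_{R_t,R_s\to\infty} L^{\bigast,\text{two-part}}_{M_R}(Y \mid X)$ to $C^{\text{two-part}}(Y \mid X)$ up to an additive constant.

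The step I expect to be hardest is the ``$\leqc$'' half of the convergence of $C^{\text{two-part}}_{T,R}$: it is precisely an interchange of the minimization over the infinite class $\gF$ with the limit over resource bounds, so one must argue carefully — using the exact definition of $K_{T,R}$ applied to the finitely many inputs in $X$ — that a near-optimal $f^\star$'s program actually lies within the resource bound once $R$ is large, and one must check that every constant entering the squeeze (from invariance, from Corollary~\ref{thm:universal-two-part-min}, and from the asymptotic-optimality hypothesis) is uniform in the dataset. A secondary subtlety is that $L^{\bigast,\text{two-part}}_{M_R}$ need not be monotone in $R$, so ``$\lim$'' in the statement should be understood as the common value the squeeze forces on the $\liminf$ and $\limsup$; relatedly, the lower bound inherited from universality must be seen not to be persistently violated in the limit on any fixed dataset, which ultimately rests on the bounded prior mass available to each code $M_R$.
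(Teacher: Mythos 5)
Your overall structure is the same as the paper's: squeeze $L^{\bigast,\text{two-part}}_{M_R}(Y\mid X)$ between the asymptotic-optimality upper bound $C^{\text{two-part}}_{T,R}(Y\mid X)$ and the universality lower bound $C^{\text{two-part}}(Y\mid X)$ (via Proposition~\ref{thm:universal-two-part-existence} and Corollary~\ref{thm:universal-two-part-min}), together with exactly the paper's monotonicity argument (enlarging $R_t$ or $R_s$ enlarges the admissible program set, so $K_{T,R}(f)$ and hence the pointwise minimum over $f$ is non-increasing). The one step you argue in detail beyond the paper is where there is a genuine gap: your witness argument for $\lim_R C^{\text{two-part}}_{T,R} \leqc C^{\text{two-part}}$ claims that a near-optimal $f^\star$ has a program of length $\eqc K(f^\star)$ which, ``because the input sequence $X$ is finite, halts within some finite time and space,'' and therefore certifies $K_{T,R}(f^\star) \leqc K(f^\star)$ for large $R$. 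This misreads the definition of resource-bounded complexity of a \emph{function}: by \eqref{eq:bounded-kolmogorov} and the definition of $\gZ_{T,R}$ in Appendix~\ref{sec:reference-machine-and-notation}, the program must respect the bound $(R_t,R_s)$ for \emph{every} $x \in \gX$, not merely for the finitely many inputs appearing in the dataset. Since $\gX$ is countable and in general infinite, a program computing $f^\star$ (think of parity on all of $\{0,1\}^*$) need not admit any uniform time/space bound, so finiteness of $X$ buys nothing, and as written your step is compatible with $K_{T,R}(f^\star)=\infty$ for every finite $R$. The paper does not route through the dataset at all here: it invokes the background fact that $K_{T,R}(f)$ is monotone in $R$ and converges to $K_T(f) \eqc K(f)$ as $R_t,R_s \to \infty$, and passes that through the minimum over $f$. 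Note also that the obvious repair of your witness --- replacing $f^\star$ by a surrogate that agrees with it only on inputs up to the maximum length occurring in $X$ --- is not admissible either, since it injects a dataset-dependent term into the additive constant, which the $\leqc$ notation forbids.

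On your two closing caveats: the first (the interchange of $\min_{f\in\gF}$ with the limit in $R$) is indeed the crux, but your proposed resolution ``using the exact definition of $K_{T,R}$ applied to the finitely many inputs in $X$'' is precisely the misreading above. The second caveat --- that the lower bound's constant must be controlled in the limit --- is a fair observation; the constant from Lemma~\ref{lemma-two-part-f2} depends on the code and hence on $R$, and ``bounded prior mass'' alone does not make it uniform over the family. The paper's proof simply states that the universal lower bound holds for every $R$ ``and must also hold in the limit,'' so on that point you and the paper are in the same position; the dataset-finiteness witness step, however, is an error specific to your write-up, and it is the step your upper-bound half actually leans on.
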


We demonstrate the existence of such asymptotically-optimal codes for Transformers next.

\begin{figure}[t!]
    \centering
    \includegraphics[width=0.75\columnwidth,keepaspectratio]{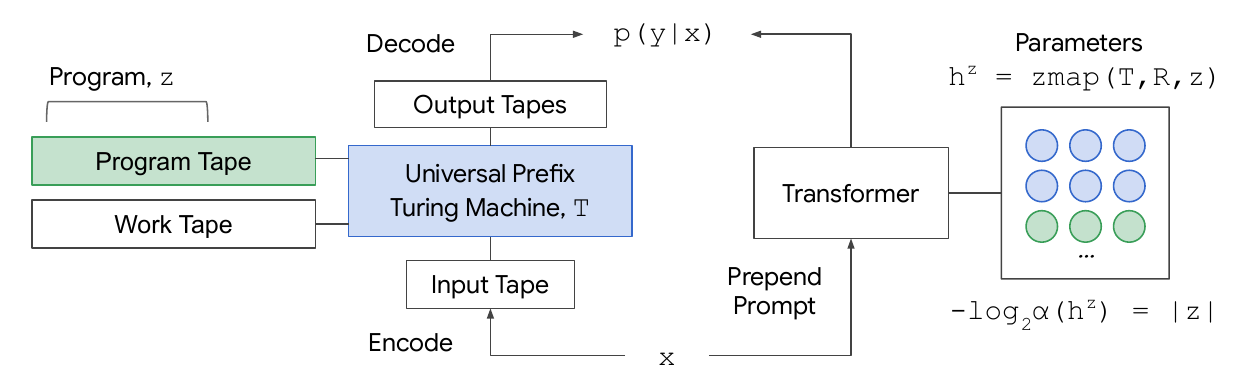}
    \caption{\textbf{Constructing asymptotically optimal codes for Transformers.} We construct a function $\zmap$ that establishes that a Transformer (right) can effectively simulate a prefix Turing machine $T$ with any prefix $z$ encoded on a program tape (left), and therefore can represent any computable model function (as defined in section~\ref{sec:problem-setting}) within some arbitrary time and space resource bound $R$. With this mapping between model functions and Transformer parameters established, we can select a prior that assigns probability to sets of parameters based on the algorithmic complexity of the function they compute, thus forming an asymptotically optimal code (Section~\ref{sec:two-part-codes-for-transformers}).}
    \label{fig:transformer_turing_emulator}
\end{figure}

\subsection{Two-part codes for Transformers}
\label{sec:two-part-codes-for-transformers}

The Transformer architecture~\citep{vaswani2017attention} defines a family of models with various hyperparameters determining the time (e.g., number of layers) and space (e.g., context window size) resources of the model. We focus our theoretical analysis on Transformer \emph{encoders}
applied to sequence classification, a commonly studied setting
\cite[e.g.,][]{devlin2019bert}.

\begin{thm}\label{thm:transformer-two-part-code}
There exists an \emph{asymptotically optimal} family of two-part codes for Transformer encoders.
\end{thm}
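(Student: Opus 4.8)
The plan is to construct an explicit family $\{M_R \mid R \in \gR\}$ of two-part codes over Transformer-encoder parameters and to verify the defining inequality of asymptotic optimality, $L^{\bigast,\text{two-part}}_{M_R}(Y \mid X) \leqc C^{\text{two-part}}_{T,R}(Y \mid X)$, directly; Proposition~\ref{thm:asymptotically-optimal-two-part-codes} then upgrades this to the limiting statement. For each $R = (R_t, R_s)$, let $\gH_{M_R}$ be the (countable, finite-precision) parameter space of a Transformer encoder whose depth, width, context length, and arithmetic precision are fixed computable increasing functions of $(R_t, R_s)$; let $m_{M_R} : \gH_{M_R} \to \gF$ be the usual map from parameters to the logit function, composed with $\sigma$ as in \eqref{eq:softmax}; and let $\alpha_{M_R}$ be a length-based prior described below. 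The technical core is a simulation claim: there is a function $\zmap_R$ sending any program (prefix) $z$ for the fixed universal prefix Turing machine $T$ to parameters $\zmap_R(z) \in \gH_{M_R}$ such that, whenever $T$ with program $z$ halts on input $x$ within the resource bound $R$, the encoder $\zmap_R(z)$ outputs exactly the rational logit vector that $T$ writes; hence $m_{M_R}(\zmap_R(z))$ coincides with the model function $f_z \in \gF$ on every such input. Granting this, the theorem reduces to choosing $\alpha_{M_R}$ so that short programs receive short codelengths.

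For the prior, assign each hypothesis $h \in \gH_{M_R}$ the mass $\alpha_{M_R}(h) \propto \sum \{\, 2^{-|z|} : \zmap_R(z) = h,\ T \text{ with program } z \text{ halts within } R \,\}$, distributing any leftover mass over the remaining hypotheses so that $\alpha_{M_R}$ stays a lower-semicomputable semimeasure; this is legitimate because the $R$-bounded halting programs of $T$ form a prefix-free set, so by Kraft's inequality their weights sum to at most one. Consequently $-\log_2 \alpha_{M_R}(\zmap_R(z)) \leqc |z|$ with a constant independent of $z$ and $R$. Now fix a dataset $(X, Y)$, let $f^\star \in \gF$ attain $C^{\text{two-part}}_{T,R}(Y \mid X) = \min_{f \in \gF} K_{T,R}(f) - \log_2 p(Y \mid X; f)$, and let $z^\star$ be a shortest $R$-bounded program computing $f^\star$, so $|z^\star| = K_{T,R}(f^\star)$ up to an additive constant (the fixed ``read $x$, simulate, print logits'' wrapper). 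Then $h^\star := \zmap_R(z^\star)$ gives $L^{\text{two-part}}_{M_R}(Y \mid X; h^\star) = -\log_2 \alpha_{M_R}(h^\star) - \log_2 p(Y \mid X; f^\star) \leqc K_{T,R}(f^\star) - \log_2 p(Y \mid X; f^\star) = C^{\text{two-part}}_{T,R}(Y \mid X)$, with a constant uniform in $X, Y, R$; minimizing the left side over $h \in \gH_{M_R}$ yields exactly the required inequality.

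It remains to sketch $\zmap_R$. The construction is the standard ``one layer per computation step'' emulation, adapted to an encoder performing sequence classification: positions in the encoder's context hold the cells of $T$'s work tapes together with the program tape carrying $z$ (prepended to, or read alongside, the classifier input $x$); a fixed number of attention heads route information between the current head position, its neighbours, and the state/program registers, while the position-wise MLP implements $T$'s finite transition table. Finite-precision arithmetic suffices because $T$ manipulates a fixed alphabet and a bounded number of registers, and it writes the final logits in a canonical numerator/denominator encoding, so all intermediate quantities fit in the chosen precision. Taking depth $\Theta(R_t)$ and context length and width $\Theta(R_s)$ (plus fixed overhead for heads and alphabet), the encoder faithfully reproduces the first $R_t$ steps of $T$ on $(z, x)$ as long as $T$ stays within $R_s$ registers -- precisely the regime over which $K_{T,R}$ quantifies. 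Inputs too long for the context window simply lie outside the domain on which $m_{M_R}(\zmap_R(z))$ must agree with $f_z$, which is consistent with model functions being allowed to be partial; as $R_t, R_s \to \infty$ every computable model function is eventually covered.

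The main obstacle is proving this simulation lemma with fully explicit, quantitative resource accounting. Most existing computational-universality results for Transformers use autoregressive \emph{decoders} that emit intermediate scratch tokens, so the number of sequential steps is not bounded by a fixed depth; here each $M_R$ must be an ordinary finite encoder, so the step count has to be absorbed into depth while the dependence of all hyperparameters on $(R_t, R_s)$ remains a genuine computable function. The numerics are equally delicate: the construction must emit the \emph{exact} rational logits so that, after the softmax in \eqref{eq:softmax}, the induced conditional distribution equals $p(\cdot \mid x; f_z)$ rather than merely approximating it, and the finite-precision layers must not overflow on any $R$-bounded run. Finally, one must check that the additive constant relating $-\log_2 \alpha_{M_R}(\zmap_R(z))$ to $K_{T,R}(f_z)$ -- arising from the fixed simulation wrapper and the self-delimiting encoding of $z$ -- is uniform in both $R$ and the data, since that uniformity is exactly what the $\leqc$ in the definition of an asymptotically optimal family requires.
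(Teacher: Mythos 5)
Your overall strategy is the same as the paper's: build a $\zmap$-style compiler from programs of a fixed universal prefix Turing machine $T$ to Transformer-encoder weights (program carried in the prepended prompt/input positions, attention plus the position-wise MLP acting as the interpreter, depth tracking $R_t$ and simulated tape size tracking $R_s$), put a $2^{-|z|}$-type prior on the image of that map, and bound $L^{\bigast,\text{two-part}}_{M_R}$ by evaluating the code at the hypothesis compiled from a shortest $R$-bounded program for the minimizing $f^\star$. The paper's proof does exactly this, except it realizes the simulation explicitly via the ALTA compiler (layerwise weight sharing, exact rational logits decoded by an attention head, program bits as prompt embeddings) and gets the cleaner statement $L^{\gF}_{U_R}(f)=K_{T,R}(f)$, hence equality with $C^{\text{two-part}}_{T,R}$; your summed prior and the resulting $\leqc|z|$ bound are fine, and note that $|z^\star|=K_{T,R}(f^\star)$ holds exactly by definition, with no wrapper constant, since $K_{T,R}$ is defined with respect to the same machine $T$.

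There is, however, one genuine gap: your handling of long inputs. You take the encoder's context length to be $\Theta(R_s)$ and assert that inputs too long for the context ``lie outside the domain on which $m_{M_R}(\zmap_R(z))$ must agree with $f_z$.'' That is inconsistent with what the definition requires. $K_{T,R}(f)$ is finite only for programs that halt with valid output for \emph{every} $x\in\gX$ within the bound (e.g.\ a near-constant function has small $K_{T,R}$ yet is defined on arbitrarily long inputs), and the inequality $L^{\bigast,\text{two-part}}_{M_R}(Y\mid X)\leqc C^{\text{two-part}}_{T,R}(Y\mid X)$ must hold for \emph{every} dataset, including ones whose inputs exceed any fixed context length. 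If $h^\star$ fails to compute $f^\star$ on such inputs, the codelength at $h^\star$ no longer decomposes as $-\log_2\alpha_{M_R}(h^\star)-\log_2 p(Y\mid X;f^\star)$, and no uniform additive constant rescues the bound. The fix is the one the paper uses: only the prompt positions (encoding the program tape and the initially blank work/output tapes) are capped at $R_s$; the actual input tokens are appended after the prompt and the encoder processes sequences of arbitrary length, so the compiled hypothesis realizes $f^z_T$ on all of $\gX$. With that correction (and with the simulation lemma, which you only sketch, actually carried out --- this is the technical core the paper discharges via the ALTA construction), your argument goes through.
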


Figure~\ref{fig:transformer_turing_emulator} captures the main idea.
Given a universal prefix Turing machine $T$ and resource bound $R$, let $\gZ_{T,R}$ denote the set of programs such that $T$ halts within $R_t$ steps and requires only up to $R_s$ registers on any tape, for any input $x \in \gX$ encoded on the input tape. Let $f^z_T: \gX \rightarrow \gL$ be the \emph{model function} computed by $T$ with prefix $z$ on the program tape. For a given $R$, we construct a two-part code $M_R$ where the hypothesis space $\gH_{M_R}$ and mapping function $m_{M_R}$ are defined by a Transformer encoder, with $\gO(R_t)$ layers and a context window size of $\gO(R_s)$. 
We use ALTA~\citep{shaw2024alta}, a compiler from symbolic programs to Transformer weights, to construct a function $\zmap$ such that $\forall z \in \gZ_{T,R},~m_{M_R}(h^z) = f^z_T$ where $h^z = \zmap(T,R,z)$. While future work could consider alternative ways of constructing such a mapping (see \ref{sec:alternative-families}), our particular construction relies on prepending a sequence of $R_s$ ``prompt tokens'' to the model input, with learnable embeddings. These prompt tokens represent the program $z$, while the attention and MLP weights are configured to act as an interpreter, emulating the Turing machine $T$. We can then trivially construct a prior $\alpha_{M_R}(h)$ that such that $\forall z \in \gZ_{T,R},~-\log_2 \alpha_{M_R}(h^z) = |z|$, with $\alpha_{M_R}(h) = 0$ for hypotheses outside the range of $\zmap$. This construction can be shown to satisfy the conditions of an asymptotically optimal family of codes with respect to $T$. Further, any prior satisfying the more relaxed condition $\forall z \in \gZ_{T,R},~-\log_2 \alpha_{M_R}(h^z) < |z| + c_T$ where $c_T$ is a constant not depending on $R$ or $z$ is sufficient. (See \ref{sec:app-proof-transformer-two-part-code} for the formal proof.) 

\subsection{Limitations \& Practical considerations}
\label{sec:practical-considerations}

While asymptotically optimal codes offers appealing compression bounds, in this section we discuss why these bounds are insufficient to guarantee real-world performance and relevant practical considerations.

First, the theoretical guarantees  hold in the limit as model resources increase, and hold up to an additive constant that only becomes negligible as dataset complexity increases. While the community has been training increasingly large models on increasingly large datasets, any real-world models and datasets are, of course, finite. For a finite-resourced Transformer, satisfying the conditions of an asymptotically optimal code only formally constrains the description length of a subset of the computable functions that could be represented. Strictly emulating a Turing machine can be inefficient, failing to leverage a Transformer's capacity for parallel and numerical computation. Therefore, intuitively, we should consider sufficiently \emph{general and flexible priors} that are not overly specialized to a specific Turing machine. For a sufficiently flexible prior, our method used to prove asymptotic optimally -- showing that a Transformer can emulate a universal prefix Turing machine -- serves primarily to establish a worst-case upper bound on the minimum achievable codelength. 

Second, the formal guarantees apply to the \emph{minimum} of the codelength objective. Practically, we need to consider objectives that are computationally tractable and can be efficiently optimized.

Therefore, we should aim to identify codes that are asymptotically optimal (or approximately optimal) but also satisfy these additional informal but intuitive criteria, which can ultimately be assessed empirically.
%\begin{enumerate}[noitemsep, nolistsep]
%    \item Are based on a sufficiently general and flexible prior.
%    \item Can be optimized with standard gradient-based methods.
% \end{enumerate}
As one potential path toward meeting this goal, we consider \emph{variational codes}, which we study theoretically in Section \ref{sec:variational-codes} and empirically in \ref{sec:variational-code-experiments}. We also analyze the asymptotic bounds for various alternative two-part codes in \ref{sec:asymptotic-bounds}. %

\vspace{-0.12cm}
\section{Variational Codes}
\label{sec:variational-codes}
\vspace{-0.1cm}

\begin{figure}[t!]
    \centering
    \scalebox{0.9}{
    \begin{tikzpicture}[
    axis/.style={thick, ->, >=stealth},
    code_label/.style={anchor=west, font=\small},
    formula_label/.style={below=0.2cm, text width=3.5cm, text centered, font=\small},
    blue_code/.style={blue, thick},
    green_code/.style={green!60!black, thick},
    v_helper_line/.style={thin, dotted, gray},
    h_helper_line/.style={thin, dotted, gray}
]

    \def\xKYX{0}       %
    \def\xUBC{2.5}     %
    \def\xUTPC{5.0}    %
    \def\xInfinity{7.5} %
    \def\xLabelCol{-5} %
    \def\lenVar{1.5}   %

    \def\yPrefix{1.7}      %
    \def\yTwoPart{1.3}     %
    \def\yUTPC{0.9}        %
    \def\yVar{0.5}         %
    \def\yQuasi{0.1}       %

    \coordinate (kyx) at (\xKYX,0);
    \coordinate (ubc) at (\xUBC,0);
    \coordinate (utpc) at (\xUTPC,0);
    \coordinate (infinity) at (\xInfinity,0);

    \draw[v_helper_line] (ubc) -- (ubc |- 0, \yPrefix);
    \draw[v_helper_line] (kyx) -- (kyx |- 0, \yPrefix);
    \draw[v_helper_line] (utpc) -- (utpc |- 0, \yPrefix);
    
    \node[formula_label] at (kyx) {$K(Y|X)$};
    \node[formula_label] at (ubc) 
    {$C^{\text{bayes}}(Y|X)$};
    \node[formula_label] at (utpc)
        {$C^{\text{two-part}}(Y|X)$};

    \node[code_label] (prefix_label) at (\xLabelCol, \yPrefix) {Computable prefix codes};
    \draw[blue_code, |->] (kyx |- 0, \yPrefix) -- (infinity |- 0, \yPrefix);
    \draw[h_helper_line] (prefix_label.east) -- (kyx |- 0, \yPrefix);

    \node[code_label] (twopart_label) at (\xLabelCol, \yTwoPart) {Two-part codes};
    \draw[blue_code, |->] (utpc |- 0, \yTwoPart) -- (infinity |- 0, \yTwoPart);
    \draw[h_helper_line] (twopart_label.east) -- (utpc |- 0, \yTwoPart);

    \node[code_label] (utpc_label) at (\xLabelCol, \yUTPC) {Universal two-part codes};
    \fill[green!60!black] (utpc |- 0, \yUTPC) circle (2.5pt);
    \draw[h_helper_line] (utpc_label.east) -- (utpc |- 0, \yUTPC);

    \node[code_label] (var_label) at (\xLabelCol, \yVar) {Variational codes};
    \draw[blue_code, |->] (ubc |- 0, \yVar) -- (infinity |- 0, \yVar);
    \draw[h_helper_line] (var_label.east) -- (utpc |- 0, \yVar);

    \node[code_label] (quasi_label) at (\xLabelCol, \yQuasi) {Quasi-universal variational codes};
    \draw[green_code, |-|] (ubc |- 0, \yQuasi) -- (utpc |- 0, \yQuasi);
    \draw[h_helper_line] (quasi_label.east) -- (ubc |- 0, \yQuasi);

\end{tikzpicture}
    } %
    \caption{\textbf{Upper and lower bounds on minimum codelengths.} The figure shows the minimum number of bits required to transmit labels $Y$ given inputs $X$, for different classes of codes. The bounds hold for any dataset ($X,Y$) up to an additive constant that does not depend on the dataset.}
    \label{fig:codelength_comparisons}
\end{figure}

An alternative approach for Alice to transmit the data labels to Bob is to use a \emph{variational code}, which considers a \emph{distribution} over hypotheses, rather than selecting a single best hypothesis. We start with some definitions and formal results, and then work towards a practical implementation.

\begin{defn}[variational code]
A \emph{variational code} $M$ consists of a hypothesis space $\gH_M$, a mapping $m_M : \gH_M \rightarrow \gF$ from hypotheses to model functions, and prior $\alpha_M(h)$ over $\gH_M$, as specified in definition~\ref{def:two-part-code}. Additionally, a variational code specifies a posterior hypothesis space, $\Phi_M$, and distribution over hypotheses, $\beta_M(h;\phi)$, parameterized by $\phi \in \Phi_M$.
\end{defn}

The conditional distribution over $Y$ given $X$ is therefore specified by the posterior parameters $\phi \in \Phi_M$, and defined by marginalizing over hypotheses:
\begin{equation}
p_M(Y \mid X; \phi) = \mathbb{E}_{h \sim  \beta_M(\cdot;\phi)} ~p(Y \mid X;m_M(h)).
\end{equation}
The codelength for a variational code $M$ is then defined as:
\begin{align}
L_M^{\text{var}}(Y \mid X,\phi) &= \mathbb{E}_{h \sim \beta_M(\cdot;\phi)} \left[
-\log_2~\alpha_M(h) + \log_2~\beta_M(h;\phi) - \log_2~p(Y \mid X; m_M(h))
\right] \label{eq:variational-code} \\
 &= \text{KL}\left[ \beta_M(\cdot;\phi) \parallel \alpha_M(\cdot) \right] - \log_2~ p_M(Y \mid X; \phi), \label{eq:variational-code-kl} 
\end{align}
which can be interpreted as the cost of transmitting the labels given the ``bits back'' argument of \citet{hinton1993keeping}.
The intuition is that parameters with higher uncertainty can be transmitted with lower cost. Equation~\ref{eq:variational-code-kl} mirrors \eqref{eq:two-part-code}, with the KL term capturing the model cost.
We denote the minimum of a variational code $M$ as:
\begin{equation}
L_M^{\bigast,\text{var}}(Y \mid X) = \min_{\phi \in \Phi_M}~ L_M^{\text{var}}(Y \mid X,\phi).
\end{equation}
A variational code can be seen as approximating an idealized but intractable Bayesian code (\ref{sec:bayesian-lower-bound}). As with two-part codes, variational codes require choosing a seemingly arbitrary prior.
We address this by defining \emph{quasi-universal variational codes}, which are analogous to universal two-part codes.

\begin{defn}[quasi-universal variational code]
A variational code $M$ is a \emph{quasi-universal variational code} if and only if:
\begin{equation}
L^{\bigast,\text{var}}_{M}(Y \mid X) \leqc C^{\text{two-part}}(Y \mid X).
\end{equation}
\end{defn}

We use the term \emph{quasi-universal} because such codes are not necessarily strictly universal among the set of variational codes. However, their minima are tightly bound per the following theorem.

\begin{prop}\label{thm:codelength-relations}
For any quasi-universal variational code $M$,
\begin{equation}
    K(Y \mid X) \leqc C^{\text{bayes}}(Y \mid X) \leqc L^{\bigast,\text{var}}_{M}(Y \mid X) \leqc C^{\text{two-part}}(Y \mid X),
\end{equation}
where $C^{\text{bayes}}(Y \mid X) := -\log_2 \sum_{f \in \gF} 2^{-K(f)} p(Y \mid X;f)$.
\end{prop}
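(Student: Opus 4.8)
The plan is to establish the chain of three inequalities in Proposition~\ref{thm:codelength-relations} essentially independently, since each captures a different comparison. The rightmost inequality, $C^{\text{bayes}}(Y\mid X) \leqc L^{\bigast,\text{var}}_M(Y\mid X) \leqc C^{\text{two-part}}(Y\mid X)$ involving the quasi-universal code $M$, is partly immediate: the upper bound $L^{\bigast,\text{var}}_M(Y\mid X) \leqc C^{\text{two-part}}(Y\mid X)$ is \emph{exactly} the defining property of a quasi-universal variational code, so nothing is to be proven there. For $C^{\text{bayes}}(Y\mid X) \leqc L^{\bigast,\text{var}}_M(Y\mid X)$, I would argue that $C^{\text{bayes}}$ is a lower bound for \emph{any} variational code (this is presumably the content of the ``Bayesian lower bound'' appendix referenced as \ref{sec:bayesian-lower-bound}); concretely, for any posterior $\beta_M(\cdot;\phi)$ one shows via the standard variational / Jensen argument that
\begin{equation}
L_M^{\text{var}}(Y\mid X,\phi) = \text{KL}\!\left[\beta_M(\cdot;\phi)\,\|\,\alpha_M(\cdot)\right] - \log_2 p_M(Y\mid X;\phi) \;\geq\; -\log_2 \sum_{h} \alpha_M(h)\, p(Y\mid X; m_M(h)),
\end{equation}
because the right-hand side is the minimum of the left over all choices of posterior distribution (the ELBO is tight at the true Bayesian posterior). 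Then one relates $-\log_2 \sum_h \alpha_M(h) p(Y\mid X;m_M(h))$ to $C^{\text{bayes}}(Y\mid X) = -\log_2 \sum_{f\in\gF} 2^{-K(f)} p(Y\mid X;f)$ by noting that the ``true'' Bayesian mixture with the universal prior $2^{-K(f)}$ over model functions dominates, up to a constant, the mixture induced by pushing $\alpha_M$ forward through $m_M$ — this uses that $2^{-K(\cdot)}$ is (up to a constant factor) the largest lower-semicomputable semimeasure, i.e. the universality of the Solomonoff prior over $\gF$, together with the assumption (footnote in Definition~\ref{def:two-part-code}) that $\alpha_M$ is lower-semicomputable.

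For the leftmost inequality $K(Y\mid X) \leqc C^{\text{bayes}}(Y\mid X)$, the plan is the classical coding-theorem argument: $\sum_{f\in\gF} 2^{-K(f)} p(Y\mid X;f)$ is a lower-semicomputable semimeasure in $(Y\mid X)$ (it is a mixture of the computable conditional distributions $p(\cdot\mid X; f)$ weighted by the lower-semicomputable weights $2^{-K(f)}$, and the sum is lower-semicomputable as an increasing limit of finite partial sums), hence it is dominated up to a multiplicative constant by the universal conditional semimeasure, and therefore $-\log_2$ of it is an upper bound, up to an additive constant, for the conditional prefix complexity $K(Y\mid X)$ — this is exactly the conditional form of the coding theorem $K(s) \leqc -\log_2 \mathbf{m}(s)$. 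I would cite the relevant statement from the background appendix (\ref{sec:app-background}) rather than reprove the coding theorem.

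The main obstacle I anticipate is the bookkeeping around the two different ambient spaces — semimeasures over $\gF$ (model functions / logit vectors) versus semimeasures over sequences $(Y\mid X)$ — and making sure each claimed domination is with respect to a genuinely lower-semicomputable object so that the universality/coding theorems apply. In particular, checking that $p(Y\mid X;f) = \prod \sigma(f(x))_y$ is computable (uniformly in $f$ and $(X,Y)$) given that $f$ has rational outputs, and that summing over the countable set $\gF$ preserves lower semicomputability, requires a little care about enumerating $\gF$ and about the softmax being exactly computable on rationals — this is where I'd expect to spend the most effort, though it is conceptually routine. Everything else is either definitional (the quasi-universal upper bound) or a direct appeal to the invariance theorem and the coding theorem as recalled in Section~\ref{sec:background} and Appendix~\ref{sec:app-background}.
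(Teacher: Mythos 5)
Your proposal matches the paper's proof in both structure and substance: the rightmost inequality is taken as definitional, the middle one is established by combining the ELBO/Bayesian lower bound on any variational codelength with a coding-theorem domination argument showing that the $2^{-K(f)}$ mixture over model functions dominates the pushforward of any lower-semicomputable prior $\alpha_M$ through $m_M$ (the paper packages this as its universal-Bayesian-code lemma), and the leftmost follows from the conditional coding theorem. The only difference is organizational — the paper factors the domination step into a standalone lemma and corollary rather than inlining it — so your argument is essentially the same proof.
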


The result is visualized in Figure~\ref{fig:codelength_comparisons}. The proof involves deriving $C^{\text{bayes}}(Y \mid X)$ as the minimum of a universal Bayesian code (\ref{sec:proof-of-thm-codelength-relations}). Under reasonable conditions, the differences between each of these quantities are expected to be small. See \ref{sec:relations-between-codes} for formal analysis and discussion.
The definition of asymptotically optimal families of two-part codes directly extends to variational codes (\ref{sec:asymptotically-optimal-var-codes}). %
Additionally, variational codes can be seen as a generalization of two-part codes (\ref{sec:var-two-part-equivalence}), and therefore our previous result constructing asymptotically optimal two-part codes for Transformers directly extends to prove the following proposition (\ref{sec:proof-of-exists-asymptotically-optimal-code}).

\begin{prop}\label{thm:exists-asymptotically-optimal-code}
There exists an asymptotically optimal family of variational codes for Transformer encoders. 
\end{prop}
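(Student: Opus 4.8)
The plan is to obtain the desired family by \emph{promoting} the asymptotically optimal family of two-part codes for Transformer encoders guaranteed by Theorem~\ref{thm:transformer-two-part-code} to a family of variational codes, using the fact that variational codes subsume two-part codes (Appendix~\ref{sec:var-two-part-equivalence}). Concretely, for each resource bound $R$ let $M_R = \langle \gH_{M_R}, m_{M_R}, \alpha_{M_R} \rangle$ be the two-part code from Theorem~\ref{thm:transformer-two-part-code}, and define a variational code $M_R^{\text{var}}$ with the same hypothesis space, mapping function, and prior, equipped with a posterior family $\Phi_{M_R^{\text{var}}}$ that contains, for each $h \in \gH_{M_R}$, a parameter $\phi_h$ whose posterior $\beta_{M_R^{\text{var}}}(\cdot;\phi_h)$ is the point mass at $h$. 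Since $\gH_{M_R}$ is countable these are legitimate distributions over hypotheses, so $\{ M_R^{\text{var}} \mid R \in \gR \}$ is a well-defined family of variational codes whose hypothesis-side components are realized by Transformer encoders of the same size as in Theorem~\ref{thm:transformer-two-part-code}.

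The first step is to verify the reduction at the level of codelengths. Substituting $\beta_{M_R^{\text{var}}}(\cdot;\phi_h)$ equal to the point mass at $h$ into equation~\ref{eq:variational-code} collapses the ``bits-back'' expectation: the model-cost term becomes $-\log_2 \alpha_{M_R}(h)$, which is finite because every hypothesis in the range of $\zmap$ carries strictly positive prior mass, and the expected negative log-likelihood becomes $-\log_2 p(Y \mid X; m_{M_R}(h))$. Hence $L^{\text{var}}_{M_R^{\text{var}}}(Y \mid X, \phi_h) = L^{\text{two-part}}_{M_R}(Y \mid X; h)$, matching equation~\ref{eq:two-part-code}. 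Taking $h$ to be a minimizer of the two-part objective in equation~\ref{eq:two-part-min} then gives $L^{\bigast,\text{var}}_{M_R^{\text{var}}}(Y \mid X) \leq L^{\bigast,\text{two-part}}_{M_R}(Y \mid X)$ for every dataset $X,Y$ and every $R$.

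Combining this inequality with the defining property of the family $\{ M_R \}$, namely $L^{\bigast,\text{two-part}}_{M_R}(Y \mid X) \leqc C^{\text{two-part}}_{T,R}(Y \mid X)$, yields $L^{\bigast,\text{var}}_{M_R^{\text{var}}}(Y \mid X) \leqc C^{\text{two-part}}_{T,R}(Y \mid X)$ for all $R$, which is exactly the defining property of an asymptotically optimal family of variational codes under the variational analogue of the definition (Appendix~\ref{sec:asymptotically-optimal-var-codes}). The accompanying limiting statement follows as in Proposition~\ref{thm:asymptotically-optimal-two-part-codes}: $C^{\text{two-part}}_{T,R}$ is monotonically non-increasing in $R_t$ and $R_s$ and converges to $C^{\text{two-part}}(Y \mid X)$ up to an additive constant, while Proposition~\ref{thm:codelength-relations} (or merely the lower bound $K(Y \mid X) \leqc L^{\bigast,\text{var}}_{M_R^{\text{var}}}$) pins the limit from below.

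I do not expect a genuine obstacle here, since all of the substance is inherited from Theorem~\ref{thm:transformer-two-part-code}; the only point requiring care is the reduction itself---checking that point-mass posteriors are admissible in the variational-code formalism, that the KL term remains finite at such posteriors, and that the variational notion of an ``asymptotically optimal family'' is stated so the two-part bound transfers verbatim rather than only after an additional argument. Once these definitional details are pinned down, the remainder is bookkeeping.
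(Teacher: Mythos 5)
Your proposal is correct and follows essentially the same route as the paper: the paper's proof invokes Lemma~\ref{thm:equiv-two-part-var}, whose construction is exactly your point-mass-posterior reduction (with $\beta(h;\phi)=\llbracket\phi=h\rrbracket$), and then cites the asymptotically optimal two-part family from Theorem~\ref{thm:transformer-two-part-code}. Your added care about finiteness of the KL term and the matching bound $C^{\text{two-part}}_{T,R}$ in the variational definition is exactly the right bookkeeping, and is implicitly handled by the paper's equivalence lemma.
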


Finally, we note that we can also generalize variational codes by using an \emph{adaptive prior}~\citep{hinton1993keeping}, where the prior is itself parameterized. We term such codes \emph{adaptive variational codes}, defined formally in~\ref{sec:adaptive-prior}. Adaptive priors help generalize our previously constructed codes to form a more flexible implementation, discussed next.

\subsection{Differentiable Codes via Gaussian Mixture Models}
\label{sec:transformer-practical-code}

Per the discussion in~\ref{sec:practical-considerations}, we aim to identify families of asymptotically optimal codes for Transformers that are based on flexible and general priors -- not explicitly constructed around a specific Turing machine emulation -- and are amenable to standard gradient-based optimization.
As one path towards this goal, we construct a family of adaptive variational codes where the adaptive prior and posterior distributions are both parameterized by sets of independent Gaussian mixture models (GMMs; see~\ref{sec:gaussian-parameterization}). Intuitively, a GMM prior shared across a group of weights drives compression by encouraging a low-entropy clustering of weight values around the component means, akin to soft quantization~\citep{ullrich2017soft,achterhold2018variational,han2016deep}. Such a GMM-based construction is appealing for its generality, and because we can estimate the variational codelength objective using Monte-Carlo sampling and apply standard optimizers via the reparameterization trick~\citep{kingma2014auto,jang2017categorical,maddison2017concrete}. The proof of Theorem~\ref{thm:gmm-code-is-optimal} below demonstrates that this construction is also asymptotically optimal, thus establishing a worst-case asymptotic upper bound on the minimal codelength (i.e. compression) achievable with such codes. The proof also serves to illustrate a potential path towards identifying other instances of practical and asymptotically optimal codes.

\begin{thm}\label{thm:gmm-code-is-optimal} There exists an asymptotically optimal family of adaptive variational codes for Transformer encoders where the adaptive prior and posterior distributions are both specified by products of independent GMMs.
\end{thm}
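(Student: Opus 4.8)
The plan is to reduce Theorem~\ref{thm:gmm-code-is-optimal} to the already-established existence of an asymptotically optimal family of two-part codes for Transformer encoders (Theorem~\ref{thm:transformer-two-part-code}), using the fact that variational codes generalize two-part codes and that GMM priors/posteriors are flexible enough to emulate the rigid Turing-machine-emulation prior of that construction up to an additive constant. Concretely, for each resource bound $R$ I would take the Transformer encoder architecture of Section~\ref{sec:two-part-codes-for-transformers} (with $\gO(R_t)$ layers, context window $\gO(R_s)$, and $R_s$ learnable prompt-token embeddings), and construct an adaptive variational code $M_R$ on the same hypothesis space $\gH_{M_R}$ whose adaptive prior $\alpha_{M_R}(\cdot;\psi)$ and posterior $\beta_{M_R}(\cdot;\phi)$ are each products of independent one-dimensional GMMs over the coordinates of the parameter vector.

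The key steps, in order: (i) Recall from \ref{sec:var-two-part-equivalence} that a two-part code is recovered as the degenerate variational code whose posterior $\beta_M(\cdot;\phi)$ is a point mass; hence it suffices to show the GMM-parameterized family can match the codelength of the two-part construction $C^{\text{two-part}}_{T,R}(Y\mid X)$ up to an additive constant. (ii) Given a target hypothesis $h^\star=\zmap(z)$ for $z\in\gZ_{T,R}$, choose the posterior to be a product of narrow GMMs — e.g., each coordinate's mixture dominated by a single component with tiny variance $\varepsilon$ centered at the corresponding coordinate of $h^\star$ — so that $p_{M_R}(Y\mid X;\phi)\to p(Y\mid X;m_{M_R}(h^\star))$ as $\varepsilon\to 0$ (the discreteness of $\gY$ and the fact that only finitely many logits matter on $X$ make this exact once $\varepsilon$ is small enough, using that a small weight perturbation can't flip an argmax/softmax value below machine precision). (iii) Choose the adaptive prior GMM parameters $\psi$ to encode essentially the same information as $z$: since the prior is itself parameterized and transmitted, set the prior so that $\mathrm{KL}[\beta_{M_R}(\cdot;\phi)\,\|\,\alpha_{M_R}(\cdot;\psi)]$ plus the cost of describing $\psi$ is $\leqc |z| \eqc K(f^z_T)$ — e.g., the prior places its component means exactly at the posterior means so the KL reduces to a bounded variance/entropy term, and the description of $\psi$ is just a re-encoding of $z$ (bounded overhead, since choosing the number and placement of mixture components from $z$ is a computable operation with $\gO(1)$-per-parameter cost beyond $|z|$). (iv) Combine: $L^{\bigast,\text{var}}_{M_R}(Y\mid X) \leqc |z| - \log_2 p(Y\mid X;f^z_T) + \gO(1)$ for every valid $z$, hence $L^{\bigast,\text{var}}_{M_R}(Y\mid X)\leqc C^{\text{two-part}}_{T,R}(Y\mid X)$, which is exactly the definition of an asymptotically optimal family (Def.~following \eqref{eq:asymptotically-optimal-two-part-code-def}), and Proposition~\ref{thm:asymptotically-optimal-two-part-codes} / its variational analogue (\ref{sec:asymptotically-optimal-var-codes}) then gives the limit and monotonicity claims.

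I expect the main obstacle to be step (iii): carefully accounting for the codelength of the adaptive prior itself so that the total model cost is $K(f^z_T)$ up to a constant that is genuinely independent of $R$, $z$, $X$, and $Y$. One has to be precise about how the GMM hyperparameters (number of components per weight group, means, variances, mixing weights) are themselves encoded and how this encoding is charged — either folded into the prior-description term of the adaptive variational codelength or handled via the semicomputable-semimeasure formalism of Appendix~\ref{sec:app-background} — and verify that a GMM with rational parameters can be specified by a program of length $|z|+\gO(1)$ whenever $z$ specifies the underlying Turing machine interpreter and prompt embeddings. A secondary, more routine obstacle is making the $\varepsilon\to 0$ limit in step (ii) rigorous: one must show the infimum over $\phi$ is attained or approached within an additive constant without the variance collapse causing the KL term to blow up, which is handled by noting the prior is adaptive and can be co-adapted to keep the KL bounded (a standard feature of the bits-back/adaptive-prior construction of \citet{hinton1993keeping}), and that the logits are rational so exact representation at finite precision suffices rather than a true limit.
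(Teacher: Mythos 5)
Your overall route is genuinely different from the paper's: you let the adaptive prior depend on $z$ (matched coordinate-wise to the near-deterministic posterior so the KL term is essentially zero) and charge the $|z|$ bits to the prior-description term $L^\Psi_{M_R}(\psi)$, whereas the paper keeps the prior independent of $z$, with a constant number of prior parameters, and pays $|z|$ through the KL term. The gap is exactly at your step (iii), and as written it fails: your prior is a product of one GMM per weight, the number of such weights grows with $R_s$ (the prompt-embedding rows), so any ``$\gO(1)$-per-parameter cost beyond $|z|$'' yields $L^\Psi_{M_R}(\psi) = |z| + \Omega(R_s)$. Since $C^{\text{two-part}}_{T,R}(Y \mid X)$ does not grow with $R_s$ and the additive constant in the definition of asymptotic optimality must be uniform over $R$, $z$, $X$, $Y$, this overhead destroys the bound. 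What you actually need, and do not establish, is that the entire vector $\psi^{R,z}$ is computable from $z$ alone, with $T$ and $R$ hard-coded into the code $M_R$ (including a fixed, $z$-independent default for the priors of the $R_s - |z|$ unused program-tape weights and of the constantly many non-prompt weights), so that one may set $L^\Psi_{M_R}(\psi^{R,z}) = |z|$; Kraft's inequality for this choice then follows from the prefix-freeness of $\gZ_{T,R}$, the same fact underlying the prior in Theorem~\ref{thm:transformer-two-part-code}. If you instead encode $\psi$ with an unconditioned universal code, you pick up a $K(R)$-type term that is not uniform in $R$. (Your step (ii) claim that the likelihood becomes exact ``below machine precision'' is also not right as stated, since a Gaussian has unbounded support; it is harmless, though, because for every $\delta>0$ a small enough variance gets within $\delta$, which suffices under $\leqc$.)

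For comparison, the paper resolves the crux with no $z$-dependence in the prior at all: one GMM prior is shared per feature column of the prompt-embedding table, so the number of prior parameters is constant and $L^\Psi_{M_R}$ can be a trivial uniform code of constant cost $c_\Psi$; the program column gets a two-component prior at $\pm 1$, delta-like posteriors on the first $|z|$ rows cost one bit each, and the posterior on the remaining $R_s - |z|$ rows is set \emph{equal} to the prior so those weights are transmitted for free---which is precisely why the posterior must itself be a two-component GMM rather than a single narrow Gaussian---while layerwise weight sharing keeps all remaining weights constant in number (cost $c_T$). Your construction, once repaired as above, would prove the existence statement, but it essentially re-packages the two-part code of Theorem~\ref{thm:transformer-two-part-code} inside the hyperprior on $\psi$, giving up the feature the paper's construction is designed to exhibit: a flexible prior whose parameter count is independent of $R$ and $z$, i.e.\ the shared-GMM objective actually used in Section~\ref{sec:experiments}.
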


The key challenge in satisfying this theorem, in contrast to our earlier existence proofs, lies in the stronger constraints imposed on the prior distribution, particularly the independence assumptions. We show that two relatively general conditions are sufficient to establish asymptotic optimality. First, we use layerwise weight sharing, as in a Universal Transformer~\citep{dehghani2019universal}, ensuring that the number of Transformer weights doesn’t scale with the time bound $R_t$. Second, we share a GMM prior over groups of Transformer weights, ensuring that the number of prior parameters doesn't scale with the space bound $R_s$.

The formal proof is in~\ref{sec:transformer-var-code-details}. Our proof sketch here builds on the function $\zmap$ and the related notation introduced in Section~\ref{sec:two-part-codes-for-transformers}. Given a universal prefix Turing machine $T$, for every resource bound $R \in \gR$ and program $z \in \gZ_{T,R}$, we construct GMM posterior parameters $\phi^{R,z}$ and GMM prior parameters $\psi^{R}$ such that the KL divergence is equal to $|z|$ plus an additive constant and the model function is deterministic and equivalent to $f_T^z$. Specifically, we construct $\phi^{R,z}$ to approximate a delta function at each weight in $\zmap(T,R,z)$, except for weights representing the program tape past the $|z|$th register; these weights do not affect the model function being computed, and we therefore set their posterior to be equal to their respective prior, such that they can be transmitted ``for free''.
Importantly, by construction, the output of $\zmap$ consists of a constant number of weights (depending on $T$ but not $R$) outside of the $R_s$ prompt embeddings (which represent the program tape contents). By sharing a GMM prior across each feature column of the prompt embeddings table, the total KL for these embeddings is shown to be $|z|$ bits.
The fixed number of remaining weights can be transmitted in a constant number of bits for any GMM prior assigning non-zero probability to the associated weights in $\zmap(T,R,z)$. This establishes the necessary upper bound on the minimum codelength, although there may be prior and posterior parameters offering even greater compression.

\section{Experiments \& Analysis}
\label{sec:experiments}

%Here we apply our theoretical results and study various codes of practical interest.

\subsection{Empirical Analysis of Variational Code}
\label{sec:variational-code-experiments}

First, we empirically evaluate the GMM-based adaptive variational code discussed in Section~\ref{sec:transformer-practical-code} as a computationally tractable and differentiable instance of an asymptotically optimal code for Transformers.

\paragraph{Computing parity} We focus our initial investigation on the task of computing parity, i.e. given a sequence of 0s and 1s, determining whether there is an odd or even number of 1s. Computing parity is a common task for assessing Transformer expressivity and generalization~\citep{hahn2020theoretical,bhattamishra2020ability,chiang2022overcoming,ruoss2023randomized,deletang2022neural,anil2022exploring,zhou2022teaching,zhou2023algorithms}. Standard Transformers trained with maximum likelihood objectives struggle with consistent length generalization on this task, making it a useful benchmark for assessing whether an alternative objective can encourage stronger out-of-distribution generalization. 
A key objective of our experiments is to distinguish between (1) failures of the proposed objective to select for compressible models with strong generalization and (2) failures to effectively optimize the proposed objective. To this end, for simple algorithmic tasks such as computing parity, we can again use ALTA, this time to manually determine a set of parameters that fits the training data and also has low complexity according to the proposed objective, establishing an upper bound on the minimum of the variational objective. We can then determine whether our optimization process, starting from a random initialization, can find a set of parameters with similar or lower loss. See~\ref{sec:transformer-exp-details} for details.

The results for Transformers with different training objectives and initializations are shown in Table~\ref{tab:transformer_results}. Only the manually initialized model exhibits strong length generalization (OOD accuracy). All randomly initialized models fit the training set, but struggle with length generalization. The model trained with the variational objective from a random initialization does not achieve a loss comparable to the ALTA-compiler initialization. This suggests that while the objective may select for models with stronger generalization, standard optimization techniques fail to effectively find a strong minimizer, which we investigate further next.

\begin{table}[t!]
\centering
\caption{Transformer performance on parity task for different training objectives and initializations.}
\vspace{0.15cm}
\scalebox{0.85}{
\begin{tabular}{llccccc}
\toprule
\textbf{Init.} & \textbf{Objective} & \textbf{KL} (bits) & \textbf{Train NLL} (bits) &  \textbf{Train Acc.} & \textbf{OOD Acc.} \\
\midrule
        Random & MLE Baseline & --- & $2.5 \times 10^{1}$ & $100\%$ & $56.4\%$ \\
        Random & Variational & $2.8 \times 10^{6}$ & $2.4 \times 10^{2}$ & $100\%$ & $60.4\%$ \\
        Manual & Variational & $8.7 \times 10^{2}$ & $0.0$ & $100\%$ & $100\%$ \\
%\midrule
%Random & MLE Baseline & -- & 17.2 & -- & 100\% & 56.4\% \\
%Random & Variational & 6.5M & 163.5 & 6.5M & 100\% & 60.4\% \\
%Manual & Variational & 2.0K & 0.0 & 2.0K & 100\% & 100\% \\
\bottomrule
\end{tabular}
}
\label{tab:transformer_results}
\end{table}

\paragraph{Optimization analysis} To understand why our Transformer models underfit the proposed objective when randomly initialized, we study the simplified setting of a 2-layer MLP on a simple identity task, with details in \ref{sec:mlp-experiments}. We use a shared GMM prior and Gaussian posteriors for each weight. 
Again, we see that random initialization fails to find a loss comparable to that achieved via manual initialization, indicating poor optimization of the proposed objective. By inspecting the learned prior and posterior distributions compared to those from manual initialization, we identify that one potential culprit is that the prior distribution collapses to a unimodal distribution (see Figure~\ref{fig:mlp_plots} in \ref{sec:mlp-experiments}). This is in contrast to our manually identified solution which consists of a multimodal distribution with low variance components, which leads to a significantly lower KL divergence. 
These results suggests that future work should consider alternative optimization procedures that discourage this form of collapse, or consider alternative families of asymptotically optimal codes altogether. 

\subsection{Asymptotic Bounds for Alternative Two-Part Codes}
\label{sec:asymptotic-bounds}

Next, we analyze asymptotic bounds for alternative two-part codes that may be of practical interest but do not strictly satisfy the conditions of asymptotic optimality. Recall from \ref{sec:two-part-codes-for-transformers} that the key requirement for satisfying the conditions of asymptotic optimality is that $\forall R \in \gR$ and $\forall z \in \gZ_{T,R}$, $-\log_2 \alpha_{M_R}(h^z) < |z| + c_T$ where $h^z = \zmap(T,R,z)$ and $c_T$ is an additive constant not depending on $R$ or $z$. % In other words, we want the description length for all model functions computable by $T$ within $R$, specified as $\alpha_{M_R}(h^z)$, to be less than or equal to $|z|$ up to an additive constant. 
In Table~\ref{tab:asymptotic-bounds}, the \emph{Description Length Bound} refers to an upper bound on this model cost $-\log_2 \alpha_{M_R}(h^z)$, up to an additive constant, for various two-part codes that do not strictly satisfy this theoretical ideal. A sub-optimal bound for this quantity leads to a sub-optimal bound for the overall minimum codelength, %. This means that minimizing the codelength objective may not lead to optimal compression in the limit, 
but close approximations could still be of practical interest.

 \begin{table}[h!]
\centering
\caption{Combining standard methods for quantization, adaptive selection of the prefix length, and layerwise weight sharing yields a two-part code with a description length upper bound (described in \ref{sec:asymptotic-bounds}) close to the theoretical ideal of $|z|$.  Ablating these methods degrades the bound, shifting the encoding cost away from the algorithmic complexity of the function computed by the network ($|z|$) and causing the encoding cost to scale linearly with the total parameter count of the network, driven by its maximum space ($R_s$) and time ($R_t$) resources.}
\vspace{0.15cm}
\scalebox{0.85}{
\begin{tabular}{cccc}
\toprule
% \makecell[c]{\bf Quantization} & 
%\makecell[c]{\bf Adaptive \\ \bf Prefix Length} & 
%\makecell[c]{\bf Layerwise \\ \bf Weight Sharing} &
%\makecell[c]{\bf Description Length \\ \bf Bound} \\
\bf Quantization & \bf Adaptive Prefix Length & \bf Layerwise Weight Sharing &
\bf Description Length Bound \\
\midrule
\greencheck & \greencheck & \greencheck & $|z| + \log R_s$  \\
\redcross & \greencheck & \greencheck & $\gO(|z|) + \log R_s$ \\
\redcross & \redcross & \greencheck & $\gO(R_s)$ \\
\redcross & \redcross & \redcross & $\gO(R_s) + \gO(R_t)$ \\
\bottomrule
\end{tabular}
} % scalebox
\label{tab:asymptotic-bounds}
\end{table}

We show that combining a form of quantization, adaptive selection of the prefix length, and layerwise weight sharing yields a description length bound that is \emph{close} to the theoretically ideal description length bound of $|z|$, differing by only a $\log R_s$ term. \emph{Adaptive prefix length} means dynamically selecting the optimal prefix length, e.g. via a hyperparameter sweep; transmitting this quantity introduces the $\log R_s$ term. Sufficient quantization can be achieved by using an adaptive GMM prior similarly to the previously studied variational code, using the adaptive quantization method proposed by \citet{han2016deep}, or by restricting the prefix embeddings to a fixed vocabulary as in discrete prompt optimization. The bound becomes progressively worse as these aspects are ablated, ultimately resulting in a uniform encoding of the model weights. Note that by definition $|z| \leq R_s$, and we adopt $\gO(\cdot)$ notation to indicate a multiplicative factor $>1$. Overall, the $|z| + \log R_s$ bound can perhaps be seen as a relatively positive theoretical result for methods related to discrete prompt optimization. (See \ref{sec:quasi-optimal} for additional details and discussion.)

\subsection{Alternative Transformer Families}
\label{sec:alternative-families}

The analysis in this paper has focused primarily on scaling the time and space resource bounds of a Transformer encoder by scaling the number of layers and the length of a prefix prepended the input, respectively, as described in \ref{sec:two-part-codes-for-transformers}. Here we briefly discuss alternative families of Transformers.

\paragraph{Scaling Transformer Dimensionality} Rather than scaling the number of prefix token embeddings to represent increasingly complex programs, we could scale the dimensionality of the Transformer, effectively encoding the program tape in the MLP parameters. In \ref{sec:mlp-scaling} we discuss how a mapping analogous to $\zmap$ could be established for such a family of Transformers; however, this requires scaling the model and MLP dimensions linearly with program length. This results in the number of MLP parameters growing quadratically with program length, leading to a bound of $\gO(|z|^2) + \log R_s$ in the context of Table~\ref{tab:asymptotic-bounds} if applying similar techniques. The constructed MLP matrices are highly structured, but have rank that scales linearly with program length, and therefore cannot be easily compressed with standard techniques such as rank factorization. Therefore, identifying tighter asymptotic bounds for practical methods for MLP matrix compression would be of interest for future work. Otherwise, these results potentially indicate poor asymptotic bounds for standard methods related to MLP compression.

\paragraph{Transformer Decoders} Rather than scaling the number of layers in a Transformer encoder to scale the time resource bound, we could scale the number of intermediate decoding steps in a Transformer decoder. This would not require explicit layerwise weight sharing to maintain a constant description length with respect to a growing time resource bound. The main challenge in implementing a mapping analogous to $\zmap$ for a Transformer decoder is in representing updates to the Turing machine tape given the decoder's attention mask, but such a result could build on prior work establishing the Turing completeness of Transformer decoders under various conditions \citep{perez2021attention,merrill2024expressive,li2025constant}. Future work could also study bounds for Transformer decoders that can interact with external tools.
\section{Related Work}
\label{sec:related-work}

Our work is closely related to the notion of \emph{universal induction}~\citep{solomonoff1964formal} and related theoretical frameworks~\citep{levin1973universal,hutter2000theory,lattimore2013no,achille2021information,nakkiran2021turing}. For neural networks, a notable early work is \citet{schmidhuber1997discovering}, which introduced a probabilistic search algorithm for discovering neural networks with low Kolmogorov complexity. However, none of these theoretically compelling approaches directly lead to practical and scalable training objectives for neural networks. On the other hand, various MDL-inspired complexity measures have been proposed and evaluated for neural networks, based on variational inference~\citep{hinton1993keeping,blundell2015weight,louizos2017bayesian,blier2018description} or other methods~\citep{li2018measuring,lotfi2022pac,lotfi2024non,demoss2025complexity,abudy2025minimum}, such as quantization and low-rank approximation.
However, prior work has not demonstrated asymptotic compression guarantees for such methods. %
Our main contribution is establishing asymptotic bounds by constructing a bridge between Transformer weights and prefix Turing machines. Our results extend prior work establishing the Turing completeness of Transformers~\citep{perez2021attention,nowak2024representational}.
We provide an extended discussion of related work in Appendix \ref{sec:extended-related-work}.

\section{Conclusion}

In this paper, we introduced a framework for analyzing asymptotic bounds for description length objectives, grounded in the MDL principle and the universality of Kolmogorov complexity. We established that there exist asymptotically optimal description length objectives for Transformers, while also highlighting potential challenges related to effectively optimizing such objectives. %The asymptotic guarantees offered by such objectives are particularly compelling as the computational resources available for training models continue to increase. 
Minimizing an asymptotically optimal description length objective achieves optimal compression for any dataset, up to an additive constant, in the limit as resource bounds increase. Such guarantees are particularly compelling as the computational resources available for training models continue to increase. 
% Our framework can potentially help identify identify codes of practical interest, potentially leading to models offering greater compression and generalization.
%This work opens up several avenues for future research. Theoretically, our analysis could be extended to other architectures and settings, such as Transformer decoders that leverage chain-of-thought, or models that interact with external tools. Future work could also consider families of codes that scale the capacity of Transformers along alternative dimensions, as opposed to increasing prompt length. Empirically, the most pressing challenge is to identify alternative (approximations of) asymptotically optimal codes, or novel optimization techniques, such that the proposed objectives can be efficiently optimized. This would be a significant step toward realizing the potential of the MDL principle, potentially leading to models offering greater compression and generalization.
Our theoretical framework therefore highlights a potential path towards identifying description length objectives leading to greater compression, and potentially greater generalization.

\clearpage

\eat{
\section*{Reproducibility Statement} Detailed proofs and theoretical analysis are included in Appendix~\ref{sec:appendix-theory}. Details needed to reproduce the experiments are included in Appendix~\ref{sec:appendix-experiments}.

\section*{Use of Large Language Models (LLMs)}

We used LLMs to help revise the paper, e.g., to draft suggestions for improving clarity and concision. We also used LLMs to try to identify typos or other issues with our proofs and technical analysis. LLMs did not play a significant role in research ideation.
}

\section*{Acknowledgements} We thank Jonathan Berant, Kenton Lee, and Tim Genewein
for useful discussions and feedback, and the anonymous reviewers for their helpful comments.

\bibliography{main}

\begin{thebibliography}{95}
\providecommand{\natexlab}[1]{#1}
\providecommand{\url}[1]{\texttt{#1}}
\expandafter\ifx\csname urlstyle\endcsname\relax
  \providecommand{\doi}[1]{doi: #1}\else
  \providecommand{\doi}{doi: \begingroup \urlstyle{rm}\Url}\fi

\bibitem[Abbe et~al.(2023)Abbe, Bengio, Lotfi, and Rizk]{abbe2023generalization}
Emmanuel Abbe, Samy Bengio, Aryo Lotfi, and Kevin Rizk.
\newblock Generalization on the unseen, logic reasoning and degree curriculum.
\newblock In \emph{International Conference on Machine Learning}, pp.\  31--60. PMLR, 2023.

\bibitem[Abudy et~al.(2025)Abudy, Well, Chemla, Katzir, and Lan]{abudy2025minimum}
Matan Abudy, Orr Well, Emmanuel Chemla, Roni Katzir, and Nur Lan.
\newblock A minimum description length approach to regularization in neural networks.
\newblock \emph{arXiv preprint arXiv:2505.13398}, 2025.

\bibitem[Achille et~al.(2021)Achille, Paolini, Mbeng, and Soatto]{achille2021information}
Alessandro Achille, Giovanni Paolini, Glen Mbeng, and Stefano Soatto.
\newblock The information complexity of learning tasks, their structure and their distance.
\newblock \emph{Information and Inference: A Journal of the IMA}, 10\penalty0 (1):\penalty0 51--72, 2021.

\bibitem[Achterhold et~al.(2018)Achterhold, Koehler, Schmeink, and Genewein]{achterhold2018variational}
Jan Achterhold, Jan~Mathias Koehler, Anke Schmeink, and Tim Genewein.
\newblock Variational network quantization.
\newblock In \emph{International Conference on Learning Representations}, 2018.
\newblock URL \url{https://openreview.net/forum?id=ry-TW-WAb}.

\bibitem[Akinwande et~al.(2024)Akinwande, Jiang, Sam, and Kolter]{akinwande2024understanding}
Victor Akinwande, Yiding Jiang, Dylan Sam, and J~Zico Kolter.
\newblock Understanding prompt engineering may not require rethinking generalization.
\newblock In \emph{The Twelfth International Conference on Learning Representations}, 2024.
\newblock URL \url{https://openreview.net/forum?id=a745RnSFLT}.

\bibitem[Anil et~al.(2022)Anil, Wu, Andreassen, Lewkowycz, Misra, Ramasesh, Slone, Gur-Ari, Dyer, and Neyshabur]{anil2022exploring}
Cem Anil, Yuhuai Wu, Anders Andreassen, Aitor Lewkowycz, Vedant Misra, Vinay Ramasesh, Ambrose Slone, Guy Gur-Ari, Ethan Dyer, and Behnam Neyshabur.
\newblock Exploring length generalization in large language models.
\newblock \emph{Advances in Neural Information Processing Systems}, 35:\penalty0 38546--38556, 2022.

\bibitem[Ba et~al.(2016)Ba, Kiros, and Hinton]{ba2016layer}
Jimmy~Lei Ba, Jamie~Ryan Kiros, and Geoffrey~E Hinton.
\newblock Layer normalization.
\newblock \emph{arXiv preprint arXiv:1607.06450}, 2016.

\bibitem[Barron(1985)]{barron1985logically}
Andrew~R Barron.
\newblock \emph{Logically smooth density estimation}.
\newblock PhD thesis, Stanford University, 1985.

\bibitem[Barron \& Cover(1991)Barron and Cover]{barron1991minimum}
Andrew~R Barron and Thomas~M Cover.
\newblock Minimum complexity density estimation.
\newblock \emph{IEEE transactions on information theory}, 37\penalty0 (4):\penalty0 1034--1054, 1991.

\bibitem[Bhattamishra et~al.(2020)Bhattamishra, Ahuja, and Goyal]{bhattamishra2020ability}
Satwik Bhattamishra, Kabir Ahuja, and Navin Goyal.
\newblock On the ability and limitations of transformers to recognize formal languages.
\newblock \emph{arXiv preprint arXiv:2009.11264}, 2020.

\bibitem[Bhattamishra et~al.(2023)Bhattamishra, Patel, Kanade, and Blunsom]{bhattamishra2023simplicity}
Satwik Bhattamishra, Arkil Patel, Varun Kanade, and Phil Blunsom.
\newblock Simplicity bias in transformers and their ability to learn sparse boolean functions.
\newblock In \emph{The 61st Annual Meeting Of The Association For Computational Linguistics}, 2023.

\bibitem[Blier \& Ollivier(2018)Blier and Ollivier]{blier2018description}
L{\'e}onard Blier and Yann Ollivier.
\newblock The description length of deep learning models.
\newblock \emph{arXiv preprint arXiv:1802.07044}, 2018.

\bibitem[Blundell et~al.(2015)Blundell, Cornebise, Kavukcuoglu, and Wierstra]{blundell2015weight}
Charles Blundell, Julien Cornebise, Koray Kavukcuoglu, and Daan Wierstra.
\newblock Weight uncertainty in neural network.
\newblock In \emph{International conference on machine learning}, pp.\  1613--1622. PMLR, 2015.

\bibitem[Bornschein et~al.(2023)Bornschein, Li, and Hutter]{bornschein2023sequential}
Jorg Bornschein, Yazhe Li, and Marcus Hutter.
\newblock Sequential learning of neural networks for prequential {MDL}.
\newblock In \emph{The Eleventh International Conference on Learning Representations}, 2023.
\newblock URL \url{https://openreview.net/forum?id=dMMPUvNSYJr}.

\bibitem[Bradbury et~al.(2018)Bradbury, Frostig, Hawkins, Johnson, Leary, Maclaurin, Necula, Paszke, Vander{P}las, Wanderman-{M}ilne, and Zhang]{jax2018github}
James Bradbury, Roy Frostig, Peter Hawkins, Matthew~James Johnson, Chris Leary, Dougal Maclaurin, George Necula, Adam Paszke, Jake Vander{P}las, Skye Wanderman-{M}ilne, and Qiao Zhang.
\newblock {JAX}: composable transformations of {P}ython+{N}um{P}y programs, 2018.
\newblock URL \url{http://github.com/jax-ml/jax}.

\bibitem[Chaitin(1969)]{chaitin1969length}
Gregory~J Chaitin.
\newblock On the length of programs for computing finite binary sequences: statistical considerations.
\newblock \emph{Journal of the ACM (JACM)}, 16\penalty0 (1):\penalty0 145--159, 1969.

\bibitem[Chen et~al.(2024)Chen, Shwartz-Ziv, Cho, Leavitt, and Saphra]{chen2024sudden}
Angelica Chen, Ravid Shwartz-Ziv, Kyunghyun Cho, Matthew~L Leavitt, and Naomi Saphra.
\newblock Sudden drops in the loss: Syntax acquisition, phase transitions, and simplicity bias in {MLM}s.
\newblock In \emph{The Twelfth International Conference on Learning Representations}, 2024.
\newblock URL \url{https://openreview.net/forum?id=MO5PiKHELW}.

\bibitem[Chiang \& Cholak(2022)Chiang and Cholak]{chiang2022overcoming}
David Chiang and Peter Cholak.
\newblock Overcoming a theoretical limitation of self-attention.
\newblock \emph{arXiv preprint arXiv:2202.12172}, 2022.

\bibitem[Chiang et~al.(2023)Chiang, Cholak, and Pillay]{chiang2023tighter}
David Chiang, Peter Cholak, and Anand Pillay.
\newblock Tighter bounds on the expressivity of transformer encoders.
\newblock In \emph{International Conference on Machine Learning}, pp.\  5544--5562. PMLR, 2023.

\bibitem[Cinquin et~al.(2021)Cinquin, Immer, Horn, and Fortuin]{cinquin2021pathologies}
Tristan Cinquin, Alexander Immer, Max Horn, and Vincent Fortuin.
\newblock Pathologies in priors and inference for bayesian transformers.
\newblock In \emph{I (Still) Can't Believe It's Not Better! NeurIPS 2021 Workshop}, 2021.

\bibitem[Cover \& Thomas(2006)Cover and Thomas]{cover2006elements}
Thomas~M. Cover and Joy~A. Thomas.
\newblock \emph{Elements of Information Theory}.
\newblock Wiley-Interscience, 2nd edition, 2006.

\bibitem[Dehghani et~al.(2019)Dehghani, Gouws, Vinyals, Uszkoreit, and Kaiser]{dehghani2019universal}
Mostafa Dehghani, Stephan Gouws, Oriol Vinyals, Jakob Uszkoreit, and Lukasz Kaiser.
\newblock Universal transformers.
\newblock In \emph{International Conference on Learning Representations}, 2019.

\bibitem[Del{\'e}tang et~al.(2022)Del{\'e}tang, Ruoss, Grau-Moya, Genewein, Wenliang, Catt, Cundy, Hutter, Legg, Veness, et~al.]{deletang2022neural}
Gr{\'e}goire Del{\'e}tang, Anian Ruoss, Jordi Grau-Moya, Tim Genewein, Li~Kevin Wenliang, Elliot Catt, Chris Cundy, Marcus Hutter, Shane Legg, Joel Veness, et~al.
\newblock Neural networks and the {C}homsky hierarchy.
\newblock \emph{arXiv preprint arXiv:2207.02098}, 2022.

\bibitem[Deletang et~al.(2024)Deletang, Ruoss, Duquenne, Catt, Genewein, Mattern, Grau-Moya, Wenliang, Aitchison, Orseau, Hutter, and Veness]{deletang2024language}
Gregoire Deletang, Anian Ruoss, Paul-Ambroise Duquenne, Elliot Catt, Tim Genewein, Christopher Mattern, Jordi Grau-Moya, Li~Kevin Wenliang, Matthew Aitchison, Laurent Orseau, Marcus Hutter, and Joel Veness.
\newblock Language modeling is compression.
\newblock In \emph{The Twelfth International Conference on Learning Representations}, 2024.
\newblock URL \url{https://openreview.net/forum?id=jznbgiynus}.

\bibitem[DeMoss et~al.(2025)DeMoss, Sapora, Foerster, Hawes, and Posner]{demoss2025complexity}
Branton DeMoss, Silvia Sapora, Jakob Foerster, Nick Hawes, and Ingmar Posner.
\newblock The complexity dynamics of grokking.
\newblock \emph{Physica D: Nonlinear Phenomena}, pp.\  134859, 2025.

\bibitem[Dettmers et~al.(2023)Dettmers, Pagnoni, Holtzman, and Zettlemoyer]{dettmers2023qlora}
Tim Dettmers, Artidoro Pagnoni, Ari Holtzman, and Luke Zettlemoyer.
\newblock Qlora: Efficient finetuning of quantized llms.
\newblock \emph{Advances in neural information processing systems}, 36:\penalty0 10088--10115, 2023.

\bibitem[Devlin et~al.(2019)Devlin, Chang, Lee, and Toutanova]{devlin2019bert}
Jacob Devlin, Ming-Wei Chang, Kenton Lee, and Kristina Toutanova.
\newblock Bert: Pre-training of deep bidirectional transformers for language understanding.
\newblock In \emph{Proceedings of the 2019 conference of the North American chapter of the association for computational linguistics: human language technologies, volume 1 (long and short papers)}, pp.\  4171--4186, 2019.

\bibitem[Dwivedi et~al.(2023)Dwivedi, Singh, Yu, and Wainwright]{dwivedi2023revisiting}
Raaz Dwivedi, Chandan Singh, Bin Yu, and Martin Wainwright.
\newblock Revisiting minimum description length complexity in overparameterized models.
\newblock \emph{Journal of Machine Learning Research}, 24\penalty0 (268):\penalty0 1--59, 2023.

\bibitem[Elhage et~al.(2021)Elhage, Nanda, Olsson, Henighan, Joseph, Mann, Askell, Bai, Chen, Conerly, et~al.]{elhage2021mathematical}
Nelson Elhage, Neel Nanda, Catherine Olsson, Tom Henighan, Nicholas Joseph, Ben Mann, Amanda Askell, Yuntao Bai, Anna Chen, Tom Conerly, et~al.
\newblock A mathematical framework for transformer circuits.
\newblock \emph{Transformer Circuits Thread}, 2021.
\newblock URL \url{https://transformer-circuits.pub/2021/framework/index.html}.

\bibitem[Fehr \& Henderson(2024)Fehr and Henderson]{fehr2024nonparametric}
Fabio~James Fehr and James Henderson.
\newblock Nonparametric variational regularisation of pretrained transformers.
\newblock In \emph{First Conference on Language Modeling}, 2024.
\newblock URL \url{https://openreview.net/forum?id=Zu8OWNUC0u}.

\bibitem[Feng et~al.(2023)Feng, Zhang, Gu, Ye, He, and Wang]{feng2023towards}
Guhao Feng, Bohang Zhang, Yuntian Gu, Haotian Ye, Di~He, and Liwei Wang.
\newblock Towards revealing the mystery behind chain of thought: A theoretical perspective.
\newblock In \emph{Thirty-seventh Conference on Neural Information Processing Systems}, 2023.
\newblock URL \url{https://openreview.net/forum?id=qHrADgAdYu}.

\bibitem[Gal \& Ghahramani(2016)Gal and Ghahramani]{gal2016dropout}
Yarin Gal and Zoubin Ghahramani.
\newblock Dropout as a bayesian approximation: Representing model uncertainty in deep learning.
\newblock In \emph{international conference on machine learning}, pp.\  1050--1059. PMLR, 2016.

\bibitem[Giannou et~al.(2023)Giannou, Rajput, Sohn, Lee, Lee, and Papailiopoulos]{giannou2023looped}
Angeliki Giannou, Shashank Rajput, Jy-yong Sohn, Kangwook Lee, Jason~D Lee, and Dimitris Papailiopoulos.
\newblock Looped transformers as programmable computers.
\newblock In \emph{International Conference on Machine Learning}, pp.\  11398--11442. PMLR, 2023.

\bibitem[Goldblum et~al.(2023)Goldblum, Finzi, Rowan, and Wilson]{goldblum2023no}
Micah Goldblum, Marc Finzi, Keefer Rowan, and Andrew~Gordon Wilson.
\newblock The no free lunch theorem, kolmogorov complexity, and the role of inductive biases in machine learning.
\newblock \emph{arXiv preprint arXiv:2304.05366}, 2023.

\bibitem[Grau-Moya et~al.(2024)Grau-Moya, Genewein, Hutter, Orseau, Deletang, Catt, Ruoss, Wenliang, Mattern, Aitchison, and Veness]{grau-moya2024learning}
Jordi Grau-Moya, Tim Genewein, Marcus Hutter, Laurent Orseau, Gregoire Deletang, Elliot Catt, Anian Ruoss, Li~Kevin Wenliang, Christopher Mattern, Matthew Aitchison, and Joel Veness.
\newblock Learning universal predictors.
\newblock In \emph{Forty-first International Conference on Machine Learning}, 2024.
\newblock URL \url{https://openreview.net/forum?id=B1ajnQyZgK}.

\bibitem[Graves(2011)]{graves2011practical}
Alex Graves.
\newblock Practical variational inference for neural networks.
\newblock \emph{Advances in neural information processing systems}, 24, 2011.

\bibitem[Grunwald(2004)]{grunwald2004tutorial}
Peter Grunwald.
\newblock A tutorial introduction to the minimum description length principle.
\newblock \emph{arXiv preprint math/0406077}, 2004.

\bibitem[Grunwald \& Vitanyi(2008)Grunwald and Vitanyi]{grunwald2008algorithmic}
Peter~D Grunwald and Paul Vitanyi.
\newblock Algorithmic information theory.
\newblock \emph{arXiv preprint arXiv:0809.2754}, 2008.

\bibitem[Hahn(2020)]{hahn2020theoretical}
Michael Hahn.
\newblock Theoretical limitations of self-attention in neural sequence models.
\newblock \emph{Transactions of the Association for Computational Linguistics}, 8:\penalty0 156--171, 2020.

\bibitem[Hamzi et~al.(2024)Hamzi, Hutter, and Owhadi]{hamzi2024bridging}
Boumediene Hamzi, Marcus Hutter, and Houman Owhadi.
\newblock Bridging algorithmic information theory and machine learning: A new approach to kernel learning.
\newblock \emph{Physica D: Nonlinear Phenomena}, 464:\penalty0 134153, 2024.

\bibitem[Han et~al.(2016)Han, Mao, and Dally]{han2016deep}
Song Han, Huizi Mao, and William~J Dally.
\newblock Deep compression: Compressing deep neural network with pruning, trained quantization and huffman coding.
\newblock In \emph{ICLR}, 2016.

\bibitem[He et~al.(2015)He, Zhang, Ren, and Sun]{he2015delving}
Kaiming He, Xiangyu Zhang, Shaoqing Ren, and Jian Sun.
\newblock Delving deep into rectifiers: Surpassing human-level performance on imagenet classification.
\newblock In \emph{Proceedings of the IEEE international conference on computer vision}, pp.\  1026--1034, 2015.

\bibitem[Hinton \& Van~Camp(1993)Hinton and Van~Camp]{hinton1993keeping}
Geoffrey~E Hinton and Drew Van~Camp.
\newblock Keeping the neural networks simple by minimizing the description length of the weights.
\newblock In \emph{Proceedings of the sixth annual conference on Computational learning theory}, pp.\  5--13, 1993.

\bibitem[Honkela \& Valpola(2004)Honkela and Valpola]{honkela2004variational}
Antti Honkela and Harri Valpola.
\newblock Variational learning and bits-back coding: an information-theoretic view to bayesian learning.
\newblock \emph{IEEE transactions on Neural Networks}, 15\penalty0 (4):\penalty0 800--810, 2004.

\bibitem[Hu et~al.(2022)Hu, Shen, Wallis, Allen-Zhu, Li, Wang, Wang, Chen, et~al.]{hu2022lora}
Edward~J Hu, Yelong Shen, Phillip Wallis, Zeyuan Allen-Zhu, Yuanzhi Li, Shean Wang, Lu~Wang, Weizhu Chen, et~al.
\newblock Lora: Low-rank adaptation of large language models.
\newblock \emph{ICLR}, 1\penalty0 (2):\penalty0 3, 2022.

\bibitem[Hutter(2000)]{hutter2000theory}
Marcus Hutter.
\newblock A theory of universal artificial intelligence based on algorithmic complexity.
\newblock \emph{arXiv preprint cs/0004001}, 2000.

\bibitem[Hutter(2005)]{hutter2005universal}
Marcus Hutter.
\newblock \emph{Universal artificial intelligence: Sequential decisions based on algorithmic probability}.
\newblock Springer Science \& Business Media, 2005.

\bibitem[Jang et~al.(2017)Jang, Gu, and Poole]{jang2017categorical}
Eric Jang, Shixiang Gu, and Ben Poole.
\newblock Categorical reparameterization with gumbel-softmax.
\newblock In \emph{International Conference on Learning Representations}, 2017.

\bibitem[Jiang et~al.(2020)Jiang, Neyshabur, Mobahi, Krishnan, and Bengio]{jiang2020fantastic}
Yiding Jiang, Behnam Neyshabur, Hossein Mobahi, Dilip Krishnan, and Samy Bengio.
\newblock Fantastic generalization measures and where to find them.
\newblock In \emph{International Conference on Learning Representations}, 2020.
\newblock URL \url{https://openreview.net/forum?id=SJgIPJBFvH}.

\bibitem[Kingma \& Ba(2014)Kingma and Ba]{kingma2014adam}
Diederik~P. Kingma and Jimmy Ba.
\newblock Adam: A method for stochastic optimization.
\newblock \emph{arXiv preprint arXiv:1412.6980}, 2014.

\bibitem[Kingma \& Welling(2014)Kingma and Welling]{kingma2014auto}
Diederik~P Kingma and Max Welling.
\newblock Auto-encoding variational bayes.
\newblock In \emph{International Conference on Learning Representations}, 2014.

\bibitem[Kingma et~al.(2015)Kingma, Salimans, and Welling]{kingma2015variational}
Diederik~P. Kingma, Tim Salimans, and Max Welling.
\newblock Variational dropout and the local reparameterization trick.
\newblock \emph{Advances in neural information processing systems}, 28, 2015.

\bibitem[Kolmogorov(1965)]{kolmogorov1965three}
A~Kolmogorov.
\newblock Three approaches to the quantitative definition of information.
\newblock \emph{Problems of Information Transmission}, 1:\penalty0 3--11, 1965.

\bibitem[Lattimore \& Hutter(2013)Lattimore and Hutter]{lattimore2013no}
Tor Lattimore and Marcus Hutter.
\newblock No free lunch versus occam’s razor in supervised learning.
\newblock In \emph{Algorithmic Probability and Friends. Bayesian Prediction and Artificial Intelligence: Papers from the Ray Solomonoff 85th Memorial Conference, Melbourne, VIC, Australia, November 30--December 2, 2011}, pp.\  223--235. Springer, 2013.

\bibitem[Levin(1973)]{levin1973universal}
Leonid~A Levin.
\newblock Universal sequential search problems.
\newblock \emph{Problems of information transmission}, 9\penalty0 (3):\penalty0 265--266, 1973.

\bibitem[Li et~al.(2018)Li, Farkhoor, Liu, and Yosinski]{li2018measuring}
Chunyuan Li, Heerad Farkhoor, Rosanne Liu, and Jason Yosinski.
\newblock Measuring the intrinsic dimension of objective landscapes.
\newblock In \emph{International Conference on Learning Representations}, 2018.

\bibitem[Li \& Vit{\'a}nyi(2008)Li and Vit{\'a}nyi]{li2008introduction}
Ming Li and Paul Vit{\'a}nyi.
\newblock \emph{An introduction to Kolmogorov complexity and its applications}, volume~3.
\newblock Springer, 2008.

\bibitem[Li \& Wang(2025)Li and Wang]{li2025constant}
Qian Li and Yuyi Wang.
\newblock Constant bit-size transformers are turing complete.
\newblock In \emph{The Thirty-ninth Annual Conference on Neural Information Processing Systems}, 2025.
\newblock URL \url{https://openreview.net/forum?id=RBWnyDEBKf}.

\bibitem[Lindner et~al.(2023)Lindner, Kram{\'a}r, Farquhar, Rahtz, McGrath, and Mikulik]{lindner2024tracr}
David Lindner, J{\'a}nos Kram{\'a}r, Sebastian Farquhar, Matthew Rahtz, Tom McGrath, and Vladimir Mikulik.
\newblock Tracr: Compiled transformers as a laboratory for interpretability.
\newblock \emph{Advances in Neural Information Processing Systems}, 36, 2023.

\bibitem[Lotfi et~al.(2022)Lotfi, Finzi, Kapoor, Potapczynski, Goldblum, and Wilson]{lotfi2022pac}
Sanae Lotfi, Marc Finzi, Sanyam Kapoor, Andres Potapczynski, Micah Goldblum, and Andrew~G Wilson.
\newblock Pac-bayes compression bounds so tight that they can explain generalization.
\newblock \emph{Advances in Neural Information Processing Systems}, 35:\penalty0 31459--31473, 2022.

\bibitem[Lotfi et~al.(2024)Lotfi, Finzi, Kuang, Rudner, Goldblum, and Wilson]{lotfi2024non}
Sanae Lotfi, Marc Finzi, Yilun Kuang, Tim~GJ Rudner, Micah Goldblum, and Andrew~Gordon Wilson.
\newblock Non-vacuous generalization bounds for large language models.
\newblock In \emph{Proceedings of the 41st International Conference on Machine Learning}, pp.\  32801--32818, 2024.

\bibitem[Louizos et~al.(2017)Louizos, Ullrich, and Welling]{louizos2017bayesian}
Christos Louizos, Karen Ullrich, and Max Welling.
\newblock Bayesian compression for deep learning.
\newblock \emph{Advances in neural information processing systems}, 30, 2017.

\bibitem[Maddison et~al.(2017)Maddison, Mnih, and Teh]{maddison2017concrete}
Chris~J Maddison, Andriy Mnih, and Yee~Whye Teh.
\newblock The concrete distribution: A continuous relaxation of discrete random variables.
\newblock In \emph{International Conference on Learning Representations}, 2017.

\bibitem[Martens \& Grosse(2015)Martens and Grosse]{martens2015optimizing}
James Martens and Roger Grosse.
\newblock Optimizing neural networks with kronecker-factored approximate curvature.
\newblock In \emph{Proceedings of the 32nd International Conference on International Conference on Machine Learning-Volume 37}, pp.\  2408--2417, 2015.

\bibitem[Merrill \& Sabharwal(2024{\natexlab{a}})Merrill and Sabharwal]{merrill2024expressive}
William Merrill and Ashish Sabharwal.
\newblock The expressive power of transformers with chain of thought.
\newblock In \emph{The Twelfth International Conference on Learning Representations}, 2024{\natexlab{a}}.

\bibitem[Merrill \& Sabharwal(2024{\natexlab{b}})Merrill and Sabharwal]{merrill2024little}
William Merrill and Ashish Sabharwal.
\newblock A little depth goes a long way: The expressive power of log-depth transformers.
\newblock In \emph{NeurIPS 2024 Workshop on Mathematics of Modern Machine Learning}, 2024{\natexlab{b}}.

\bibitem[Mingard et~al.(2025)Mingard, Rees, Valle-P{\'e}rez, and Louis]{mingard2025deep}
Chris Mingard, Henry Rees, Guillermo Valle-P{\'e}rez, and Ard~A Louis.
\newblock Deep neural networks have an inbuilt occam’s razor.
\newblock \emph{Nature Communications}, 16\penalty0 (1):\penalty0 220, 2025.

\bibitem[Nakkiran(2021)]{nakkiran2021turing}
Preetum Nakkiran.
\newblock Turing-universal learners with optimal scaling laws.
\newblock \emph{arXiv preprint arXiv:2111.05321}, 2021.

\bibitem[Nikankin et~al.(2025)Nikankin, Reusch, Mueller, and Belinkov]{nikankin2025arithmetic}
Yaniv Nikankin, Anja Reusch, Aaron Mueller, and Yonatan Belinkov.
\newblock Arithmetic without algorithms: Language models solve math with a bag of heuristics.
\newblock In \emph{The Thirteenth International Conference on Learning Representations}, 2025.
\newblock URL \url{https://openreview.net/forum?id=O9YTt26r2P}.

\bibitem[Nowak et~al.(2024)Nowak, Svete, Butoi, and Cotterell]{nowak2024representational}
Franz Nowak, Anej Svete, Alexandra Butoi, and Ryan Cotterell.
\newblock On the representational capacity of neural language models with chain-of-thought reasoning.
\newblock In \emph{Proceedings of the 62nd Annual Meeting of the Association for Computational Linguistics (Volume 1: Long Papers)}, pp.\  12510--12548, 2024.

\bibitem[Nowlan \& Hinton(1992)Nowlan and Hinton]{nowlan1992simplifying}
Steven~J Nowlan and Geoffrey~E Hinton.
\newblock Simplifying neural networks by soft weight-sharing.
\newblock \emph{Neural Computation}, 4\penalty0 (4):\penalty0 473--493, 1992.

\bibitem[Papadimitriou(1994)]{papadimitriou1994computational}
Christos~H. Papadimitriou.
\newblock \emph{Computational complexity}.
\newblock Addison-Wesley, 1994.
\newblock ISBN 978-0-201-53082-7.

\bibitem[P{\'e}rez et~al.(2021)P{\'e}rez, Barcel{\'o}, and Marinkovic]{perez2021attention}
Jorge P{\'e}rez, Pablo Barcel{\'o}, and Javier Marinkovic.
\newblock Attention is turing-complete.
\newblock \emph{Journal of Machine Learning Research}, 22\penalty0 (75):\penalty0 1--35, 2021.

\bibitem[Raffel et~al.(2020)Raffel, Shazeer, Roberts, Lee, Narang, Matena, Zhou, Li, and Liu]{raffel2020exploring}
Colin Raffel, Noam Shazeer, Adam Roberts, Katherine Lee, Sharan Narang, Michael Matena, Yanqi Zhou, Wei Li, and Peter~J Liu.
\newblock Exploring the limits of transfer learning with a unified text-to-text transformer.
\newblock \emph{Journal of machine learning research}, 21\penalty0 (140):\penalty0 1--67, 2020.

\bibitem[Rathmanner \& Hutter(2011)Rathmanner and Hutter]{rathmanner2011philosophical}
Samuel Rathmanner and Marcus Hutter.
\newblock A philosophical treatise of universal induction.
\newblock \emph{Entropy}, 13\penalty0 (6):\penalty0 1076--1136, 2011.

\bibitem[Rissanen(1978)]{rissanen1978modeling}
Jorma Rissanen.
\newblock Modeling by shortest data description.
\newblock \emph{Automatica}, 14\penalty0 (5):\penalty0 465--471, 1978.

\bibitem[Romera-Paredes et~al.(2024)Romera-Paredes, Barekatain, Novikov, Balog, Kumar, Dupont, Ruiz, Ellenberg, Wang, Fawzi, et~al.]{romera2024mathematical}
Bernardino Romera-Paredes, Mohammadamin Barekatain, Alexander Novikov, Matej Balog, M~Pawan Kumar, Emilien Dupont, Francisco~JR Ruiz, Jordan~S Ellenberg, Pengming Wang, Omar Fawzi, et~al.
\newblock Mathematical discoveries from program search with large language models.
\newblock \emph{Nature}, 625\penalty0 (7995):\penalty0 468--475, 2024.

\bibitem[Ruoss et~al.(2023)Ruoss, Del{\'e}tang, Genewein, Grau-Moya, Csord{\'a}s, Bennani, Legg, and Veness]{ruoss2023randomized}
Anian Ruoss, Gr{\'e}goire Del{\'e}tang, Tim Genewein, Jordi Grau-Moya, R{\'o}bert Csord{\'a}s, Mehdi Bennani, Shane Legg, and Joel Veness.
\newblock Randomized positional encodings boost length generalization of transformers.
\newblock In \emph{Proceedings of the 61st Annual Meeting of the Association for Computational Linguistics (Volume 2: Short Papers)}, pp.\  1889--1903, 2023.

\bibitem[Sankararaman et~al.(2022)Sankararaman, Wang, and Fang]{sankararaman2022bayesformer}
Karthik~Abinav Sankararaman, Sinong Wang, and Han Fang.
\newblock Bayesformer: Transformer with uncertainty estimation.
\newblock \emph{arXiv preprint arXiv:2206.00826}, 2022.

\bibitem[Schmidhuber(1997)]{schmidhuber1997discovering}
J{\"u}rgen Schmidhuber.
\newblock Discovering neural nets with low kolmogorov complexity and high generalization capability.
\newblock \emph{Neural Networks}, 10\penalty0 (5):\penalty0 857--873, 1997.

\bibitem[Schuurmans(2023)]{schuurmans2023memory}
Dale Schuurmans.
\newblock Memory augmented large language models are computationally universal.
\newblock \emph{arXiv preprint arXiv:2301.04589}, 2023.

\bibitem[Shaw et~al.(2018)Shaw, Uszkoreit, and Vaswani]{shaw2018self}
Peter Shaw, Jakob Uszkoreit, and Ashish Vaswani.
\newblock Self-attention with relative position representations.
\newblock In \emph{Proceedings of the 2018 Conference of the North American Chapter of the Association for Computational Linguistics: Human Language Technologies, Volume 2 (Short Papers)}, pp.\  464--468, 2018.

\bibitem[Shaw et~al.(2021)Shaw, Chang, Pasupat, and Toutanova]{shaw2021compositional}
Peter Shaw, Ming-Wei Chang, Panupong Pasupat, and Kristina Toutanova.
\newblock Compositional generalization and natural language variation: Can a semantic parsing approach handle both?
\newblock In \emph{Proceedings of the 59th Annual Meeting of the Association for Computational Linguistics and the 11th International Joint Conference on Natural Language Processing (Volume 1: Long Papers)}, pp.\  922--938, 2021.

\bibitem[Shaw et~al.(2024)Shaw, Cohan, Eisenstein, Lee, Berant, and Toutanova]{shaw2024alta}
Peter Shaw, James Cohan, Jacob Eisenstein, Kenton Lee, Jonathan Berant, and Kristina Toutanova.
\newblock Alta: Compiler-based analysis of transformers.
\newblock \emph{arXiv preprint arXiv:2410.18077}, 2024.

\bibitem[Solomonoff(1964)]{solomonoff1964formal}
Ray~J Solomonoff.
\newblock A formal theory of inductive inference.
\newblock \emph{Information and control}, 7\penalty0 (1):\penalty0 1--22,224--254, 1964.

\bibitem[Tsoy \& Konstantinov(2024)Tsoy and Konstantinov]{tsoy2024simplicity}
Nikita Tsoy and Nikola Konstantinov.
\newblock Simplicity bias of two-layer networks beyond linearly separable data.
\newblock \emph{arXiv preprint arXiv:2405.17299}, 2024.

\bibitem[Ullrich et~al.(2017)Ullrich, Meeds, and Welling]{ullrich2017soft}
Karen Ullrich, Edward Meeds, and Max Welling.
\newblock Soft weight-sharing for neural network compression.
\newblock In \emph{International Conference on Learning Representations}, 2017.

\bibitem[Vaswani et~al.(2017)Vaswani, Shazeer, Parmar, Uszkoreit, Jones, Gomez, Kaiser, and Polosukhin]{vaswani2017attention}
Ashish Vaswani, Noam Shazeer, Niki Parmar, Jakob Uszkoreit, Llion Jones, Aidan~N Gomez, {\L}ukasz Kaiser, and Illia Polosukhin.
\newblock Attention is all you need.
\newblock \emph{Advances in neural information processing systems}, 30, 2017.

\bibitem[Voita \& Titov(2020)Voita and Titov]{voita2020information}
Elena Voita and Ivan Titov.
\newblock Information-theoretic probing with minimum description length.
\newblock In \emph{Proceedings of the 2020 Conference on Empirical Methods in Natural Language Processing (EMNLP)}, pp.\  183--196, 2020.

\bibitem[Weiss et~al.(2021)Weiss, Goldberg, and Yahav]{weiss2021thinking}
Gail Weiss, Yoav Goldberg, and Eran Yahav.
\newblock Thinking like transformers.
\newblock In \emph{International Conference on Machine Learning}, pp.\  11080--11090. PMLR, 2021.

\bibitem[Yun et~al.(2020)Yun, Bhojanapalli, Rawat, Reddi, and Kumar]{yun2020are}
Chulhee Yun, Srinadh Bhojanapalli, Ankit~Singh Rawat, Sashank Reddi, and Sanjiv Kumar.
\newblock Are transformers universal approximators of sequence-to-sequence functions?
\newblock In \emph{International Conference on Learning Representations}, 2020.
\newblock URL \url{https://openreview.net/forum?id=ByxRM0Ntvr}.

\bibitem[Zhang et~al.(2017)Zhang, Bengio, Hardt, Recht, and Vinyals]{zhang2017understanding}
Chiyuan Zhang, Samy Bengio, Moritz Hardt, Benjamin Recht, and Oriol Vinyals.
\newblock Understanding deep learning requires rethinking generalization.
\newblock In \emph{International Conference on Learning Representations}, 2017.
\newblock URL \url{https://openreview.net/forum?id=Sy8gdB9xx}.

\bibitem[Zhou et~al.(2022)Zhou, Nova, Larochelle, Courville, Neyshabur, and Sedghi]{zhou2022teaching}
Hattie Zhou, Azade Nova, Hugo Larochelle, Aaron Courville, Behnam Neyshabur, and Hanie Sedghi.
\newblock Teaching algorithmic reasoning via in-context learning.
\newblock \emph{arXiv preprint arXiv:2211.09066}, 2022.

\bibitem[Zhou et~al.(2023)Zhou, Bradley, Littwin, Razin, Saremi, Susskind, Bengio, and Nakkiran]{zhou2023algorithms}
Hattie Zhou, Arwen Bradley, Etai Littwin, Noam Razin, Omid Saremi, Josh Susskind, Samy Bengio, and Preetum Nakkiran.
\newblock What algorithms can transformers learn? a study in length generalization.
\newblock \emph{arXiv preprint arXiv:2310.16028}, 2023.

\bibitem[Zhu et~al.(2025)Zhu, Chen, He, LeCun, and Liu]{zhu2025transformers}
Jiachen Zhu, Xinlei Chen, Kaiming He, Yann LeCun, and Zhuang Liu.
\newblock Transformers without normalization.
\newblock In \emph{Proceedings of the Computer Vision and Pattern Recognition Conference}, pp.\  14901--14911, 2025.

\end{thebibliography}
\bibliographystyle{iclr2026_conference}

\clearpage
\appendix

\setcounter{tocdepth}{2}
\startcontents[app] 
\section*{Appendix}
\printcontents[app] %
  {l}             %
  {1}             %
  {}              %

\section{Additional Background}
\label{sec:app-background}

We first give an overview of relevant concepts related to Kolmogorov complexity and the Minimum Description Length (MDL) principle, and refer the reader to other resources for further reading, such as \citet{li2008introduction}, a reference text for Kolmogorov complexity and related topics. Other texts that cover these topics include \citet{cover2006elements} and \citet{hutter2005universal}. Some material is also covered in tutorials \citet{grunwald2008algorithmic} and \citet{grunwald2004tutorial}. Other notable resources include \citet{barron1985logically} which discusses the Kolmogorov complexity of probability distributions, \citet{rathmanner2011philosophical} which offers a more philosophical and intuitive discussion, and the introduction of \citet{schmidhuber1997discovering} which offers a concise summary for a machine learning audience. 

\subsection{Binary Strings}
\label{sec:binary-strings}

 Let $\B$ denote the set of finite \emph{binary strings} or sequences, e.g.:
\begin{equation}
\{0,1\}^* = \{\eps, 0, 1, 00, 01, 10, 11, 000, \ldots \},
\label{eq:binary-strings}
\end{equation}
with $\eps$ denoting the empty string. We denote the \emph{length} of a string $x \in \{0,1\}^*$ as $|x|$, e.g. $|0101| = 4$.

Note that there is a one-to-one correspondence between the set of binary strings and the natural numbers $\gN$, e.g. \eqref{eq:binary-strings} shows a standard enumeration. Therefore, there is also a one-to-one mapping between the elements of any countable set and binary strings. 

\subsection{Codes and Description Lengths}
\label{sec:codes-and-description-lengths}

We consider settings where a sender (e.g., Alice) wants to transmit some information to a receiver (e.g., Bob). Formally, consider the case where Alice wants to transmit an element from some countable set $\gX$ to Bob by sending a message consisting of some binary string, such that Bob can reconstruct the element from the transmitted message. Thus, prior to communication, Alice and Bob must agree on a \emph{code}, or \emph{description method}, which can be characterized by a \emph{decoding function} $D : \{0,1\}^* \rightarrow \gX$. We refer to the binary strings in the domain of the decoding function as \emph{codewords} (or \emph{programs}, for reasons which will become clear in the following sections). For each code $D$, we can therefore associate a \emph{description length measure}, or \emph{complexity measure}, $L_D: \gX \rightarrow \gN$, where the complexity measure's value for every $x$ is equal to the length of the shortest program that generates $x$:
\begin{align}
    L_D(x) &= \text{length of the shortest  program}~z~\text{such that}~D(z)=x \nonumber \\ 
        &= \min_{z  ~:~ D(z) = x} |z| \label{eq:description-length2}
\end{align}
Therefore, a code $D$ establishes a \emph{compression scheme}, where Alice can transmit an element $x \in \gX$ to Bob using $L_D(x)$ bits.

\paragraph{Prefix codes and Kraft's inequality} If Alice wants to send Bob a sequence of elements in $\gX$, not all codes guarantee that such a sequence is \emph{uniquely decodeable}, i.e. that the sequence can be unambiguously reconstructed when the messages for each of the elements are concatenated. This ambiguity can be resolved by considering a \emph{prefix code}. A set $\gB \subset \{0,1\}^*$ is \emph{prefix-free} if no element in $\gB$ is a prefix of any other element in $\gB$. A decoding function specifies a \emph{prefix code} if its domain is \emph{prefix-free}. A common example of a prefix-free code is a \emph{Huffman code}.

\emph{Kraft's inequality} captures an important relation between description length measures and prefix codes. It states that there exists a prefix code $D$ over $\gX$ with description lengths $L_D$ if and only if:
\begin{equation}
\sum_\gX 2^{-L_D(x)} \leq 1.
\label{eq:kraft-inequality}
\end{equation}

\paragraph{Prefix codes and probability distributions} Beyond the practical advantages of self-delimiting codes, a primary theoretical motivation for considering prefix-free codes is their close connection with probability distributions. Given a probability distribution $p(x)$ over $\gX$, there exists a prefix-free code $D$ for $\gX$ such that:
\begin{equation}
L_D(x) = \lceil-\log_2~p(x)\rceil,
\end{equation}
where we round up to form integer-length codewords. One method for constructing such a code is \emph{Shannon-Fano} coding, where more likely elements are assigned shorter codewords. 
As we are primarily interested in theoretical \emph{codelengths} rather than practical \emph{codes}, we will typically omit the additive constant from this rounding and consider non-integer codelengths.

\paragraph{Universal prefix codes} Let us consider the set of all computable prefix-free partial functions mapping from binary strings to elements of some set $\gX$, denoted as $\gD$. Each decoding function $D \in \gD$ has a corresponding description length measure, $L_D$, given by equation~\ref{eq:description-length2}.
In general, $L_D$ may be an unsatisfying complexity measure for elements of $\gX$ due to its strong dependence on the potentially arbitrary choice of $D$. However, in the following section, we will show that there exists a subset of functions in $\gD$, that are \emph{universal} in the sense of offering \emph{at least as much compression} as any other description length measure, up to an additive constant that does not depend on the complexity of the element being encoded.

\subsection{Kolmogorov Complexity}
\label{sec:background-kolmogorov-complexity}

The definition of \emph{Kolmogorov complexity} (also called \emph{algorithmic complexity}) and the proof of its universality come from \citet{kolmogorov1965three}, with similar formulations appearing independently in \citet{solomonoff1964formal} and \citet{chaitin1969length}.

Intuitively, we can think of the Kolmogorov complexity $K(x)$ of an object $x$ as the shortest program written in some standard programming language (such as Python) that prints $x$. To make this notion more precise, Kolmogorov complexity is typically defined with respect to a special class of Turing machines, called prefix Turing machines. (With some effort, it could be defined with respect to any universal model of computation.) 

\paragraph{Prefix Turing machines} Prefix Turing machines are multi-tape Turing machines. As with all multi-tape Turing machines, the behavior of the machine is specified by a \emph{transition function}. At each step of computation, the transition function takes as input the current state and the register values at current ``head'' position of any readable tape. The transition function can then optionally update the register values of any writeable tapes at their current ``head'' positions and optionally move each ``head'' to an adjacent register. The machine then updates the state for the next step or halts the computation. 

The defining characteristic of a \emph{prefix} Turing machine is a binary, unidirectional, read-only \emph{program tape} with no blank symbols. The machine also consists of a bidirectional, read-write work tape. The machine also has at least one read-only input tape, and at least one write-only output tape.\footnote{Note that in some constructions, programs and inputs are encoded on the same tape, although this detail only affects the resulting definition of Kolmogorov complexity by an additive constant.}

For a prefix Turing machine $T$, we abuse notation and write $T$ to denote the partial function that the machine computes, i.e. where $y = T(x,z)$ denotes that the Turing machine $T$, with the input tape initialized with $x$ and the program tape initialized with prefix $z$, runs for some number of steps before halting with $y$ written to the output tape. The function is undefined for $(x,z)$ that do not halt.

\paragraph{Universal Turing machines} Notably, there exists a subset of prefix Turing machines that are \emph{universal} in the sense of being able to simulate the computation of any other prefix Turing machine. Such a machine can effectively take as input the description of any other Turing machine, and emulate the computation of that machine. 

\paragraph{Kolmogorov complexity} 

The more commonly used notion of Kolmogorov complexity -- used throughout this paper -- is called \emph{prefix} Kolmogorov complexity, to distinguish from \emph{plain} Kolmogorov complexity, which lacks some of the desirable formal properties.

The (prefix) Kolmogorov complexity of a string $x \in \B$ with respect to a universal prefix Turing machine $T$ is:
\begin{equation}
K_T(x) = \min_{z  ~:~ T(\epsilon,z) = x} |z|,
\end{equation}
where $\epsilon$ denotes the empty string.

Similarly, the \emph{conditional} Kolmogorov complexity of a string $y$ given $x$ is:
\begin{equation}
K_T(y|x) = \min_{z  ~:~ T(x,z) = y} |z|.
\end{equation}

Note that Kolmogorov complexity $K_T(x)$ satisfies Kraft's inequality because the set of halting programs is prefix-free.

\paragraph{Invariance theorem} 
The \emph{invariance theorem} is the key result that gives Kolmogorov complexity its \emph{universal} property. Given any two \emph{universal} prefix-free Turing machines $T^1, T^2$:
\begin{equation}
    \forall x,~| K_{T^1}(x) - K_{T^2}(x) | < c,
\end{equation}
where $c$ is a constant that depends on the choice of $T^1$ and $T^2$ but not on $x$, and therefore we write $\forall x,~K_{T^1}(x) \eqc K_{T^2}(x)$, recalling the notation from Section~\ref{sec:background}. The invariance theorem intuitively follows from the result that a \emph{universal Turing machine} can simulate the computation of any other Turing machine. Informally, the constant $c$ captures the cost of constructing an \emph{interpreter} for programs written for $T^2$ in the language of $T^1$. The result also extends to conditional Kolmogorov complexity.

Therefore, as we are primarily interested in inequalities that hold up to an additive constant, we write Kolmogorov complexity as $K(x)$ and drop the explicit dependence on a particular choice of reference machine.

\paragraph{Computability}
 Kolmogorov complexity is formally uncomputable due to the halting problem. If we try to enumerate programs from shortest to longest, some may never halt. It is, however, upper semicomputable. Any program that halts and generates the given object provides an upper bound. This upper bound becomes increasingly tight as we run more programs for more steps.

\paragraph{Universality of Kolmogorov complexity} As a corollary to the invariance theorem, Kolmogorov complexity is a \emph{universal description length measure} as described in Section~\ref{sec:codes-and-description-lengths}, i.e. for any computable prefix-free decoding function $D$ and corresponding description length measure $L_D$,
\begin{equation}
    \forall x~~K(x) \leqc L_D(x).\label{eq:kolmogrov-universality}
\end{equation}

In other words, Kolmogorov complexity can be interpreted as a \emph{compression scheme}, where Alice encodes a string $x$ by the shortest program that generates $x$. This compression scheme offers \emph{at least as much} compression of $x$, for any $x$, as any other computable prefix code up to an additive constant that does not depend on $x$. Thus, as the complexity of $x$ increases, the additive constant becomes negligible, and all such universal description length measures are asymptotically equivalent.
Conversely, any description length measure provides an upper bound on Kolmogorov complexity, up to an additive constant, however this bound may be arbitrarily loose for some objects.

\paragraph{Kolmogorov complexity of countable objects} 

The definition of Kolmogorov complexity naturally extends to countable sets other than the non-binary strings. For example, for $x \in \gX$ and $y \in \gY$ where $\gX$ and $\gY$ are countable, we have:
\begin{equation}
K(y \mid x) = \min_{z ~:~ T \left( e_{\gX}(x),z \right) = e_{\gY}(y)} |z|,
\end{equation}
or $\infty$ if no such $z$ exists, where $e_{\gX}(x) : \gX \rightarrow \B$ and $e_{\gY}(y) : \gY \rightarrow \B$ are one-to-one computable functions mapping from $x$ and $y$ to binary encodings, and $T$ is a universal prefix Turing machine.
While $K(y \mid x)$ therefore depends on a specific choice of $e_{\gX}$ and $e_{\gY}$, we leave this implicit in the notation. Since $e_{\gX}$ and $e_{\gY}$ are computable functions, the specific choice only affects $K(y \mid x)$ up to an additive constant that does not depend on $y$ or $x$, similar to the choice of $T$. 

For structured objects such as tuples, we can consider prefix Turing machines with multiple output tapes, in order to simplify the encoding of such objects. For such a universal prefix Turing machine $T$, we use the same notation as above, assuming that both $T(x,z)$ and $e_{\gY}$ output a tuple of strings with a number of elements equal to the number of output tapes. Again, such a choice only affects the resulting definition of Kolmogorov complex up to an additive constant, as the number of tapes does not affect the machine's expressive power~\citep{papadimitriou1994computational}.

\paragraph{Kolmogorov complexity of discrete functions} We can extend the definition of Kolmogorov complexity to functions. This is relatively straightforward for a function $f: \gX \rightarrow \gY$ if the domain $\gX$ and range $\gY$ are countable such that the inputs and outputs of the function can be encoded as binary strings. Again, we choose a reference universal prefix Turing machine $T$.
\begin{equation}
    K(f) = \min_{z ~:~ \forall x,~T(e_\gX(x),z) = e_\gY(f(x))} |z|,
\label{eq:kolmogorov-discrete-function}
\end{equation}
or $\infty$ if no such $z$ exists, which is the case if $f$ is uncomputable, e.g. it solves the halting problem. 
As a direct analogue to the invariance theorem for Kolmogorov complexity of strings, the same property holds for the Kolmogorov complexity of functions. 
Extensions to real-valued functions are discussed in Appendix~\ref{sec:real-valued-distributions}. 

\paragraph{Resource-bounded Kolmogorov complexity} We can consider computable, resource-bounded approximations to Kolmogorov complexity \citep[section 7]{li2008introduction}. 

Consider a \emph{resource bound} $R = (R_t, R_s) \in \gR$, representing a bound on the time $R_t \in \N$ and $R_s \in \N$ space resources available.

Given a universal prefix Turing machine $T$, let $y = T_R(x,z)$ denote the partial function computed by $T$ if the execution given $x$ and $z$ halts within $R_t$ steps and uses at most $R_s$ registers on each tape; otherwise $T_R(x,z)$ is undefined. We can then define time and space bounded Kolmogorov complexity with respect to $T$ and $R$:
\begin{equation}
    K_{T,R}(y \mid x) = \min_{z ~:~ T_R(x,z) = y} |z|,
    \label{eq:bounded-kolmogorov}
\end{equation}
or $\infty$ if no such $z$ exists, with an analogous definition for functions.

The resource-bounded version lacks the universal properties of unbounded Kolmogorov complexity. However, $K_{T,R}(y \mid x)$ is monotonically non-increasing with respect to increasing $R_t$ and $R_s$, decreasing to $K_T(y \mid x)$ as $R_t$ and $R_s$ go to $\infty$.

\paragraph{Coding theorem}
\label{sec:app-coding-theorem}

The \emph{coding theorem} is a central result in algorithmic information theory~\citep[Section 4]{li2008introduction}, and relates to the universality of Kolmogorov complexity. A key result of the coding theorem is that if $m$ is a lower semicomputable semimeasure, then:
\begin{equation}
    K(x) \leqc -\log_2 m(x).
    \label{eq:coding-theorem}
\end{equation}
where a \emph{semimeasure} is a generalization of probability distribution, such that a semimeasure $m$ over a domain $\gX$ satisfies:
\begin{equation}
\sum_{x \in \gX} m(x) \leq 1.
\end{equation}

Note that because $K(x)$ is upper semicomputable and satisfies Kraft's inequality, $2^{-K(x)}$ is a lower semicomputable semimeasure.

This result also extends to conditional Kolmogorov complexity, and, as a direct analogue, to the Kolmogorov complexity of discrete functions.

\section{Additional Theoretical Analysis and Proofs}
\label{sec:appendix-theory}

\subsection{Rationale for Rational-valued Model Functions}
\label{sec:real-valued-distributions}

Defining Kolmogorov complexity for a function $f: \gX \rightarrow \R$ with a countable domain but real-valued range is slightly more complicated than for functions with countable (e.g. rational-valued) outputs, but we can accomplish this by considering the shortest program $z$ for a given universal prefix Turing machine $T$ that can approximate $f$ to an arbitrary degree of precision $a \in N$ specified as an additional input:
\begin{equation}
    K(f) = \min \{|z| : \forall x \in \gX,a \in \N~~ \left| T(e_{(\gX \times \N)}(x,a),z) - f(x) \right| \leq 1 / a \}.
\label{eq:kolmogorov-real-valued-function}
\end{equation}

Given this definition, we could define two-part and variational codes with respect to model functions with real-valued logits, analogously to those defined with respect to model functions with rational-valued logits. However, this leads to significantly greater theoretical complexity with limited practical benefit, given that real-world neural networks represent logits with finite precision, and any real-valued function can be approximated to an arbitrary degree of precision by a rational-valued function.

\subsection{Note on ``Universal'' Terminology}
\label{sec:universal-terminology}

The term \emph{universal} is overloaded in the context of codes and compression schemes, with different usages varying with respect to the scope (the class of codes/models being compared against) and the nature of the performance guarantee. Our usage of the term in this paper follows the use of the term in the algorithmic information theory literature to describe the \emph{universality} of Kolmogorov complexity with respect to any computable prefix code, up to an additive constant. We extend this notion to slightly restricted sets of prefix codes, i.e. the specific classes of two-part, Bayesian, and variational codes discussed in this paper.

In contrast, the term \emph{universal code} also appears in the MDL literature~\citep{grunwald2004tutorial} to describe codes that are universal with respect to a specific set of candidate codes, in the sense of offering as much compression as any candidate code, up to a factor that is typically logarithmic with respect to the size of the candidate set or data sample. The term ``universal code'' is also used to describe prefix codes for integers, such as Elias codes, which offer asymptotic guarantees relative to any other prefix code assuming a monotonically decreasing probability distribution over the integers. 

\subsection{Model Functions and Notation}
\label{sec:reference-machine-and-notation}

Here we introduce standard encodings, $e_\gX$ and $e_\gL$, used to define the Kolmogorov complexity of model functions, $\gF : \gX \rightarrow \gL$, introduced in Section~\ref{sec:problem-setting}. Let us denote the class of universal prefix Turing machines compatible with these encodings as $\gT$, with the specific assumptions specified in the following paragraphs. Recall that the specific choice of universal prefix Turing machine, and encoding functions $e_\gX$ and $e_\gL$, only affect the resulting definition of Kolmogorov complexity by an additive term (\ref{sec:background-kolmogorov-complexity}), which does not affect the main inequalities we are interested in proving, which hold up to an additive term. Therefore we choose encodings to simplify the construction of a function $\zmap$ introduced later (\ref{sec:zmap-details}), which generates Transformer parameters to emulate a prefix Turing machine.

\paragraph{Input encoding} Let $\gV$ be a vocabulary of input tokens. We assume some one-to-one encoding of inputs $e_\gX : \gX \rightarrow \gV^*$ as a token sequence, which is represented on the input tape. For generality, we do not necessarily assume the input vocabulary is binary, and assume the input tape of any $T \in \gT$ has a number of symbols greater than or equal to $|\gV|$.

\paragraph{Output encoding} Recall that the output of a model function is a tuple of rational-valued logits, with one logit for each element in the output space $\gY$, i.e. $\gL = \mathbb{Q}^{|\gY|}$. We assume that any $T \in \gT$ has $|\gY|$ write-once, unidirectional output tapes, with each logit $\in \mathbb{Q}$ encoded on a separate output tape in the following format. Let $s \in \{ 0,1 \}$ denote the sign of the logit, $\texttt{numerator} \in \N_0$ be the numerator and $\texttt{denominator} \in \N$ be the denominator. The logit is then encoded on the output tape as:
\begin{equation}
s, \overbrace{1, 1, \cdots, 1}^{\texttt{numerator}}, 0, \overbrace{1, 1, \cdots, 1}^{\texttt{denominator}-1},
\end{equation}
followed by blank symbols.
Therefore, $e_\gL : \mathbb{Q}^{|\gY|} \rightarrow (\B)^{|\gY|}$.

\paragraph{Shorthand notation} We introduce the following shorthand notation related to prefix Turing machines and the encoding functions introduced above.

First, let us define a model function $f_T^z \in \gF$ computed by $T \in \gT$ with program $z$ as follows:
\begin{equation}
    f_T^z(x) = e_\gL^{-1}\left( T(e_{\gX}(x),z) \right).
\end{equation}

Next, let us define the set of programs where $T \in \gT$ halts with a valid output for any input:
\begin{equation}
    \gZ_T = \{ z : \forall x \in \gX,~T(e_\gX(x), z) \in \dom(e_\gL) \}.
\end{equation}

Additionally, let us denote the set of programs where $T \in \gT$ halts with a valid output for any input \emph{under resource bound $R$} as:
\begin{equation}
    \gZ_{T,R} = \{ z : \forall x \in \gX,~T_R(e_\gX(x), z) \in \dom(e_\gL) \}.
\end{equation}

Note that we can rewrite the definitions of Kolmogorov complexity with respect to these definitions.
\begin{equation}
    K_T(f) = \min_{z \in \gZ_T  ~:~ f_T^z = f} |z|,
\end{equation}
or $\infty$ if no such $z$ exists, and similarly: 
\begin{equation}
    K_{T,R}(f) = \min_{z \in \gZ_{T,R}  ~:~ f_T^z = f} |z|,
\end{equation}
or $\infty$ if no such $z$ exists, where $f_T^z = f$ denotes that $\forall x \in \gX,~f_T^z(x) = f(x)$.

\subsection{Analysis of Two-Part Codes}

Here we provide the proof of Proposition~\ref{thm:universal-two-part-existence}, along with supporting lemmas and related analysis of two-part codes.

\subsubsection{Lemmas~\ref{lemma-two-part-f} and ~\ref{lemma-two-part-f2}}

We introduce the following lemmas to support our later proofs.

\begin{defn}[description length of a model function]
The \emph{description length of a model function} $f$ under code $M$, denoted $L^\gF_M(f)$, is the codelength of the shortest hypothesis that maps to $f$:
\begin{equation}
    L^\gF_{M}(f) := \min_{h \in \gH_{M}  ~:~ m_M(h)=f} -\log_2 \alpha_M(h).
    \label{eq:function-description-length}
\end{equation}
\end{defn}

\begin{lemma}
\label{lemma-two-part-f}
The minimum codelength of a two-part code $M$ can be expressed as a minimization over the set of model functions realized by $M$:
\begin{equation}
    L_{M}^{\bigast,\text{two-part}}(Y|X) = \min_{f \in \im(m_M)} \left( L^\gF_M(f) - \log_2 p(Y|X;f) \right),
    \label{eq:two-part-as-function-of-f}
\end{equation}
where $\mathrm{im}(m_M)$ is the image of the mapping $m_M$, i.e., the set of all model functions $f$ such that $f=m_M(h)$ for some $h \in \gH_M$.
\end{lemma}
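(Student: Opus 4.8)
The plan is to prove the identity by re-indexing the minimization in \eqref{eq:two-part-min} according to the model function realized by each hypothesis. First I would observe that every hypothesis $h$ in the domain of $m_M$ is sent to exactly one model function $f = m_M(h)$, so $\dom(m_M)$ is partitioned into the fibers $m_M^{-1}(f) = \{h \in \gH_M : m_M(h) = f\}$ indexed by $f \in \im(m_M)$, each of which is nonempty by the definition of the image. Hypotheses outside $\dom(m_M)$ never contribute a finite value of $L_M^{\text{two-part}}(Y\mid X;h)$ (via \eqref{eq:two-part-code}), so they may be dropped from the minimization. Hence $\min_{h \in \gH_M}$ can be rewritten as the nested minimization $\min_{f \in \im(m_M)} \min_{h \in m_M^{-1}(f)}$.

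Next I would note that in the summand $L_M^{\text{two-part}}(Y\mid X;h) = -\log_2\alpha_M(h) - \log_2 p(Y\mid X;m_M(h))$ of \eqref{eq:two-part-code}, the second term depends on $h$ only through $m_M(h)$, so once $f$ is fixed it equals the constant $-\log_2 p(Y\mid X;f)$ throughout the inner minimization over $h \in m_M^{-1}(f)$. Pulling this constant out of the inner min leaves $\min_{h \in m_M^{-1}(f)} \bigl(-\log_2\alpha_M(h)\bigr)$, which is exactly $L^\gF_M(f)$ by \eqref{eq:function-description-length}. Combining the two steps yields $L_M^{\bigast,\text{two-part}}(Y\mid X) = \min_{f\in\im(m_M)}\bigl(L^\gF_M(f) - \log_2 p(Y\mid X;f)\bigr)$, which is \eqref{eq:two-part-as-function-of-f}.

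The only points that need care — rather than a genuine obstacle — are the partiality of $m_M$, handled as above by restricting to $\dom(m_M)$, and the standing convention of writing these infima over countable sets as minima: if attainment is ever in doubt one replaces every $\min$ by $\inf$, and the argument is unaffected since the infimum splits over the partition and commutes with pulling out a term that is constant over the inner index set. I expect no step here to be substantively difficult; the content of the lemma is precisely the bookkeeping that lets later proofs reason about codes at the level of realized model functions rather than raw hypotheses.
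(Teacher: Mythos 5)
Your proof is correct and takes essentially the same route as the paper's: partitioning $\gH_M$ into fibers of $m_M$, rewriting the outer minimum as a nested minimization over $f \in \im(m_M)$ and then $h \in m_M^{-1}(f)$, pulling out the data term (constant on each fiber), and recognizing the inner minimum as $L^\gF_M(f)$. Your additional remarks on partiality of $m_M$ and the min/inf convention are sound bits of bookkeeping that the paper leaves implicit, but they do not change the substance of the argument.
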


\begin{proof}
We start from the definition of the minimum achievable codelength, \eqref{eq:two-part-min}:
\begin{align*}
    L_M^{\bigast,\text{two-part}}(Y|X) &= \min_{h \in \gH_M} L_M^{\text{two-part}}(Y|X;h) \\
    &= \min_{h \in \gH_M} \left( -\log_2 \alpha_M(h) - \log_2 p(Y \mid X; m_M(h)) \right).
\end{align*}
We can partition the hypothesis space $\gH_M$ into disjoint sets, where each set contains all hypotheses that map to the same model function $f \in \im(m_M)$. The minimization over all $h \in \gH_M$ can then be rewritten as a two-level minimization: first, for each function $f$, we minimize over all hypotheses that produce it, and second, we minimize over all possible functions $f$:
\begin{align*}
    L_M^{\bigast,\text{two-part}}(Y|X) &= \min_{f \in \im(m_M)} \left( \min_{h \in \gH_M  ~:~ m_M(h)=f} \left( -\log_2 \alpha_M(h) - \log_2 p(Y \mid X; m_M(h)) \right) \right).
\end{align*}
For the inner minimization, all hypotheses $h$ map to the same function $f$. Therefore, the term $-\log_2 p(Y \mid X; m_M(h))$ is constant and equal to $-\log_2 p(Y \mid X; f)$. We can pull this constant term out of the inner minimization:
\begin{align*}
    L_M^{\bigast,\text{two-part}}(Y|X) &= \min_{f \in \im(m_M)} \left( \left( \min_{h \in \gH_M  ~:~ m_M(h)=f} -\log_2 \alpha_M(h) \right) - \log_2 p(Y \mid X; f) \right).
\end{align*}
By our definition in \eqref{eq:function-description-length}, the inner term is precisely the description length of the function $f$, $L^\gF_M(f)$. Substituting this back gives the final result:
\begin{align*}
    L_M^{\bigast,\text{two-part}}(Y|X) &= \min_{f \in \im(m_M)} \left( L^\gF_M(f) - \log_2 p(Y \mid X; f) \right). \qedhere
\end{align*}
\end{proof}

\begin{lemma}
\label{lemma-two-part-f2}
For any two-part code $M$ and for all $f \in \gF$:
\begin{equation}
    K(f) \leqc L^\gF_M(f)
    \label{eq:two-part-as-function-of-f2}.
\end{equation}
\end{lemma}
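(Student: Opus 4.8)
The plan is to chain two standard facts of algorithmic information theory: the coding theorem (\eqref{eq:coding-theorem}), which bounds $K$ above by the codelength of any lower semicomputable semimeasure, and the principle that applying a fixed computable map cannot increase Kolmogorov complexity by more than an additive constant. Throughout I fix a computable encoding of the countable space $\gH_M$ as binary strings (so that $K(h)$ is meaningful) and work with the reference universal prefix Turing machine $T \in \gT$ of Section~\ref{sec:reference-machine-and-notation} used to define $K(f)$. Every additive constant introduced will depend only on $M$ --- through the semimeasure $\alpha_M$ and the machine computing the architecture map $m_M$ --- and on $T$, never on $f$ or $h$, which is exactly what $\leqc$ requires.

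First I would invoke the coding theorem: since $\alpha_M$ is, by Definition~\ref{def:two-part-code}, a lower semicomputable semimeasure over $\gH_M$, it gives $K(h) \leqc -\log_2 \alpha_M(h)$ for all $h \in \gH_M$, with a constant depending only on $\alpha_M$ (hypotheses with $\alpha_M(h) = 0$ have codelength $\infty$ and are irrelevant). Next, using that $m_M$ is computable, I would argue $K(f) \leqc K(h)$ whenever $m_M(h) = f$: from a shortest $T$-program $p$ for $h$ (of length $K(h)$) one assembles a $T$-program for the \emph{model function} $m_M(h)$ itself --- hardcode $p$, recover $h$, and on input $x$ compute the value $m_M(h)(x)$, emitting it in the output format $e_\gL$ --- of length $K(h) + O(1)$; this program halts on every input precisely because $m_M(h) = f \in \gF$ is total, so it is an admissible witness for $K(f)$. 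This is the familiar ``computable functions do not increase $K$'' argument specialized to a function-valued output, the only bookkeeping being the self-delimiting concatenation of prefix-machine programs (cf.\ \citet{li2008introduction}).

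Combining the two bounds gives a single constant $c$, independent of $f$ and $h$, with $K(f) < -\log_2 \alpha_M(h) + c$ for every $h$ satisfying $m_M(h) = f$. Taking the infimum of the right-hand side over exactly these hypotheses --- the minimization defining $L^\gF_M(f)$ in \eqref{eq:function-description-length} --- yields $K(f) \le L^\gF_M(f) + c$, hence $K(f) \leqc L^\gF_M(f)$; and if no hypothesis maps to $f$ then $L^\gF_M(f) = \infty$ and the claim is vacuous. I expect the only delicate point to be the uniformity bookkeeping: each constant must come from a fixed object ($\alpha_M$, the program for $m_M$, $T$) rather than from $f$, and the inequality $K(f) < -\log_2\alpha_M(h) + c$ must hold for \emph{all} admissible $h$ with the \emph{same} $c$ so that it survives passing to the infimum. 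Correctly applying the coding theorem to a semimeasure over the abstract hypothesis space $\gH_M$ (rather than over strings) is the other place to be careful, though it is routine once a computable encoding of $\gH_M$ is fixed.
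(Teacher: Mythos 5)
Your proof is correct, but it takes a genuinely different route from the paper's. The paper never leaves the function level: it defines the pushforward quantity $2^{-L^\gF_M(f)} = \max_{h \,:\, m_M(h)=f} \alpha_M(h)$, checks via a Kraft-style domination argument ($\max \le \sum$) that this is a lower semicomputable semimeasure over $\gF$, and then applies the coding theorem \emph{for functions} directly to conclude $K(f) \leqc L^\gF_M(f)$ in one step. You instead apply the coding theorem at the hypothesis level, $K(h) \leqc -\log_2 \alpha_M(h)$, compose with the standard ``a fixed computable map does not increase prefix complexity by more than a constant'' argument to get $K(f) \leqc K(h)$ for every $h$ with $m_M(h)=f$, and then pass to the infimum defining $L^\gF_M(f)$; your uniformity bookkeeping (one constant for all admissible $h$, surviving the infimum) is handled correctly, as is the vacuous case where no hypothesis realizes $f$. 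The trade-off is where the effectivity assumption lives: your step $K(f) \leqc K(h)$ explicitly requires $m_M$ to be computable uniformly in $h$ (you need a single program that, given $h$ and $x$, emits $m_M(h)(x)$ in the format $e_\gL$), which Definition~\ref{def:two-part-code} does not state outright, although the footnote about Turing machines and every concrete instance in the paper make clear this is the intended reading; in exchange you only invoke the standard string-level coding theorem rather than its function-level analogue, and you avoid having to argue that the pushforward max-semimeasure over $\gF$ is itself lower semicomputable, a point the paper asserts rather briskly. Both arguments yield the same constant-dependence (on $M$ and the reference machine only), so the results are interchangeable.
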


\begin{proof}
First, we show that $2^{-L^\gF_M(f)}$ is a semimeasure over $\gF$.

Substituting and simplifying the definition of $L^\gF_M(f)$, we have:
\begin{align}
    2^{-L^\gF_M(f)} &= \max_{h \in \gH_{M}  ~:~ m_M(h)=f} \alpha_M(h)
\end{align}
We now need to show:
\begin{align}
    \sum_{f \in \im{m_M}} \left( \max_{h \in \gH_{M}  ~:~ m_M(h)=f} \alpha_M(h) \right) \leq 1.
\end{align}
The summation is effectively over some subset of hypotheses in $\gH_M$, i.e. those that offer the shortest description length for some function. As each element of the summation is non-negative, we have the following inequality:
\begin{align*}
    \sum_{f \in \im{m_M}} \left( \max_{h \in \gH_{M}  ~:~ m_M(h)=f} \alpha_M(h) \right) \leq \sum_{f \in \im{m_M}} \left( \sum_{h \in \gH_{M}  ~:~ m_M(h)=f} \alpha_M(h) \right) = \sum_{h \in \gH_M} \alpha_M(h),
\end{align*}
and because $\alpha_M(h)$ is, by definition, a semimeasure over $\gH_M$, we have:
\begin{align*}
    \sum_{h \in \gH_M} \alpha_M(h) \leq 1,
\end{align*}
and therefore $2^{-L^\gF_M(f)}$ is a semimeasure. Similarly, $2^{-L^\gF_M(f)}$ is lower semicomputable, given that $\alpha_M(h)$ is by definition lower semicomputable.

Therefore, per the coding theorem, we have:
\begin{align*}
    K(f) &\leqc -\log_2 2^{-L^\gF_M(f)} \\
      K(f) &\leqc L^\gF_M(f). \qedhere
\end{align*} 
\end{proof}

\subsubsection{Proof of Proposition~\ref{thm:universal-two-part-existence}}
\label{sec:app-proof-of-universal-two-part-existence}

\begin{prop*}[Proposition~\ref{thm:universal-two-part-existence} restated] There exists a universal two-part code.
\end{prop*}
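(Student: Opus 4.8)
The plan is to construct an explicit two-part code $M^1 = \langle \gH_{M^1}, m_{M^1}, \alpha_{M^1}\rangle$ and verify it is universal using Lemmas~\ref{lemma-two-part-f} and~\ref{lemma-two-part-f2}. The natural candidate is the ``Kolmogorov'' two-part code: fix a reference universal prefix Turing machine $T \in \gT$, let the hypothesis space be $\gH_{M^1} = \gZ_T$ (the halting programs producing valid model-function outputs), let the mapping be $m_{M^1}(z) = f_T^z$, and let the prior be $\alpha_{M^1}(z) = 2^{-|z|}$. Since $\gZ_T$ is prefix-free, Kraft's inequality gives $\sum_{z \in \gZ_T} 2^{-|z|} \le 1$, so $\alpha_{M^1}$ is a (semi)measure; it is also lower semicomputable since $\gZ_T$ is recursively enumerable. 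Hence $M^1$ is a well-formed two-part code per Definition~\ref{def:two-part-code}.

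Next I would compute $L^\gF_{M^1}(f)$ for this code. By definition, $L^\gF_{M^1}(f) = \min_{z \in \gZ_T \,:\, f_T^z = f} |z|$, which is exactly $K_T(f)$ — i.e., $L^\gF_{M^1}(f) = K(f)$ (for $f$ in the image of $m_{M^1}$, which is precisely the set of computable model functions, and $+\infty$ otherwise, consistent with $K(f) = \infty$ for uncomputable $f$). Applying Lemma~\ref{lemma-two-part-f}, the minimum codelength of $M^1$ is
\begin{equation}
L_{M^1}^{\bigast,\text{two-part}}(Y \mid X) = \min_{f \in \gF} \bigl( K(f) - \log_2 p(Y \mid X; f) \bigr) = C^{\text{two-part}}(Y \mid X).
\end{equation}

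Finally, for universality I must show $L_{M^1}^{\bigast,\text{two-part}}(Y \mid X) \leqc L_{M^2}^{\bigast,\text{two-part}}(Y \mid X)$ for any other two-part code $M^2$. By Lemma~\ref{lemma-two-part-f}, $L_{M^2}^{\bigast,\text{two-part}}(Y \mid X) = \min_{f \in \im(m_{M^2})} ( L^\gF_{M^2}(f) - \log_2 p(Y \mid X; f) )$. By Lemma~\ref{lemma-two-part-f2}, $K(f) \leqc L^\gF_{M^2}(f)$ with a constant independent of $f$ (and hence of the data), so
\begin{equation}
L_{M^1}^{\bigast,\text{two-part}}(Y \mid X) = \min_{f \in \gF}\bigl( K(f) - \log_2 p(Y\mid X;f)\bigr) \leqc \min_{f \in \im(m_{M^2})}\bigl( L^\gF_{M^2}(f) - \log_2 p(Y\mid X;f)\bigr),
\end{equation}
which is the desired inequality, since enlarging the feasible set from $\im(m_{M^2})$ to all of $\gF$ and replacing $L^\gF_{M^2}$ by the pointwise-smaller $K$ can only decrease the minimum. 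This establishes that $M^1$ is universal; moreover the computation already shows its minimum equals $C^{\text{two-part}}$, which is Corollary~\ref{thm:universal-two-part-min}. The only mild subtlety — the ``main obstacle,'' though it is light — is bookkeeping around partial functions and $\infty$-valued terms: one must check that restricting the outer minimum to $\im(m_{M^1})$ versus all of $\gF$ agrees (it does, since $f \notin \im(m_{M^1})$ means $f$ is uncomputable, so $K(f) = \infty$ and it never attains the minimum), and that the additive constant from Lemma~\ref{lemma-two-part-f2} genuinely does not depend on $X,Y$ (it doesn't — it depends only on $M^2$ and the reference machine $T$).
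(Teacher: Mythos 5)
Your proposal is correct and follows essentially the same route as the paper's proof: the same ``Kolmogorov'' two-part code (programs as hypotheses, $m(z)=f_T^z$, prior $2^{-|z|}$ on halting programs), the identification $L^\gF_{M^1}(f) \eqc K(f)$, and universality via Lemma~\ref{lemma-two-part-f} together with Lemma~\ref{lemma-two-part-f2}; the only cosmetic difference is that you compare the two minima over functions directly, whereas the paper evaluates the universal code at a minimizer $f_M^\ast$ of the competing code, which is the same argument. Your closing bookkeeping about $\im(m_{M^1})$ versus $\gF$ and the data-independence of the additive constant matches the paper's treatment as well.
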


\begin{proof}
We construct a two-part code $U$ with respect to a universal prefix Turing machine $T \in \gT$. The components of $U$ are defined as follows:
\begin{itemize}
    \item The hypothesis space $\gH_U = \B$ is the set of binary strings, i.e. programs.
    \item The mapping function $m_U: \gH_U \to \gF$ takes a program $h \in \gH_U$ and maps it to a model function. Specifically, $m_U(h) = f_T^h$, i.e. the function computed by running the machine $T$ with program $h$.
    \item The prior $\alpha_U(h)$ is defined as $2^{-|h|}$ for $h \in \gZ_T$, where $|h|$ is the length of a halting program $h$. Otherwise, $\alpha_U(h) = 0$ if $h$ does not halt for all inputs. The prior is therefore a lower semicomputable semimeasure.
\end{itemize}

By the definition of a universal Turing machine, the image of $m_U$ is the set of all computable model functions, $\gF$.

Using Lemma~\ref{lemma-two-part-f}, we can analyze the description length of a function $f$ under our code $U$. Recall that $L_M^\gF(f)$ is the codelength of the shortest hypothesis that maps to $f$. For our code $U$, this becomes:
\begin{align}
    L^\gF_U(f) &= \min_{h \in \gH_U  ~:~ m_U(h)=f} \left( -\log_2 \alpha_U(h) \right) \nonumber \\
    &= \min_{h \in \{0,1\}^*  ~:~ m_U(h)=f} \left( -\log_2 2^{-|h|} \right) \nonumber \\
    &= \min_{h \in \{0,1\}^*  ~:~ f^h_T=f} |h|. \label{eq:lfu_is_k}
\end{align}
The final expression is, by definition, the  Kolmogorov complexity of the function $f$ with respect to machine $T$, denoted $K_T(f)$. By the invariance theorem, $L^\gF_U(f) \eqc K(f)$. By Lemma~\ref{lemma-two-part-f2}, for any other two-part code $M$, we have $K(f) \leqc L^\gF_M(f)$. Therefore, combining these results, we have a key inequality that holds for any two-part code $M$ and any function $f$:
\begin{equation}
    L_U^\gF(f) \eqc K(f) \leqc L_M^\gF(f). \label{eq:universal_description_length_bound}
\end{equation}

Finally, we show that $U$ is a universal two-part code. We must prove that for any other code $M$, $L^{\bigast,\text{two-part}}_{U}(Y \mid X) \leqc L^{\bigast,\text{two-part}}_{M}(Y \mid X)$.

Let $f_M^\ast$ be any model function that minimizes the codelength for code $M$ on a given dataset $(X,Y)$, re-written as a minimum over model functions, per Lemma~\ref{lemma-two-part-f}, such that:
\begin{equation}
    L^{\bigast,\text{two-part}}_{M}(Y \mid X) = L_M^\gF(f_M^\ast) - \log_2 p(Y|X;f_M^\ast). \label{eq:m_star_def}
\end{equation}
The minimum codelength for our universal code $U$ is, by definition, no greater than the codelength achieved by using the specific function $f_M^\ast$:
\begin{equation}
    L^{\bigast,\text{two-part}}_{U}(Y \mid X) \leq L_U^\gF(f_M^\ast) - \log_2 p(Y|X;f_M^\ast). \label{eq:u_less_than_specific}
\end{equation}
Using our key inequality from \eqref{eq:universal_description_length_bound}, we know that $L_U^\gF(f_M^\ast) \leqc L_M^\gF(f_M^\ast)$. Substituting this into the right-hand side of \eqref{eq:u_less_than_specific} yields:
\begin{equation*}
L^{\bigast,\text{two-part}}_{U}(Y \mid X) \leqc L_M^\gF(f_M^\ast) - \log_2 p(Y|X;f_M^\ast).
\end{equation*}
Now, substituting \eqref{eq:m_star_def} into the right-hand side of the expression above gives us our final result:
\begin{equation}
L^{\bigast,\text{two-part}}_{U}(Y \mid X) \leqc L^{\bigast,\text{two-part}}_{M}(Y \mid X).
\end{equation}
Since this holds for any two-part code $M$, we have shown that $U$ is a universal two-part code.
\end{proof}

\subsubsection{Discussion of Universal Two-Part Codes}
\label{sec:universal-two-part-code-discussion}

  Notably, satisfying the conditions of a universal two-part code does not require that $L_M(h) \leqc K(h)$ for all hypotheses. This is particularly notable for neural networks, where hypothesis spaces are typically highly redundant -- many different parameter sets (hypotheses) compute the same model function. Our definition allows for some of these parameter sets to be inefficiently described, with a description length far exceeding their own Kolmogorov complexity. Universality is maintained as long as for any given model function, at least one of its corresponding parameter sets is described efficiently (with description length equal to the function's Kolmogorov complexity, up to some additive constant). Therefore, we can devise universal two-part codes without necessarily needing to devise description length measures that optimally compress every possible hypothesis, a potentially significant challenge for the vast and redundant parameter spaces of neural networks.
 
 On the other hand, because some (or even most) hypotheses may be coded quite inefficiently, a universal two-part code would not necessarily be useful for post-hoc model selection across models that were not trained to optimize the given two-part description length objective.

\subsubsection{Proof of Corollary~\ref{thm:universal-two-part-min}} 

\begin{cor*}[Corollary~\ref{thm:universal-two-part-min} restated] The minimum of any universal two-part code $M$ is equal to the following bound, denoted $C^{\text{two-part}}(Y \mid X)$, up to an additive term:
\begin{equation}
C^{\text{two-part}}(Y \mid X) := \min_{f \in \gF} K(f) - \log_2~p(Y \mid X;f)
 \eqc L_M^{\bigast,\text{two-part}}(Y \mid X).
\end{equation}
\end{cor*}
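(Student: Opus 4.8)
The plan is to prove the two inequalities $C^{\text{two-part}}(Y \mid X) \leqc L_M^{\bigast,\text{two-part}}(Y \mid X)$ and $L_M^{\bigast,\text{two-part}}(Y \mid X) \leqc C^{\text{two-part}}(Y \mid X)$ separately, leaning on the machinery already built for Proposition~\ref{thm:universal-two-part-existence}. The key observation is that $C^{\text{two-part}}(Y \mid X)$ is literally the minimum codelength of the specific universal code $U$ constructed in that proof: by Lemma~\ref{lemma-two-part-f} applied to $U$, we have $L_U^{\bigast,\text{two-part}}(Y \mid X) = \min_{f \in \im(m_U)} ( L^\gF_U(f) - \log_2 p(Y \mid X;f) )$, and from the proof of Proposition~\ref{thm:universal-two-part-existence} we know $\im(m_U) = \gF$ and $L^\gF_U(f) \eqc K(f)$. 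Substituting these identities shows $L_U^{\bigast,\text{two-part}}(Y \mid X) \eqc \min_{f \in \gF} K(f) - \log_2 p(Y \mid X;f) = C^{\text{two-part}}(Y \mid X)$. So the corollary holds for this one particular $U$.

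To extend from $U$ to an arbitrary universal two-part code $M$, I would invoke the definition of universality in both directions. Since $M$ is universal and $U$ is a two-part code, the definition gives $L_M^{\bigast,\text{two-part}}(Y \mid X) \leqc L_U^{\bigast,\text{two-part}}(Y \mid X)$. Conversely, since $U$ is also universal (Proposition~\ref{thm:universal-two-part-existence}) and $M$ is a two-part code, $L_U^{\bigast,\text{two-part}}(Y \mid X) \leqc L_M^{\bigast,\text{two-part}}(Y \mid X)$. Chaining these with the identity $L_U^{\bigast,\text{two-part}}(Y \mid X) \eqc C^{\text{two-part}}(Y \mid X)$ from the previous paragraph yields $L_M^{\bigast,\text{two-part}}(Y \mid X) \eqc C^{\text{two-part}}(Y \mid X)$, which is exactly the claim. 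One small care point: all these $\leqc$ relations carry additive constants that depend on the codes involved ($M$, $U$, the reference machine $T$) but not on the dataset $(X,Y)$, so the composition of finitely many of them is still a valid $\eqc$ with a dataset-independent constant.

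I do not expect a serious obstacle here — the corollary is essentially a repackaging of the existence proof — but the step requiring the most care is verifying that $C^{\text{two-part}}$ as written (a minimization over \emph{all} of $\gF$, using the invariant $K(f)$) genuinely coincides with $L_U^{\bigast,\text{two-part}}$ up to an additive constant, since $L^\gF_U(f)$ equals $K_T(f)$ exactly but only $\eqc K(f)$ via the invariance theorem, and one must confirm the minimization is not sensitive to this in a way that breaks the constant. Because the additive gap $|L^\gF_U(f) - K(f)| < c$ is uniform in $f$, the two minima differ by at most $c$, so this is fine; I would state it explicitly rather than gloss over it. The only other thing to be tidy about is that $-\log_2 p(Y \mid X;f)$ is finite and well-defined for every $f \in \gF$ (softmax outputs are strictly positive), so the minimand is real-valued and the minimization is over the same objective in both expressions.
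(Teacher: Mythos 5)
Your proposal is correct and follows essentially the same route as the paper's proof: establish $L_U^{\bigast,\text{two-part}}(Y \mid X) \eqc C^{\text{two-part}}(Y \mid X)$ for the specific universal code $U$ from Proposition~\ref{thm:universal-two-part-existence} via $L_U^\gF(f) \eqc K(f)$, then use universality in both directions to transfer the equivalence to any universal two-part code $M$. Your extra remarks on the uniformity of the additive constant over $f$ and the finiteness of $-\log_2 p(Y \mid X;f)$ are sound and merely make explicit what the paper leaves implicit.
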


\begin{proof}
Let $U$ be the specific universal two-part code constructed in the proof of Proposition~\ref{thm:universal-two-part-existence}. Its minimum codelength is given by:
\begin{equation}
    L_U^{\bigast,\text{two-part}}(Y \mid X) = \min_{f \in \gF} \left( L_U^\gF(f) - \log_2 p(Y|X;f) \right).
\end{equation}
In that proof, we established that, $L_U^\gF(f) \eqc K(f)$. Substituting this into the equation above directly shows that the minimum codelength of $U$ is equivalent to the bound $C^{\text{two-part}}(Y \mid X)$:
\begin{equation}
    L_U^{\bigast,\text{two-part}}(Y \mid X) \eqc \min_{f \in \gF} \left( K(f) - \log_2 p(Y|X;f) \right) = C^{\text{two-part}}(Y \mid X).
\end{equation}
Now, let $M$ be any other universal two-part code. By the definition of a universal two-part code, its minimum codelength must be equivalent to that of $U$, i.e., $L_{M}^{\bigast,\text{two-part}}(Y \mid X) \eqc L_U^{\bigast,\text{two-part}}(Y \mid X)$. It therefore follows that the minimum codelength for any universal code $M$ is equivalent to the bound.
\end{proof}

\subsubsection{Proof of Proposition~\ref{thm:asymptotically-optimal-two-part-codes}}
\label{sec:app-proof-of-asymptotically-optimal-two-part-codes}

\begin{prop*}[Proposition~\ref{thm:asymptotically-optimal-two-part-codes} restated] Given an asymptotically optimal family of two-part codes $\{ M_R \mid R \in \gR \}$:
\begin{equation}
    \lim_{R_t,R_s \rightarrow \infty} L^{\bigast,\text{two-part}}_{M_R}(Y \mid X) \eqc C^{\text{two-part}}(Y \mid X),
\end{equation}
with the bound $C^{\text{two-part}}_{T,R}(Y \mid X)$ monotonically non-increasing with increasing $R_t$ or $R_s$.
\end{prop*}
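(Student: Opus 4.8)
The plan is to prove the monotonicity statement and the limit statement separately, treating the limit statement as a two-sided sandwich of $\lim_R L^{\bigast,\text{two-part}}_{M_R}(Y\mid X)$ by $C^{\text{two-part}}(Y\mid X)$ from above and from below. For monotonicity, recall $C^{\text{two-part}}_{T,R}(Y\mid X)=\min_{f\in\gF}(K_{T,R}(f)-\log_2 p(Y\mid X;f))$. By the background on resource-bounded Kolmogorov complexity (Section~\ref{sec:background}), for every fixed $f$ the term $K_{T,R}(f)$ is non-increasing in each of $R_t$ and $R_s$, while $-\log_2 p(Y\mid X;f)$ does not depend on $R$; so each term of the minimization is pointwise non-increasing in $R$, and a pointwise minimum of non-increasing functions is non-increasing. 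This proves the monotonicity claim and also shows the limit $\lim_{R_t,R_s\to\infty}C^{\text{two-part}}_{T,R}(Y\mid X)$ exists.

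Next I would show that the bound itself converges, i.e.\ $\lim_{R_t,R_s\to\infty}C^{\text{two-part}}_{T,R}(Y\mid X)\eqc C^{\text{two-part}}(Y\mid X)$. The ``$\geqc$'' direction is termwise: $K_{T,R}(f)\ge K_T(f)\geqc K(f)$ by the invariance theorem, hence $C^{\text{two-part}}_{T,R}(Y\mid X)\geqc C^{\text{two-part}}(Y\mid X)$ for every $R$. For ``$\leqc$'', let $f^\ast$ attain $\min_{f\in\gF}(K_T(f)-\log_2 p(Y\mid X;f))$ — a minimizer exists because the objective is at least $K_T(f)\ge 0$ and for any $k$ only finitely many $f$ satisfy $K_T(f)\le k$ — and let $z^\ast$ be a shortest $T$-program for $f^\ast$. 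Since $K_{T,R}(f^\ast)$ decreases to $K_T(f^\ast)<\infty$, there is a bound $R^\ast$ with $z^\ast\in\gZ_{T,R^\ast}$, so for all $R\ge R^\ast$ componentwise, $C^{\text{two-part}}_{T,R}(Y\mid X)\le |z^\ast|-\log_2 p(Y\mid X;f^\ast)=K_T(f^\ast)-\log_2 p(Y\mid X;f^\ast)\eqc C^{\text{two-part}}(Y\mid X)$, the last step by invariance. Passing to the limit combines the two directions.

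The limit statement then follows by sandwiching. From above: the definition of an asymptotically optimal family gives $L^{\bigast,\text{two-part}}_{M_R}(Y\mid X)\leqc C^{\text{two-part}}_{T,R}(Y\mid X)$ with an additive constant not depending on $R$ or the data, so with the previous paragraph $\limsup_R L^{\bigast,\text{two-part}}_{M_R}(Y\mid X)\leqc C^{\text{two-part}}(Y\mid X)$. From below: each $M_R$ is itself a two-part code, so Lemma~\ref{lemma-two-part-f} rewrites its minimum as $\min_{f\in\im(m_{M_R})}(L^\gF_{M_R}(f)-\log_2 p(Y\mid X;f))$ and Lemma~\ref{lemma-two-part-f2} gives $K(f)\leqc L^\gF_{M_R}(f)$, whence $C^{\text{two-part}}(Y\mid X)\leqc L^{\bigast,\text{two-part}}_{M_R}(Y\mid X)$ and therefore $C^{\text{two-part}}(Y\mid X)\leqc\liminf_R L^{\bigast,\text{two-part}}_{M_R}(Y\mid X)$ — equivalently, one can invoke the universality of the code $U$ built in Proposition~\ref{thm:universal-two-part-existence} together with Corollary~\ref{thm:universal-two-part-min}. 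The two bounds give $\lim_R L^{\bigast,\text{two-part}}_{M_R}(Y\mid X)\eqc C^{\text{two-part}}(Y\mid X)$.

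The step I expect to be the main obstacle is keeping the lower-bound constant uniform in $R$, so that it survives passing to the limit: the additive constant in $K(f)\leqc L^\gF_{M_R}(f)$ arises, through the coding theorem, from the description complexity of the semimeasure $2^{-L^\gF_{M_R}(\cdot)}$, which a priori could grow with $R$, and indeed a contrived asymptotically optimal family could bake an $R$-dependent ``shortcut'' into the prior that beats $C^{\text{two-part}}$ on particular datasets. For the Transformer family of Theorem~\ref{thm:transformer-two-part-code} this issue does not arise, since $\gZ_{T,R}\subseteq\gZ_T$ forces $L^{\bigast,\text{two-part}}_{M_R}(Y\mid X)\ge\min_{z\in\gZ_T}(|z|-\log_2 p(Y\mid X;f_T^z))\eqc C^{\text{two-part}}(Y\mid X)$ with the single invariance-theorem constant; for the general statement one needs to argue (or additionally assume) that the family is specified by a fixed procedure so that these constants stay bounded. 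By contrast, the monotonicity and the upper-bound direction are essentially bookkeeping once the convergence $C^{\text{two-part}}_{T,R}\to C^{\text{two-part}}$ is in hand.
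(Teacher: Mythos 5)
Your proposal follows essentially the same route as the paper's proof: monotonicity from the fact that any program valid under resource bound $R$ remains valid under a larger bound, the upper bound from the definition of asymptotic optimality together with $K_{T,R}(f)\to K_T(f)\eqc K(f)$, and the lower bound from universality (Lemmas~\ref{lemma-two-part-f} and~\ref{lemma-two-part-f2}, i.e.\ Corollary~\ref{thm:universal-two-part-min}). You are in fact somewhat more careful than the paper on the convergence $C^{\text{two-part}}_{T,R}\to C^{\text{two-part}}$, arguing both directions explicitly via a minimizer $f^\ast$ and a resource bound $R^\ast$ admitting its shortest program, whereas the paper states this convergence more briefly.

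The obstacle you flag in the last paragraph is genuine and is not resolved by the paper's own proof either: the paper argues that $L^{\bigast,\text{two-part}}_{M_R}(Y\mid X)\geqc C^{\text{two-part}}(Y\mid X)$ holds ``for any $R$'' and then asserts it ``must also hold in the limit,'' but the additive constant supplied by Corollary~\ref{thm:universal-two-part-min} (via the coding theorem applied to the semimeasure $2^{-L^\gF_{M_R}(\cdot)}$) depends on the code $M_R$ and hence on $R$, so passing to the limit requires either uniformity of these constants over $R$ or an additional assumption that the family is uniformly specified; your sketch of a contrived family whose priors bake in increasingly complex ``shortcut'' hypotheses is exactly the failure mode that the stated definition of an asymptotically optimal family does not by itself exclude. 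So your caveat is a fair critique of the general statement rather than a gap relative to the paper, and your observation that the constructed Transformer family of Theorem~\ref{thm:transformer-two-part-code} avoids the issue (since its per-$R$ codelengths are bounded below through a single reference machine $T$, with one invariance constant) is the right way to see why the result holds for the codes the paper actually builds.
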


\begin{proof}
The proof has two parts: establishing the limit and showing monotonicity.

First, for the upper bound, the definition of an asymptotically optimal family states that for any resource bound $R$:
\begin{equation}
L^{\bigast,\text{two-part}}_{M_R}(Y \mid X) \leqc C^{\text{two-part}}_{T,R}(Y \mid X) = \min_{f \in \gF} \left( K_{T,R}(f) -\log_2 p(Y \mid X;f) \right). \end{equation}
As the resource bounds $R_t, R_s \to \infty$, the resource-bounded Kolmogorov complexity $K_{T,R}(f)$ converges to the standard Kolmogorov complexity $K_T(f)$. By the invariance theorem, $K_T(f) \eqc K(f)$, so the bound converges to the universal two-part codelength:
\begin{equation}
\lim_{R_t, R_s \to \infty} C^{\text{two-part}}_{T,R}(Y \mid X) \eqc C^{\text{two-part}}(Y \mid X).
\end{equation}
This establishes the upper bound for our limit:
\begin{equation}
\lim_{R_t, R_s \to \infty} L^{\bigast,\text{two-part}}_{M_R}(Y \mid X) \leqc C^{\text{two-part}}(Y \mid X).
\end{equation}
For the lower bound, we know from Corollary~\ref{thm:universal-two-part-min} that $C^{\text{two-part}}(Y \mid X)$ is the universal lower bound for any two-part code. Thus, for any $R$, $L^{\bigast,\text{two-part}}_{M_R}(Y \mid X) \geqc C^{\text{two-part}}(Y \mid X)$, which must also hold in the limit.

Since the limit is both upper- and lower-bounded by $C^{\text{two-part}}(Y \mid X)$ up to an additive constant, the equivalence holds.

Now we focus on showing monotonicity.
Let $R$ and $R'$ be two resource bounds such that $R'$ provides at least as many resources as $R$ (i.e., $R'_t \ge R_t$, $R'_s \ge R_s$). Any program that is computable within bounds $R$ is also computable within bounds $R'$. This implies that for any function $f$, $K_{T,R'}(f) \le K_{T,R}(f)$. Consequently, the bound on the codelength is monotonic: $C^{\text{two-part}}_{T,R'}(Y \mid X) \le C^{\text{two-part}}_{T,R}(Y \mid X)$.
\end{proof}

\subsection{Details of \zmap}
\label{sec:zmap-details}

We use the ALTA compiler~\citep{shaw2024alta} to construct (in Python) a function $\zmap(T,R,z)$ that generates Transformer weights such that the Transformer emulates the model function computed by a prefix Turing machine $T$ with program tape contents $z$ under a resource bound $R$.
Formally, we construct \zmap~to satisfy the following condition for all $x \in \gX$, $R \in \gR$, $z \in \gZ_{T,R}$, and $T \in \gT$:
\begin{equation}
m_R \left(\, \zmap(T, R, z) \,\right) = f^z_T,
\label{eq:zmap-conditions}
\end{equation}
where:
\begin{itemize}
    \item The mapping function $m_R$ prepends $R_s$ ``prompt tokens'' to the input and then runs the forward pass of the Transformer with parameters $h$, outputting the unnormalized logits prior to the final softmax. Recall that $R_s \in \N$ is a space resource bound.
    \item The output of $\zmap$ is a set of Transformer weights, with the particular model dimensions depending on $T$ and $R$.
    \item Recall $f^z_T \in \gF$ is the model function from $\gX$ to $\gL$ computed by prefix Turing machine $T \in \gT$ with program $z$, where $\gT$ is a class of universal prefix Turing machines which assumes specific encodings for $\gX$ and $\gL$ on the input and output tapes, $e_\gX$ and $e_\gL$, respectively. Also recall that $\gZ_{T,R}$ is the set of programs for $T$ that halt with a valid output under resource bounds $R$ for all inputs. (See \ref{sec:reference-machine-and-notation})

\end{itemize}

In this section we detail the construction of \zmap~and these supporting constructions, which are later used to prove the paper's key theorems.

\subsubsection{Transformer Input Preprocessing}

As described above, the mapping function $m_R$ involves prepending $R_s$ prompt tokens, and then running the forward pass of the Transformer.
Given input tokens $e_\gX(x) = (x_1, x_2, \cdots, x_{|x|}) \in \gV^*$, we prepend a sequence of $R_s$ prompt tokens $(p_1, p_2, \cdots, p_{R_s})$ from a separate vocabulary of size $R_s$. We then form the Transformer input sequence (prior to the embedding lookup) as follows:
\begin{equation}
    \texttt{START}, p_1, p_2, \cdots, p_{R_s}, \texttt{SEP}, x_1, x_2, \cdots, x_{|x|}, \texttt{END},
\end{equation}
where \texttt{START}, \texttt{SEP}, and \texttt{END} are special reserved tokens.

\subsubsection{Emulating Single-Tape Turing Machines}
\label{sec:single-tape-alta-program}

The function \zmap~generates Transformer weights that emulate a multi-tape prefix Turing machine, including decoding of the output tapes to a set of logits. Before we describe \zmap, we start with a simpler explanation of how a Transformer can emulate a single-tape Turing machine, by giving code for an ALTA program. We refer the reader to \citet{shaw2024alta} for an explanation of the ALTA language and compiler. 

\begin{figure}[ht]
\centering
\begin{minted}[
  frame=single,
  framesep=2mm, 
  fontsize=\fontsize{8}{9}\selectfont\ttfamily,
  ]{python}
class TransitionIn:
  state: int
  head_symbol: int

class Move(enum.Enum):
  LEFT = enum.auto()
  RIGHT = enum.auto()

class TransitionOut:
  state: int
  symbol: int | None
  move: Move | None
  halt: bool = False

TransitionFn = Callable[[TransitionIn], TransitionOut]

class MachineSpec:
  transition_fn: TransitionFn  # Turing machine transition function.
  num_states: int  # Turing machine number of states.
  num_symbols: int  # Number of tape symbols.
  num_steps: int  # Time resource bound.
  num_registers: int  # Space resource bound.
\end{minted}
\caption{Transition function definition for standard single-tape Turing machine.}
\label{fig:single_tape_turing_machine}
\end{figure}

\begin{figure}[ht]
\centering
\begin{minted}[
  frame=single,
  framesep=2mm, 
  fontsize=\fontsize{8}{9}\selectfont\ttfamily,
  ]{python}
from alta import program_builder as pb

def build_turing_machine_program_spec(spec: MachineSpec) -> pb.ProgramSpec:
  """Returns ALTA program spec for a Turing machine emulator."""

  variables = {
      "halted": pb.var(2),
      "state": pb.var(spec.num_states),
      "symbol": pb.input_var(spec.num_symbols),
      "head": pb.input_var(2, init_fn=lambda x: x == START),
      "one": pb.var(2, default=1),  # Constant for queries.
  }
  attention_heads = {
      # Read the symbol at the current head position.
      "head_symbol": pb.qkv("one", "head", "symbol"),
      # Identify tape cells to the left and right of the head.
      "head_left": pb.v_relative("head", -1),
      "head_right": pb.v_relative("head", 1),
  }

  def ffn_fn(x):
    """FFN function for Turing machine emulator."""
    if x["halted"]:
      return

    # Get the next action from the state transition function.
    output = spec.transition_fn(
        TransitionIn(state=x["state"], head_symbol=x["head_symbol"])
    )
    x["state"] = output.state
    if output.halt:
      x["halted"] = 1

    # Write a new symbol at current head position if specified.
    if x["head"] and output.symbol is not None:
      x["symbol"] = output.symbol

    # Move the tape head to left or right if specified.
    if output.move is not None and x["head"]:
      x["head"] = 0
    if output.move == Move.RIGHT and x["symbol"] != START and x["head_left"]:
      x["head"] = 1
    elif output.move == Move.LEFT and x["symbol"] != END and x["head_right"]:
      x["head"] = 1

  return pb.program_spec(
      ffn_fn=ffn_fn, variables=variables, heads=attention_heads,
      output_name="symbol", input_range=spec.num_symbols + 2, position_range=None,
  )
\end{minted}
\caption{ALTA program specification for emulating a single-tape Turing machine.}
\label{fig:single_tape}
\end{figure}

To start, we define data structures to represent the transition function of a single-tape Turing machine in Figure~\ref{fig:single_tape_turing_machine}. Next, in Figure~\ref{fig:single_tape}, we include a function that, given a transition function, defines an ALTA program that can be compiled to a Transformer, which emulates the provided Turing machine. The Transformer expects, as input, a sequence representing the initial state of the machine's tape.

While there are potentially many ways to emulate a Turing machine in a Transformer, representing each register at a different input position, and using relative position representations to facilitate shifting the attention head to the right or left, is a relatively straightforward implementation that efficiently leverages the Transformer's element-wise weight sharing and self-attention mechanism.

\subsubsection{Defining \zmap}

The sketch of the ALTA program for \zmap~roughly follows the program for a single-tape Turing machine from~\ref{sec:single-tape-alta-program}, extended to handle multiple tapes: the input, program, work, and output tapes.

To facilitate ``decoding'', the output tape registers are represented differently in the Transformer than the registers for other tapes. Recall the encoding of the logits on the output tapes as specified in~\ref{sec:reference-machine-and-notation}. The first bit written to the Turing machine's tape specifies the sign of the logit, and therefore for this first bit we set the value of a binary ``sign'' variable. The number of subsequent $1$s written to the Turing machine's tape specifies the numerator. For each $1$ we increment a numerical variable ``sum'' by $+1$ or $-1$ depending on the ``sign'' variable. This ``sum'' variable is only non-zero at the \texttt{START} position. After a $0$ is written to the Turing machine's tape, we transition to incrementing the denominator. This is represented by changing a binary ``key'' variable from $0$ to $1$ at subsequent positions each time the transition function specifies that a $1$ should be written (the ``key'' value is also initialized to $1$ at \texttt{START}). Finally, once the Turing machine has halted, an attention head attends to each position where ``key'' is $1$, and then averages over the ``sum'' variable, which is non-zero and equal to the numerator at exactly one position. The output of the attention head is therefore the logit value, equivalent to the value decoded from the Turing machine's corresponding output tape given the encoding function.

\begin{figure}[ht]
\centering
\begin{minted}[
  frame=single,
  framesep=2mm, 
  fontsize=\fontsize{8}{9}\selectfont\ttfamily,
  ]{python}
InputSequence = list[int]
Logits = list[float]
ModelFunction = Callable[[InputSequence], Logits]

def zmap(machine_spec: PrefixMachineSpec,
         program_prefix: list[int]) -> TransformerParameters:
  alta_program_spec = build_prefix_turing_machine_program_spec(
      machine_spec, program_prefix)
  return alta_compiler(alta_program_spec)

def mapping_fn(
    machine_spec: PrefixMachineSpec,
    params: TransformerParameters,
) -> ModelFunction:
  def model_fn(input_tokens: InputSequence) -> Logits:
    # Prepend prompt and add special tokens.
    transformer_input = preprocess_input(machine_spec, input_tokens)
    return run_transformer(params, machine_spec, transformer_input)
  return model_fn
\end{minted}
\caption{Pseudocode for \zmap~and corresponding mapping function.}
\label{fig:zmap_pseudocode}
\end{figure}

The pseudocode for both \zmap~and the corresponding mapping function are given in Figure~\ref{fig:zmap_pseudocode}. The \texttt{PrefixMachineSpec} specifies the transition function for a prefix Turing machine $T \in \gT$ as well as resource bounds $R$. We omit the full implementation of the program specification for brevity. We verified the correctness of \zmap~using unit tests.

\subsubsection{ALTA Compiler Modifications}

We make several modifications to the original ALTA compiler detailed in~\citet{shaw2024alta} to reduce the number of compiled weights, which is necessary to support our later proofs. ALTA programs represent the behavior of the MLP sub-layer as a set of \emph{MLP rules}. Each MLP rule compiles to a single dimension of the weight matrices in the MLP sub-layer. 

\begin{itemize}
    \item The original ALTA compiler represents all categorical variables as one-hot vectors. We introduce \emph{binary variables}, which function similarly to categorical variables with a range of $2$, but are represented in a single activation dimension, as either $0$ or $1$. This leads to one necessary modification to constructing the first MLP weight matrix in the case where a MLP rule needs to match against a binary variable being $0$. In this case, we include a $-1$ at the index corresponding to the binary variable. This is particularly useful to represent each bit of the program tape as a single weight.
    \item We don't require specifying fixed buckets for discretizing numerical variables. This means that numerical variables cannot be used as constraints in MLP rules, but this is not required for the \zmap~program. This also means that the number of weights in the MLP layer does not need to scale with the number of values the numerical variable can take on, which is useful given that the scale of the ``sum'' variables representing the logit numerators can grow arbitrarily.
    \item We add support for a MLP rule to increment a numerical variable by a fixed scalar. This is relatively straightforward to implement by including the fixed scalar at the index corresponding to the numerical variable in the second MLP weight matrix. This enables us to increment the ``sum'' variables by $+1$ or $-1$, with $2$ MLP rules covering both cases, regardless of the input value of the variable, by leveraging the Transformer's residual connection.
    \item We enable specifying multiple output variables for the Transformer. This way, the output projection can select the $|\gY|$ variables corresponding to the outputs of the attention heads that compute the final $|\gY|$ logit values.
\end{itemize}

\subsubsection{Compiled Weights}

The weights compiled by $\zmap$ have the following properties:
\begin{itemize}
    \item All layers share the same weights, as in a Universal Transformer~\citep{dehghani2019universal} or Looped Transformer~\citep{giannou2023looped}.
    \item The minimum dimensionality of the activations in the Transformer scales linearly with the number of input ($|\gV|$) and output ($|\gY|$) symbols and the number of Turing machine states (specified by $T \in \gT$), and does not grow with increasing resource bounds. Therefore, the number of weights in invariant to to the time resource bound $R_t$.
    \item The minimum number of attention heads similarly scales linearly with the number of output symbols, as each output tape requires 3 attention heads, and there is one output tape per output symbol. A total of 7 additional heads are needed for the program, input, and work tapes. However, the minimum number of attention heads does not scale with any resource bound.
    \item The minimum hidden dimension of the MLP layer is determined by the complexity of Turing machine's transition function, but similarly does not scale with any resource bound.
    \item The Transformer requires only relative position representations, and no absolute position representations. It requires separate bias terms only for relative positions $-1$ and $+1$, i.e. to attend to positions immediately to the left or right.
    \item The Transformer's input embedding table has $|\gV| + R_s + 3$ rows, i.e. embeddings for \texttt{START}, \texttt{SEP}, \texttt{END}, each token in the input vocabulary $\gV$, and $R_s$ prompt tokens, where $R_s$ is the space resource bound. As the embeddings for the prompt tokens therefore are the only weights that scale as a function of any resource bound, we discuss these weights specifically in the following section.
\end{itemize}

Therefore, the number of necessary weights, i.e. the hypothesis space, is a function of $T$ and $R$, but not $z$. The weight dimensions are minimums because the compiled weights can always be padded with $0$s or $-1$s. Also note that Transformers compiled by ALTA exactly compute a symbolic program in the limit as the configurable attention matrix scalar goes to $\infty$, such that the compiled Transformers implement ``hard attention''~\citep{perez2021attention}. All weight values belong to a small set of unique values, and therefore require only finite precision to represent.

\subsubsection{Prompt Embeddings}
\label{sec:prompt-embeddings}

The prompt token embeddings for prompt tokens $p_1, p_2, \cdots, p_{R_s}$ produced by \zmap~for a given program $z = z_1, z_2, \cdots, z_{|z|}$ are specified as follows:
\begin{equation}
  \begin{blockarray}{ccccc}
    \begin{block}{c[cccc]}
      p_1 & b^z_1 & w_1 & \cdots & w_{|w|}  \\
      p_2 & b^z_2 & w_1 & \cdots & w_{|w|}   \\
       \vdots & \vdots & \vdots & \ddots & \vdots   \\
      p_{R_s} & b^z_{R_s} & w_1 & \cdots & w_{|w|}   \\
    \end{block}
  \end{blockarray}
\end{equation}
where $(w_1, \cdots, w_{|w|})$ is a finite-length weight vector that does not depend on $z$ or any resource bound, and the weights $(b^z_1, \cdots, b^z_{R_s})$ encode the $|z|$ bits of $z$ in $R_s > |z|$ weights as follows:
\begin{equation}
b^z_i = \begin{cases}
1.0, & i <= |z| \land z_i = 1 \\
-1.0, & i <= |z| \land z_i = 0 \\
r, & i > |z|,
\end{cases}
\end{equation}
where $r$ can be chosen arbitrarily, as these weights represent bits on the program tape that, by construction, will never be read, because by definition the program halts after reading the final bit of $z$ and does not move the head for the unidirectional program tape any further. In the Transformer, this results in the attention head that ``reads'' the program tape never attending to these values. Therefore, we arbitrarily choose $r = 0.0$.

\subsubsection{Resource Bounds Discussion}
Since the compiled weights are the same across all layers, the resource bound $R_t$ denoting the maximum number of Turing machine steps -- and therefore the maximum number of Transformer layers -- does not affect the compilation. The Transformer requires $R_t + 2$ layers at inference time, as it executes the Turing machine's transition function once per layer, and then requires $2$ additional layers to compute the final logit values as described above.
The registers of the initially blank work and output tapes are also represented by the prompt tokens, and therefore the number of prompt tokens determines both the maximum program length and maximum number of registers on the work and output tapes.

Notably, because only the number of prompt tokens grows with increasing space resource bound $R_s$, and not any of the other aspect of the Transformer, as this bound becomes larger an increasing proportion of the Transformer's overall weights are allocated to embeddings of prompt tokens, rather than, e.g. the weights of the MLP layer.  This is quite different from how Transformer parameter counts are conventionally scaled. It's possible there are alternative ways of emulating a prefix Turing machine that would lead to different scaling behavior, e.g. by effectively representing the program within the MLP layer, but we leave consideration of this to future work.

Future work could also consider separate resource bounds for the maximum number of program tape registers (bounding the model's capacity to represent increasingly complex functions) and the number of work tape registers (bounding the model's memory capacity during the forward pass).

\subsection{Proof of Theorem~\ref{thm:transformer-two-part-code}}
\label{sec:app-proof-transformer-two-part-code}

\begin{thm*}[Theorem ~\ref{thm:transformer-two-part-code} restated]

There exists an \emph{asymptotically optimal} family of two-part codes for Transformer encoders.
\end{thm*}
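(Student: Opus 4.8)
The plan is to mimic the construction of the universal two-part code (Proposition~\ref{thm:universal-two-part-existence}), but resource-bounded and realized inside a Transformer encoder, using the compiler $\zmap$ built in Section~\ref{sec:zmap-details}. Fix a reference universal prefix Turing machine $T \in \gT$. For each $R = (R_t, R_s) \in \gR$, take $\gH_{M_R}$ to be the countable, finite-precision parameter space of the weight-tied Transformer encoder of Section~\ref{sec:zmap-details}, with $R_t + \gO(1)$ layers and $\gO(R_s)$ prompt tokens (enough to hold the program, work, and output tape registers of $T$ under the space bound $R_s$), and let $m_{M_R} : \gH_{M_R} \to \gF$ send a parameter vector to the model function its forward pass computes; this is a total map since the forward pass always halts. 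Section~\ref{sec:zmap-details} supplies a computable $\zmap$ with $m_{M_R}(\zmap(T,R,z)) = f^z_T$ for every $z \in \gZ_{T,R}$, and by definition $\{f \in \gF : K_{T,R}(f) < \infty\} = \{f^z_T : z \in \gZ_{T,R}\}$, so the model functions realized through $\zmap$-hypotheses are exactly those of finite resource-bounded complexity under $T$.

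Next I would equip $M_R$ with the prior
\[
\alpha_{M_R}(h) \;=\; \sum_{z \in \gZ_{T,R} ~:~ \zmap(T,R,z) = h} 2^{-|z|},
\]
setting $\alpha_{M_R}(h) = 0$ off the image of $\zmap$. This is a semimeasure: reorganizing the sum over fibres, $\sum_h \alpha_{M_R}(h) = \sum_{z \in \gZ_{T,R}} 2^{-|z|} \le \sum_{z \in \gZ_T} 2^{-|z|} \le 1$ by Kraft's inequality on the prefix-free halting set $\gZ_T$ of $T$ (and $\gZ_{T,R} \subseteq \gZ_T$). It is also computable: $z \in \gZ_{T,R}$ is decidable because a run of $T$ using at most $R_s$ registers can only inspect the first $\gO(R_s)$ symbols of its input tape, so the requirement that $T$ halt with a valid output within the bound $R$ on every input reduces to a finite simulation over all input-tape prefixes of length $\gO(R_s)$; combined with computability of $\zmap$, this makes $\alpha_{M_R}$ computable, hence lower semicomputable. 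So $M_R = \langle \gH_{M_R}, m_{M_R}, \alpha_{M_R}\rangle$ is a legitimate two-part code, and $-\log_2 \alpha_{M_R}(\zmap(T,R,z)) \le |z|$ for all $z \in \gZ_{T,R}$.

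To finish I would verify the defining inequality $L^{\bigast,\text{two-part}}_{M_R}(Y \mid X) \leqc C^{\text{two-part}}_{T,R}(Y \mid X)$. Fix a dataset $(X,Y)$ and a bound $R$; if $C^{\text{two-part}}_{T,R}(Y \mid X) = \infty$ there is nothing to show, so assume it is finite and let $f^\star$ attain $C^{\text{two-part}}_{T,R}(Y \mid X) = K_{T,R}(f^\star) - \log_2 p(Y \mid X; f^\star)$ (attained, since $-\log_2 p(Y \mid X; f) \ge 0$ leaves only finitely many candidates $f$). Pick a shortest $z^\star \in \gZ_{T,R}$ with $f^{z^\star}_T = f^\star$, so that $|z^\star| = K_{T,R}(f^\star)$, and put $h^\star = \zmap(T,R,z^\star)$; then $m_{M_R}(h^\star) = f^\star$ and $-\log_2 \alpha_{M_R}(h^\star) \le |z^\star|$, so
\[
L^{\bigast,\text{two-part}}_{M_R}(Y \mid X) \;\le\; L^{\text{two-part}}_{M_R}(Y \mid X; h^\star) \;\le\; K_{T,R}(f^\star) - \log_2 p(Y \mid X; f^\star) \;=\; C^{\text{two-part}}_{T,R}(Y \mid X).
\]
This holds with plain $\le$ (hence certainly $\leqc$) for every $R$ and $(X,Y)$, so $\{M_R \mid R \in \gR\}$ is an asymptotically optimal family with respect to $T$, and Proposition~\ref{thm:asymptotically-optimal-two-part-codes} delivers the limit as $R_t, R_s \to \infty$.

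The real work is already done inside $\zmap$ (Section~\ref{sec:zmap-details}): constructing a weight-tied encoder that uses only relative $\pm 1$ positional attention, faithfully simulates a multi-tape prefix Turing machine, and decodes its output tapes into rational logits, with a weight count that is independent of $R_t$ and grows with $R_s$ only through a prompt-embedding table. Given $\zmap$, the remaining obstacles are essentially bookkeeping: certifying that $\alpha_{M_R}$ is a genuine lower semicomputable semimeasure (semimeasure by Kraft on the prefix-free halting set; computability via the observation that the space bound caps how much of any input can be read, so the universal quantifier over inputs is decidable), and aligning the two resource notions so that the Transformer's layer count and context window are exactly large enough for every $f$ with $K_{T,R}(f) < \infty$ to be realized by $\zmap$ of one of its shortest programs — which is what makes the displayed bound tight.
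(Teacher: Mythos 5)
Your construction is essentially the paper's own proof: the same $\zmap$-based hypothesis space and mapping function, a prior supported on the image of $\zmap$ with $-\log_2 \alpha_{M_R}(\zmap(T,R,z)) \le |z|$, and the conclusion that the minimum two-part codelength is bounded by $C^{\text{two-part}}_{T,R}(Y \mid X)$. The only differences are cosmetic refinements (summing the prior over fibres of $\zmap$, arguing via an explicit witness $h^\star$ rather than the paper's Lemma~\ref{lemma-two-part-f} rewriting, and spelling out the semimeasure/computability check that the paper treats as trivial), so the argument is correct and follows the paper's route.
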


\begin{proof}
Our proof builds on the construction of the function $\zmap$, detailed in \ref{sec:zmap-details}, which generates Transformer parameters satisfying \eqref{eq:zmap-conditions}.

Let $T \in \gT$ be a universal prefix Turing machine. We can define a family of two-part codes $\{ U_R \mid R \in \gR \}$ that is asymptotically optimal with respect to $T$ as follows.
\begin{itemize}
    \item The hypothesis space $\gH_{U_R}$ is the Transformer parameter space specified by the codomain of $\zmap$ given $T$ and $R$.
    \item The mapping function $m_{U_R}$ is the mapping function described in \ref{sec:zmap-details}, where $m_{U_R}(h)$ consists of prepending $R_s$ prompt tokens and running a Transformer forward pass with parameters $h$. 
    \item We can trivially define the prior as $\alpha_{U_R}(h) = 2^{-|z|}$ if there exists $z \in \gZ_{T,R}$ such that $h = \zmap(T,R,z)$, or $0$ otherwise. This prior therefore assigns non-zero probability only to those specific Transformer parameters that correspond to a valid Turing machine emulation.
\end{itemize}

Now, we must show that this family is asymptotically optimal with respect to $T$, meaning we must prove that for any resource bound $R \in \gR$ and dataset $(X,Y)$:
\begin{equation}
L^{\bigast,\text{two-part}}_{U_R}(Y \mid X) \leqc C^{\text{two-part}}_{T,R}(Y \mid X).
\end{equation}
Using Lemma~\ref{lemma-two-part-f}, we can express the codelength on the left as a minimum over functions:
\begin{equation}
 L^{\bigast,\text{two-part}}_{U_R}(Y \mid X) = \min_{f \in \mathrm{Im}(m_{U_R})} \left( L^\gF_{U_R}(f) - \log_2 p(Y|X;f) \right).
 \end{equation}
The function description length, $L^\gF_{U_R}(f)$, is the codelength of the shortest hypothesis that computes $f$. Given our prior, a hypothesis $h$ has a finite codelength only if it is the output of $\zmap$ for some program $z \in \gZ_{T,R}$. Therefore, the minimization is effectively over programs $z$:
\begin{align*}
    L^\gF_{U_R}(f) &= \min_{h \in \gH_{U_R}  ~:~ m_{U_R}(h)=f}  -\log_2 \alpha_{U_R}(h) \\
    &= \min_{z \in \gZ_{T,R}  ~:~ m_{U_R}(\zmap(T,R,z))=f} -\log_2 2^{-|z|} \\
    &= \min_{z \in \gZ_{T,R}  ~:~ f^z_T = f} |z|.
\end{align*}
The final step follows from \eqref{eq:zmap-conditions}, and this final expression is precisely the definition of the resource-bounded Kolmogorov complexity of $f$ with respect to $T$, i.e., $K_{T,R}(f)$. So, we have shown that $L^\gF_{U_R}(f) = K_{T,R}(f)$.

Substituting this equality back into the expression for the total codelength, we find:
\begin{align*}
L^{\bigast,\text{two-part}}_{U_R}(Y \mid X) &= \min_{f \in \im(m_{U_R})} \left( K_{T,R}(f) - \log_2 p(Y|X;f) \right) \\
&= C^{\text{two-part}}_{T,R}(Y \mid X).
\end{align*}
This step holds because the image of our constructed mapping function includes all functions computable by $T$ within bounds $R$.

Since we have shown that $L^{\bigast,\text{two-part}}_{U_R}(Y \mid X)$ is not just bounded by, but is \emph{equal} to $C^{\text{two-part}}_{T,R}(Y \mid X)$, the condition for being an asymptotically optimal family is satisfied.
\end{proof}

\subsection{Relations Between Codelengths}
\label{sec:relations-between-codes}

In this section we establish upper and lower bounds on the minimums of various classes of codes. To this end, we review Bayesian codes, and establish a universal lower bound on the minimums of Bayesian and variational codes. These results are used to prove Proposition~\ref{thm:codelength-relations}.

\subsubsection{Bayesian Codes}
Under a Bayesian code, labels are encoded according to their likelihood under a Bayesian posterior distribution, defined below. 
 A Bayesian code $M$, just like a two-part code, is specified by a hypothesis space $\gH_M$, a mapping $m_M$ from hypothesis to model functions, and a lower semicomputable semimeasure (i.e., prior) $\alpha_M(h)$.

The Bayesian codelength is defined with respect to the Bayesian marginal likelihood of the data:
\begin{align}
    L^{\bigast,\text{bayes}}_M(Y \mid X) &= -\log_2 p^{\text{marginal}}_M(Y \mid X) \\
        &= -\log_2 \sum_{h \in \gH_M} p(Y \mid X;m_M(h))~\alpha_M(h).
\end{align}
While the summation over all hypotheses is typically intractable, the Bayesian codelength serves as a theoretical lower bound and motivation for variational codes.

\subsubsection{Universal Bayesian Codes}

Analogously to two-part codes, we can show that there exists an equivalence class of universal Bayesian codes.

\begin{defn}[universal Bayesian code]
A Bayesian code $M^1$ is a \emph{universal Bayesian code} if and only if, for any other Bayesian code $M^2$ and for all $X,Y$:
\begin{equation}
L^{\bigast,\text{bayes}}_{M^1}(Y \mid X) \leqc L^{\bigast,\text{bayes}}_{M^2}(Y \mid X).
\end{equation}
\end{defn}

\begin{lemma}\label{thm:universal-bayesian-existence}
There exists a universal Bayesian code.
\end{lemma}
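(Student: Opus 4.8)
The plan is to mirror the construction and argument used for Proposition~\ref{thm:universal-two-part-existence}. Fix a universal prefix Turing machine $T \in \gT$ and define a Bayesian code $U$ with hypothesis space $\gH_U = \B$, mapping $m_U(h) = f_T^h$, and prior $\alpha_U(h) = 2^{-|h|}$ for $h \in \gZ_T$ and $\alpha_U(h)=0$ otherwise. Since the set of halting programs of $T$ is prefix-free, $\gZ_T$ is prefix-free, so Kraft's inequality gives $\sum_h \alpha_U(h) \le 1$; together with lower semicomputability of $\alpha_U$, this makes $U$ a valid Bayesian code, and by universality of $T$ we have $\im(m_U) = \gF$. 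It then remains to show $L^{\bigast,\text{bayes}}_U(Y \mid X) \leqc L^{\bigast,\text{bayes}}_M(Y \mid X)$ for every Bayesian code $M$.

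The key reduction is to rewrite each Bayesian marginal likelihood as a mixture over \emph{model functions} rather than hypotheses. For any Bayesian code $M$, define the pushforward $\alpha^\gF_M(f) := \sum_{h \in \gH_M \,:\, m_M(h) = f} \alpha_M(h)$. Grouping the defining sum of $p^{\text{marginal}}_M(Y \mid X)$ by the function computed by each hypothesis yields $p^{\text{marginal}}_M(Y \mid X) = \sum_{f \in \gF} \alpha^\gF_M(f)\, p(Y \mid X; f)$, and the same manipulation applied to $U$ gives $p^{\text{marginal}}_U(Y \mid X) = \sum_{f \in \gF} \alpha^\gF_U(f)\, p(Y \mid X; f)$ with $\alpha^\gF_U(f) = \sum_{h \,:\, f^h_T = f} 2^{-|h|} \ge 2^{-K_T(f)}$. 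A short computation shows $\sum_{f \in \gF} \alpha^\gF_M(f) = \sum_{h \in \dom(m_M)} \alpha_M(h) \le 1$, so $\alpha^\gF_M$ is a semimeasure over $\gF$; together with lower semicomputability (see the obstacle below), this lets us invoke the coding theorem in its form for discrete functions (Appendix~\ref{sec:app-coding-theorem}) to conclude $K(f) \leqc -\log_2 \alpha^\gF_M(f)$, i.e. $\alpha^\gF_M(f) \le 2^{c_M}\,2^{-K(f)}$ for a constant $c_M$ independent of $f$.

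It then remains to compare the two mixtures. By the invariance theorem, $K_T(f) \le K(f) + c'$ for a constant $c'$, so for every $f$ we have $\alpha^\gF_U(f) \ge 2^{-K_T(f)} \ge 2^{-c'}\,2^{-K(f)} \ge 2^{-(c_M + c')}\,\alpha^\gF_M(f)$. Multiplying by the nonnegative weights $p(Y \mid X; f)$ and summing over $\gF$ gives $p^{\text{marginal}}_U(Y \mid X) \ge 2^{-(c_M + c')}\, p^{\text{marginal}}_M(Y \mid X)$, and taking $-\log_2$ of both sides yields $L^{\bigast,\text{bayes}}_U(Y \mid X) \le L^{\bigast,\text{bayes}}_M(Y \mid X) + (c_M + c')$. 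Since the constant does not depend on $X$ or $Y$, this is exactly $L^{\bigast,\text{bayes}}_U(Y \mid X) \leqc L^{\bigast,\text{bayes}}_M(Y \mid X)$, so $U$ is a universal Bayesian code (and, as with the two-part case, its minimum equals $C^{\text{bayes}}(Y\mid X)$ up to an additive constant, feeding into Proposition~\ref{thm:codelength-relations}).

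The step I expect to be the main obstacle is establishing that $\alpha^\gF_M$ is genuinely a \emph{lower semicomputable} semimeasure over $\gF$, so that the coding theorem applies: the semimeasure inequality is immediate, but because equality of computable model functions is undecidable, one cannot naively decide which hypotheses $h$ contribute to $\alpha^\gF_M(f)$ while maintaining a monotone lower approximation. This is handled with the standard machinery of algorithmic information theory over effectively presented countable domains (the same machinery already presupposed by the coding theorem for functions in Appendix~\ref{sec:app-background}); in particular it suffices to exhibit \emph{some} lower semicomputable semimeasure over $\gF$ dominating $\alpha^\gF_M$, which can be constructed directly from the lower approximation to $\alpha_M$ together with the computation of $m_M$ on growing finite sets of inputs. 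The remaining steps are routine regroupings of nonnegative sums and invocations of results already established in the excerpt.
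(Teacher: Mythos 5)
Your proposal is correct and follows essentially the same route as the paper's proof: the same code $U$ with hypothesis space $\B$, mapping $h \mapsto f_T^h$ and prior $2^{-|h|}$, the same regrouping of the marginal likelihood via the pushforward $\alpha^\gF_M(f)=\sum_{h:m_M(h)=f}\alpha_M(h)$, the same coding-theorem domination $K(f) \leqc -\log_2 \alpha^\gF_M(f)$ combined with $\alpha^\gF_U(f)\ge 2^{-K_T(f)}$, and the same pointwise comparison of the two mixtures yielding the additive constant. The only difference is that you explicitly flag the lower semicomputability of the pushforward over $\gF$ as a subtlety (the paper asserts it without comment), and your suggested fix via a dominating lower semicomputable semimeasure is a reasonable way to make that step precise.
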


\begin{proof}
Consider a Bayesian code $U$ defined identically to the two-part code in the proof of Proposition~\ref{thm:universal-two-part-existence}, with respect to universal prefix Turing machine $T$. We will show that $U$ is a universal Bayesian code.

Similarly to Lemma~\ref{lemma-two-part-f}, we can re-write the codelength for any Bayesian code $M$ as a sum over model functions rather than hypotheses.
\begin{align}
L^{\text{bayes}}_{M}(Y \mid X) &= -\log_2 \sum_{h \in \gH_{M}} p(Y \mid X;m_{M}(h)) ~\alpha_{M}(h) \\
 &= -\log_2 \sum_{f \in \im(m_M)} p(Y \mid X;f) ~\alpha^\gF_{M}(f),
\end{align}
where:
\begin{equation}
        \alpha^\gF_M(f) = 
\sum_{h \in \gH_M  ~:~ m_M(h) = f} \alpha_M(h)
\end{equation}
or $0$ if no such $h$ exists. 
Similarly to Lemma~\ref{lemma-two-part-f2}, we have $K(f) \leqc -\log_2 \alpha^\gF_M(f)$, because $\alpha^\gF_M(f)$ is a lower semicomputable semimeasure, as by definition $\alpha_M(h)$ is a lower semicomputable semimeasure.
Additionally, for our proposed universal code, 
the quantity $\alpha^\gF_U(f)$ is recognizable as the  \emph{algorithmic probability} of $f$~\citep{li2008introduction}, and we have $K(f) \eqc -\log_2 \alpha^\gF_U(f)$. We have already shown one side of this inequality above. The other side, $-\log_2 \alpha^\gF_U(f) \leqc K(f)$ holds because by definition $\alpha^\gF_U(f)$ includes $2^{-K_T(f)}$ as an element in the summation over non-negative elements. Combining these results, we have:
\begin{equation}
    -\log_2 \alpha^\gF_U(f) \eqc K(f) \leqc -\log_2 \alpha^\gF_M(f).
\end{equation}
Therefore, there exists some positive constant $c_M$ that does not depend on $f$ such that for all $f$:
\begin{equation}
    \alpha^\gF_U(f) \geq c_M * \alpha^\gF_M(f).
\end{equation}

We need to show that the following holds for any other Bayesian code $M$:
\begin{align}
L^{\bigast,\text{bayes}}_U(Y \mid X) &\leqc L^{\bigast,\text{bayes}}_M(Y \mid X).
\end{align}

We can re-write the left side as:
\begin{align}
L^{\bigast,\text{bayes}}_U(Y \mid X) &= -\log_2 \sum_{f \in \gF} p(Y \mid X;f) \alpha^\gF_U(f) \\
        &\leqc -\log_2 \sum_{f \in \gF} p(Y \mid X;f) c_M * \alpha^\gF_M(f) \\
        &= -\log_2 c_M - \log_2 \sum_{f \in \gF} p(Y \mid X;f) \alpha^\gF_M(f) \\
        &\leqc -\log_2 \sum_{f \in \gF} p(Y \mid X;f) \alpha^\gF_M(f).
\end{align}
This expression is the definition of $L^{\bigast,\text{bayes}}_M(Y \mid X)$, and therefore we have demonstrated that $U$ is a universal Bayesian code.
\end{proof}

\begin{cor}
For any universal Bayesian code $M$:
\begin{equation}
C^{\text{bayes}}(Y \mid X) := -\log_2 \sum_{f \in \gF} 2^{-K(f)} p(Y \mid X;f) \eqc L^{\bigast,\text{bayes}}_M(Y \mid X)
\end{equation}
\end{cor}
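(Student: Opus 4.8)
The plan is to mirror the proof of Corollary~\ref{thm:universal-two-part-min}, specializing first to the concrete universal Bayesian code $U$ constructed in the proof of Lemma~\ref{thm:universal-bayesian-existence}, computing its minimum codelength explicitly, and then transferring the result to an arbitrary universal Bayesian code using the definition.

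First I would recall that for the code $U$ (defined with respect to a universal prefix Turing machine $T$ exactly as in the proof of Proposition~\ref{thm:universal-two-part-existence}), the proof of Lemma~\ref{thm:universal-bayesian-existence} already supplies the two ingredients we need: the Bayesian codelength rewrites as a sum over model functions,
\begin{equation*}
L^{\bigast,\text{bayes}}_U(Y \mid X) = -\log_2 \sum_{f \in \gF} p(Y \mid X;f)\,\alpha^\gF_U(f),
\end{equation*}
and the marginalized prior $\alpha^\gF_U(f) = \sum_{h : m_U(h)=f}\alpha_U(h)$ is the algorithmic probability of $f$, satisfying $-\log_2 \alpha^\gF_U(f) \eqc K(f)$.

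Next I would turn this additive equivalence into an additive equivalence of codelengths. From $-\log_2\alpha^\gF_U(f) \eqc K(f)$ there is a constant $c$, independent of $f$, with $2^{-c}\,2^{-K(f)} \le \alpha^\gF_U(f) \le 2^{c}\,2^{-K(f)}$ for all $f \in \gF$. Since the weights $p(Y \mid X;f)$ are nonnegative, multiplying termwise and summing preserves these bounds,
\begin{equation*}
2^{-c}\!\sum_{f \in \gF}\! 2^{-K(f)} p(Y \mid X;f) \;\le\; \sum_{f \in \gF}\! \alpha^\gF_U(f)\, p(Y \mid X;f) \;\le\; 2^{c}\!\sum_{f \in \gF}\! 2^{-K(f)} p(Y \mid X;f),
\end{equation*}
and applying $-\log_2$ yields $L^{\bigast,\text{bayes}}_U(Y \mid X) \eqc C^{\text{bayes}}(Y \mid X)$. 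Finally, for an arbitrary universal Bayesian code $M$, the definition of universality applied in both directions (both $M$ and $U$ are universal) gives $L^{\bigast,\text{bayes}}_M(Y \mid X) \eqc L^{\bigast,\text{bayes}}_U(Y \mid X)$, hence $L^{\bigast,\text{bayes}}_M(Y \mid X) \eqc C^{\text{bayes}}(Y \mid X)$, which is the claim.

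The proof is essentially bookkeeping, so there is no serious obstacle; the one point requiring care is the middle step, namely verifying that the multiplicative constant $2^{\pm c}$ extracted from $-\log_2\alpha^\gF_U(f) \eqc K(f)$ is genuinely uniform over all $f \in \gF$ and therefore survives summation over the (countably infinite) set $\gF$ — which it does precisely because the bound is termwise and every summand is nonnegative.
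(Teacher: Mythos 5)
Your proof is correct and follows essentially the same route as the paper: compute the minimum codelength of the concrete universal Bayesian code $U$ by rewriting the sum over hypotheses as a sum over model functions, use $-\log_2\alpha^\gF_U(f)\eqc K(f)$ to pass to $C^{\text{bayes}}(Y\mid X)$, and then transfer to an arbitrary universal Bayesian code $M$ via the definition of universality. The one difference is cosmetic but welcome: the paper simply writes the middle $\eqc$ step inside the logarithm of the sum, whereas you correctly unpack it into the uniform two-sided multiplicative bound $2^{-c}2^{-K(f)}\le\alpha^\gF_U(f)\le 2^{c}2^{-K(f)}$ applied termwise before summing and taking $-\log_2$ — which is exactly the justification needed (and the same style of argument the paper uses inside the proof of Lemma~\ref{thm:universal-bayesian-existence}), just made explicit.
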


\begin{proof}
Let $U$ be the universal Bayesian code constructed in the proof of Lemma 2. Its minimum codelength is equivalent to $C^{\text{bayes}}(Y|X)$, shown by rewriting the sum over hypotheses as a sum over functions and applying our previous result that $K(f) \eqc -\log_2 \alpha_{U}^\gF(f)$.
\begin{align*}
    L_{U}^{*,\text{bayes}}(Y \mid X) &= -\log_2 \sum_{f \in \mathcal{F}} p(Y \mid X;f) \alpha_{U}^\gF(f) \\
    &\eqc -\log_{2}\sum_{f\in\mathcal{F}}p(Y \mid X;f)2^{-K(f)} \\
    &= C^{\text{bayes}}(Y \mid X).
\end{align*}
Now, let $M$ be any other universal Bayesian code. By definition, $L_{M}^{*,\text{bayes}}(Y|X) \eqc L_{U}^{*,\text{bayes}}(Y|X)$. It directly follows that $L_{M}^{*,\text{bayes}}(Y|X) \eqc C^{\text{bayes}}(Y|X)$.
\end{proof}

\subsubsection{Universal Bayesian Codes and Universal Two-Part Codes}
\label{sec:relation-bayesian-two-part}

\begin{lemma}\label{thm:relation-bayesian-two-part}
The minimum of any universal Bayesian code is less than or equal to the minimum of any universal two-part code up to an additive term, i.e.:
\begin{equation}
C^{\text{bayes}}(Y \mid X) \leqc C^{\text{two-part}}(Y \mid X)
\end{equation}
\end{lemma}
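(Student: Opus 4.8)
The plan is to use the elementary observation that a sum of non-negative terms dominates any single one of its terms, together with the monotonicity of $-\log_2$. Concretely, I would start from the definition $C^{\text{bayes}}(Y \mid X) = -\log_2 \sum_{f \in \gF} 2^{-K(f)} p(Y \mid X;f)$ and note that every summand is non-negative, so for any fixed $f^\ast \in \gF$,
\begin{equation*}
\sum_{f \in \gF} 2^{-K(f)} p(Y \mid X;f) \;\geq\; 2^{-K(f^\ast)} p(Y \mid X;f^\ast).
\end{equation*}

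Applying $-\log_2(\cdot)$, which is monotonically decreasing, gives
\begin{equation*}
C^{\text{bayes}}(Y \mid X) \;\leq\; -\log_2\!\left(2^{-K(f^\ast)} p(Y \mid X;f^\ast)\right) \;=\; K(f^\ast) - \log_2 p(Y \mid X;f^\ast).
\end{equation*}
Since this holds for every $f^\ast \in \gF$, it holds in particular for the $f^\ast$ achieving the minimum on the right-hand side of the definition $C^{\text{two-part}}(Y \mid X) = \min_{f \in \gF} K(f) - \log_2 p(Y \mid X;f)$, so $C^{\text{bayes}}(Y \mid X) \leq C^{\text{two-part}}(Y \mid X)$ for any fixed reference machine $T$ used to define $K$. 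Passing to the $\leqc$ statement then only requires absorbing the machine-dependent additive constant from the invariance theorem, which is already implicit in the notation $K(f)$.

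There is essentially no obstacle here: the only thing to be slightly careful about is that the minimum defining $C^{\text{two-part}}$ is actually attained (or, if one prefers to avoid that, replace ``the minimizer'' with ``an arbitrary $f^\ast$'' and take an infimum on the right, noting the inequality passes to the infimum). One could also remark, although it is not needed for the statement, that combined with Lemma~\ref{thm:relation-bayesian-two-part}'s companion results this shows the Bayesian bound sits below the two-part bound in Figure~\ref{fig:codelength_comparisons}, consistent with the intuition that averaging over hypotheses (rather than committing to a single MAP hypothesis) can only help compression.
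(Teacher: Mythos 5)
Your proof is correct and follows essentially the same route as the paper's: both arguments observe that the nonnegative sum defining $C^{\text{bayes}}$ dominates any single term (the paper phrases this as $\sum \geq \max$ after rewriting the $\min$ in $C^{\text{two-part}}$ as $-\log_2 \max$, while you instantiate an arbitrary $f^\ast$ and then minimize), and both then apply the monotonicity of $-\log_2$. The remark about attainment of the minimum is a harmless hedge but not needed.
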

\begin{proof}

Recall the definitions of $C^{\text{bayes}}$ and 
$C^{\text{two-part}}$:
\begin{align}
C^{\text{bayes}}(Y \mid X) &\eqc -\log_2 \sum_{f \in \gF} 2^{-K(f)} p(Y \mid X;f) \\ 
C^{\text{two-part}}(Y \mid X) &\eqc \min_{f \in \gF} K(f) - \log_2 p(Y \mid X;f) \\
&= -\log_2 \max_{f \in \gF} 2^{-K(f)}  p(Y \mid X;f).
\end{align}

We can substitute these definitions into the inequality above, and then apply $\lambda x~~2^{-x}$ to both sides, which is monotonically decreasing, and therefore we also reverse the direction of the inequality:
\begin{align}
     \sum_{f \in \gF} 2^{-K(f)} p(Y \mid X;f) \geqc \max_{f \in \gF} 2^{-K(f)} p(Y \mid X;f) 
\end{align}
The inequality holds because the left-hand side is a sum of nonnegative elements where one of the elements is the element from the right-hand side.
\end{proof}

Note that per the above inequality, it is also clear that $C^{\text{two-part}}(Y \mid X)$ will decrease towards $C^{\text{bayes}}(Y \mid X)$ as more probability converges towards any single hypothesis.

\subsubsection{Universal Bayesian Codes and Kolmogorov Complexity}\label{sec:relation-bayesian-kolmogorov}

How does $C^{\text{bayes}}(Y \mid X)$ relate to $K(Y \mid X)$? Per the coding theorem, we know that $K(Y \mid X) \leqc C^{\text{bayes}}(Y \mid X)$. Establishing a bound in the other direction is more challenging. We have defined the sets of two-part and Bayesian codes with respect a probabilistic model that makes independent predictions for each individual $(x,y)$ pair, while the definition of $K(Y \mid X)$ makes no such probabilistic independence assumptions. However, $K(Y \mid X)$ and $C^{\text{bayes}}(Y \mid X)$ may often be roughly equal for large numbers of independent and identically distributed samples~\citep[Section 5]{li2008introduction}.

\subsubsection{Bayesian Codes and Variational Codes}
\label{sec:bayesian-lower-bound}

The variational codelength defined in \eqref{eq:variational-code} has a direct relationship to the Bayesian codelength. The difference between the variational codelength and the ideal Bayesian codelength is given by the KL divergence between the variational posterior $\beta_M(\cdot;\phi)$ and the true Bayesian posterior:
\begin{align}
L_M^{\text{var}}(Y \mid X,\phi) 
&= \text{KL}\left[ \beta_M(\cdot;\phi) \parallel p^{\text{posterior}}_M(\cdot \mid Y,X) \right] - \log_2 p^{\text{marginal}}_M(Y \mid X)
\label{eq:variational-data-likelihood}
\end{align}
where the Bayesian posterior is defined as:
\begin{equation}
    p^{\text{posterior}}_M(h \mid X, Y) = \frac{P(Y \mid X;m_M(h)) ~\alpha_M(h)}{p^{\text{marginal}}_M(Y \mid X)},
\end{equation}
and the Bayesian marginal likelihood is:
\begin{equation}
    p^{\text{marginal}}_M(Y \mid X) = \sum_{h \in \gH_M} p(Y \mid X;m_M(h))~\alpha_M(h).
    \label{eq:bayes-marginal}
\end{equation}
Minimizing the variational codelength $L_M^{\text{var}}(Y \mid X,\phi)$ with respect to $\phi$ is equivalent to maximizing the Evidence Lower Bound (ELBO) from variational inference. This process drives the variational posterior $\beta_M(\cdot;\phi)$ closer to the true Bayesian posterior $P^{\text{posterior}}_M(\cdot \mid X,Y)$ (by minimizing their KL divergence), and consequently brings the variational codelength closer to the corresponding Bayesian codelength $L^{\bigast,\text{bayes}}_M(Y \mid X)$. The variational codelength can thus be seen as an efficiently computable surrogate or approximation for the ideal Bayesian codelength.

\subsubsection{Proof of Proposition~\ref{thm:codelength-relations}}\label{sec:proof-of-thm-codelength-relations}

\begin{prop*}[Proposition~\ref{thm:codelength-relations} restated]
For any quasi-universal variational code $M$,
\begin{equation}
    K(Y \mid X) \leqc C^{\text{bayes}}(Y \mid X) \leqc L^{\bigast,\text{var}}_{M}(Y \mid X) \leqc C^{\text{two-part}}(Y \mid X),
\end{equation}
where $C^{\text{bayes}}(Y \mid X) := -\log_2 \sum_{f \in \gF} 2^{-K(f)} p(Y \mid X;f)$.
\end{prop*}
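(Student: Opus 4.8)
The plan is to verify the four-term chain one inequality at a time, drawing entirely on results already established. The rightmost relation, $L^{\bigast,\text{var}}_{M}(Y \mid X) \leqc C^{\text{two-part}}(Y \mid X)$, requires no work: it is precisely the defining property of a quasi-universal variational code, and it is the only place where quasi-universality is invoked. So the substance is in the two left inequalities, neither of which needs $M$ to be quasi-universal — they hold for \emph{any} variational code.

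For the leftmost inequality, $K(Y \mid X) \leqc C^{\text{bayes}}(Y \mid X)$, I would apply the coding theorem to the mixture $m(Y \mid X) := \sum_{f \in \gF} 2^{-K(f)}\, p(Y \mid X; f)$, which is exactly $2^{-C^{\text{bayes}}(Y\mid X)}$. First I would check that $m(\cdot \mid X)$ is a (conditional) semimeasure: $\sum_Y m(Y \mid X) = \sum_{f} 2^{-K(f)} \sum_Y p(Y \mid X; f) \le \sum_f 2^{-K(f)} \le 1$, using that $p(\cdot \mid X;f)$ is a probability distribution and that $2^{-K(\cdot)}$ satisfies Kraft's inequality over $\gF$. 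Second, $m(\cdot \mid X)$ is lower semicomputable, since $2^{-K(f)}$ is lower semicomputable (because $K$ is upper semicomputable), $p(Y \mid X;f)$ is computable for the rational-valued model functions in $\gF$, and a countable sum of lower semicomputable nonnegative terms is lower semicomputable. The conditional form of the coding theorem then yields $K(Y \mid X) \leqc -\log_2 m(Y \mid X) = C^{\text{bayes}}(Y \mid X)$.

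For the middle inequality, $C^{\text{bayes}}(Y \mid X) \leqc L^{\bigast,\text{var}}_{M}(Y \mid X)$, I would proceed in two steps. Let $M'$ be the Bayesian code obtained from $M$ by retaining $\gH_M$, $m_M$, and $\alpha_M$; this is a legitimate Bayesian code since $\alpha_M$ is a lower semicomputable semimeasure by Definition~\ref{def:two-part-code}. Step one: from \eqref{eq:variational-data-likelihood}, for every $\phi \in \Phi_M$ we have $L_M^{\text{var}}(Y \mid X,\phi) = \text{KL}\!\left[\beta_M(\cdot;\phi) \parallel p^{\text{posterior}}_{M}(\cdot \mid Y,X)\right] - \log_2 p^{\text{marginal}}_{M}(Y \mid X) \ge -\log_2 p^{\text{marginal}}_{M}(Y \mid X) = L^{\bigast,\text{bayes}}_{M'}(Y \mid X)$, using nonnegativity of the KL divergence; minimizing the left side over $\phi$ gives $L^{\bigast,\text{bayes}}_{M'}(Y \mid X) \le L^{\bigast,\text{var}}_{M}(Y \mid X)$. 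Step two: by Lemma~\ref{thm:universal-bayesian-existence} and its corollary, a universal Bayesian code $U$ exists with $C^{\text{bayes}}(Y \mid X) \eqc L^{\bigast,\text{bayes}}_{U}(Y \mid X)$, and universality gives $L^{\bigast,\text{bayes}}_{U}(Y \mid X) \leqc L^{\bigast,\text{bayes}}_{M'}(Y \mid X)$. Chaining the two steps yields $C^{\text{bayes}}(Y \mid X) \leqc L^{\bigast,\text{var}}_{M}(Y \mid X)$.

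I expect no deep obstacle here; the whole argument is a short assembly of prior lemmas. The one point demanding a little care is the first inequality — confirming that the algorithmic-probability mixture is genuinely a lower semicomputable \emph{conditional} semimeasure so that the conditional version of the coding theorem applies cleanly — and, more a matter of bookkeeping than difficulty, keeping straight the passage between the variational code $M$, its Bayesian counterpart $M'$, and the universal Bayesian code $U$ when invoking the $\eqc$/$\leqc$ relations.
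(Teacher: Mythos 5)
Your proposal is correct and follows essentially the same route as the paper: the rightmost inequality is the definition of quasi-universality, the middle one combines KL non-negativity in \eqref{eq:variational-data-likelihood} (lower-bounding the variational codelength by the induced Bayesian codelength) with the universal Bayesian code of Lemma~\ref{thm:universal-bayesian-existence}, and the leftmost is the coding theorem applied to the algorithmic-probability mixture. The only difference is that you spell out the conditional-semimeasure and lower-semicomputability checks that the paper leaves to its background discussion, which is a welcome but not a divergent addition.
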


\begin{proof}
This theorem combines several of our previous results.

The relationship $L_{M}^{\bigast,\text{var}}(Y \mid X) \leqc C^{\text{two-part}}(Y \mid X)$ follows directly from the definition of a quasi-universal variational code.

The relationship $C^{\text{bayes}}(Y \mid X) \leqc L_{M}^{\bigast,\text{var}}(Y \mid X)$ follows from a two-step argument. First, as established by the non-negativity of the KL divergence term in \eqref{eq:variational-data-likelihood} (discussed in Section~\ref{sec:bayesian-lower-bound}), any variational codelength is lower-bounded by its corresponding Bayesian codelength:
\begin{equation*}
    L_{M}^{\bigast,\text{bayes}}(Y \mid X) \leq L_{M}^{\bigast,\text{var}}(Y \mid X).
\end{equation*}
Second, by the existence of a universal Bayesian code (Lemma~\ref{thm:universal-bayesian-existence}), any specific Bayesian code is lower-bounded by the universal bound: 
\begin{equation*}
   C^{\text{bayes}}(Y \mid X) \leqc L_{M}^{\bigast,\text{bayes}}(Y \mid X).
\end{equation*}
Combining these two inequalities yields the desired result, with discussion relating $C^{\text{bayes}}(Y \mid X)$ and $C^{\text{two-part}}(Y \mid X)$ in \ref{sec:relation-bayesian-two-part}.

Finally, the relationship $K(Y \mid X) \leqc C^{\text{bayes}}(Y \mid X)$ follows from the discussion in \ref{sec:relation-bayesian-kolmogorov}. 
\end{proof}

\subsection{Analysis of Variational Codes}

This section provides additional definitions and analysis related to variational codes.

\subsubsection{Equivalence Between Two-Part and Variational Codes}
\label{sec:var-two-part-equivalence}

Here we show that variational codes can be seen as a generalization of two-part codes, with equivalence when we restrict the posterior hypothesis space to distributions that assign all probability to a single hypothesis.

Recall that for a two-part code we optimize the codelength over the hypothesis space directly. In a variational code, we optimize the codelength over a set of posterior parameters, which define a distribution over the hypothesis space. The codes are equivalent if we can establish a bijective mapping between the hypothesis space of a two-part code and the posterior parameter space of a variational code, such that for these corresponding values the codelengths are the same and the model functions are the same. We formalize this notion in the following definition.

\begin{defn} (equivalency for two-part and variational codes)
A variational code $M^{\text{var}}$ and two-part code $M^{\text{two-part}}$ are \emph{equivalent} if there exists a bijective mapping $u : \gH_{M^{\text{two-part}}} \rightarrow \Phi_{M^{var}}$ such that if $\phi = u(h)$ then $L^{\text{var}}_{M^{\text{var}}}(Y \mid X;\phi) = L^{\text{two-part}}_{M^{\text{two-part}}}(Y \mid X;h)$ and $p_{M^{\text{var}}}(y \mid x;\phi) = p(y \mid x;m_{M^{\text{two-part}}}(h))$.
\end{defn}

\begin{lemma}\label{thm:equiv-two-part-var}
For every two-part code $M^{\text{two-part}}$, there exists an equivalent variational code $M^{\text{var}}$.
\end{lemma}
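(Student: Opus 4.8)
The plan is to realize $M^{\text{two-part}}$ as the degenerate variational code whose posterior family consists only of point masses, so that ``marginalizing over the posterior'' collapses to evaluating a single hypothesis. Concretely, given $M^{\text{two-part}} = \langle \gH_{M^{\text{two-part}}}, m_{M^{\text{two-part}}}, \alpha_{M^{\text{two-part}}}\rangle$, I would define $M^{\text{var}}$ to share the same triplet, i.e.\ $\gH_{M^{\text{var}}} := \gH_{M^{\text{two-part}}}$, $m_{M^{\text{var}}} := m_{M^{\text{two-part}}}$, $\alpha_{M^{\text{var}}} := \alpha_{M^{\text{two-part}}}$; then take the posterior parameter space to be $\Phi_{M^{\text{var}}} := \gH_{M^{\text{two-part}}}$ and, for each $\phi \in \Phi_{M^{\text{var}}}$, let $\beta_{M^{\text{var}}}(\cdot\,;\phi)$ be the distribution over $\gH_{M^{\text{var}}}$ that assigns probability $1$ to $\phi$ and $0$ to every other hypothesis. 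Since $\gH$ is countable with decidable equality, this $\beta$ is a well-defined lower semicomputable family, so $M^{\text{var}}$ is a legitimate variational code. The candidate bijection is $u := \mathrm{id}_{\gH}$, which is trivially a bijection from $\gH_{M^{\text{two-part}}}$ onto $\Phi_{M^{\text{var}}}$.

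Next I would check the two equalities in the definition of equivalence, at $\phi = u(h) = h$. For the predictive distribution, $p_{M^{\text{var}}}(y \mid x;\phi) = \E_{h' \sim \beta_{M^{\text{var}}}(\cdot;\phi)}\, p(y\mid x;m_{M^{\text{var}}}(h'))$ collapses, since $\beta_{M^{\text{var}}}(\cdot;\phi)$ is a point mass at $\phi$, to $p(y\mid x;m_{M^{\text{var}}}(\phi)) = p(y\mid x;m_{M^{\text{two-part}}}(h))$, as required. For the codelength, substituting the point mass into \eqref{eq:variational-code} collapses the expectation to the $h'=\phi$ term, and the entropy contribution $\log_2 \beta_{M^{\text{var}}}(\phi;\phi) = \log_2 1 = 0$ vanishes; equivalently, in the form \eqref{eq:variational-code-kl}, $\mathrm{KL}\!\left[\beta_{M^{\text{var}}}(\cdot;\phi)\,\|\,\alpha_{M^{\text{var}}}\right] = -\log_2 \alpha_{M^{\text{var}}}(\phi)$ for a point mass. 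Either way one is left with $L^{\text{var}}_{M^{\text{var}}}(Y\mid X;\phi) = -\log_2 \alpha_{M^{\text{two-part}}}(h) - \log_2 p(Y\mid X;m_{M^{\text{two-part}}}(h))$, which is exactly $L^{\text{two-part}}_{M^{\text{two-part}}}(Y\mid X;h)$ by \eqref{eq:two-part-code}.

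I do not expect a genuine obstacle here: the argument is a routine unpacking of definitions, and the only care needed is bookkeeping — confirming a point-mass posterior is an admissible component of a variational code (it is, by countability of $\gH$), and noting that for hypotheses with $\alpha_{M^{\text{two-part}}}(h)=0$ both codelengths equal $+\infty$, so the claimed equality still holds in the extended reals and the statement is vacuous there. I would close by recording the conceptual content: two-part codes are precisely the variational codes whose posterior family is restricted to point masses, which is what lets the earlier existence result for asymptotically optimal two-part codes for Transformers carry over to the variational setting (as used in Proposition~\ref{thm:exists-asymptotically-optimal-code}).
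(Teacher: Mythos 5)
Your proposal is correct and follows essentially the same route as the paper's proof: the paper likewise keeps the same hypothesis space, mapping, and prior, sets $\Phi_{M^{\text{var}}} = \gH_{M^{\text{two-part}}}$ with the posterior $\beta(h;\phi)$ a point mass (Iverson bracket) at $\phi$, takes $u$ to be the identity, and collapses the expectation in the variational codelength to recover \eqref{eq:two-part-code}. The small extra remarks you add (lower semicomputability of the point-mass family, the $\alpha(h)=0$ edge case) are harmless refinements of the same argument.
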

\begin{proof}
Given $M^{\text{two-part}}$, we can define $M^{\text{var}}$ as:
\begin{align}
    \Phi_{M^{\text{var}}} &= \gH_{M^{\text{var}}} = \gH_{M^{\text{two-part}}} \\
    m_{M^{\text{var}}} &= m_{M^{\text{two-part}}} \\
    \alpha_{M^{\text{var}}}(h) &= \alpha_{M^{\text{two-part}}}(h) \\
    \beta(h;\phi) &= \llbracket \phi = h \rrbracket,
\end{align}
where $\llbracket P \rrbracket$ denotes the Iverson bracket, i.e. $1$ if $P$ is true and $0$ otherwise.

In other words, the posterior hypothesis space is restricted to distributions that assign all probability to a single hypothesis. The bijective mapping $u$ is the identity function, as $\Phi_{M^{\text{var}}} = \gH_{M^{\text{two-part}}}$.

The codelength of our variational code with $\phi = u(h) = h$ is defined as:
\begin{align*}
   L^{\text{var}}_{M^{\text{var}}}(Y|X;u(h)) &= \mathbb{E}_{h \sim \beta_{M^{\text{var}}}(\cdot;h)} \left[
-\log_2\alpha_{M^{\text{var}}}(h) + \log_2\beta_{M^{\text{var}}}(h;h) - \log_2p(Y \mid X; m_{M^{\text{var}}}(h))
\right] \\
&= -\log_2\alpha_{M^{\text{two-part}}}(h) + \log_2 \llbracket h = h \rrbracket - \log_2p(Y \mid X; m_{M^{\text{two-part}}}(h))  \\
&= -\log_2\alpha_{M^{\text{two-part}}}(h) -\log_2p(Y \mid X; m_{M^{\text{two-part}}}(h)) \\
&= L^{\text{two-part}}_{M^{\text{two-part}}}(Y|X;h).
\end{align*}

The conditional distribution for our variational code with $\phi = u(h) = h$ is given as:
\begin{align*}
    p_M(Y \mid X; \phi) &= \mathbb{E}_{h \sim  \beta_{M^{\text{var}}}(\cdot;\phi)} ~p(Y \mid X;m_{M^{\text{var}}}(h)) \\
   &= p(Y \mid X;m_{M^{\text{two-part}}}(h)).
\end{align*}

Thus $M^{\text{two-part}}$ and $M^{\text{var}}$ are equivalent given the bijective mapping $u$.
\end{proof}

\subsubsection{Asymptotically Optimal Variational codes}
\label{sec:asymptotically-optimal-var-codes}

Asymptotically optimal variational codes are defined in the same way as asymptotically optimal two-part codes.

\begin{defn}(asymptotically optimal variational code)
A family of variational codes ${\{ M_R \mid R \in \gR \}}$ is \emph{asymptotically optimal} with respect to a universal prefix Turing machine $T$ if for all $R$ and for all $X,Y$:
\begin{equation}
    L^{\bigast,\text{var}}_{M_R}(Y \mid X) \leqc \min_{f \in \gF} K_{T,R}(f) -\log_2~p(Y \mid X;f),
\end{equation}
where $K_{T,R}$ denotes resource-bounded Kolmogorov complexity.
\end{defn}

The codelength $L^{\bigast,\text{var}}_{M_R}(Y \mid X)$ monotonically decreases to be $\leqc C^{\text{two-part}}(Y \mid X)$ as the resource bounds $R$ increase.

\subsubsection{Proof of Proposition~\ref{thm:exists-asymptotically-optimal-code}}\label{sec:proof-of-exists-asymptotically-optimal-code}

\begin{prop*}[Proposition~\ref{thm:exists-asymptotically-optimal-code} restated]
There exists an asymptotically optimal family of variational codes for Transformer encoders.
\end{prop*}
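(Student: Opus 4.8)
The plan is to obtain this proposition as an immediate corollary of Theorem~\ref{thm:transformer-two-part-code}, which already exhibits an asymptotically optimal family of \emph{two-part} codes for Transformer encoders, together with Lemma~\ref{thm:equiv-two-part-var}, which shows that every two-part code has an equivalent variational code. The only real content is to verify that the notion of ``equivalence'' is strong enough to transport the asymptotic-optimality inequality verbatim.

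First I would fix a universal prefix Turing machine $T \in \gT$ and let $\{ U_R \mid R \in \gR \}$ be the asymptotically optimal family of two-part codes for Transformer encoders constructed in the proof of Theorem~\ref{thm:transformer-two-part-code}; recall this family satisfies $L^{\bigast,\text{two-part}}_{U_R}(Y \mid X) \eqc C^{\text{two-part}}_{T,R}(Y \mid X)$ for every $R \in \gR$. Then, for each $R$, I would apply Lemma~\ref{thm:equiv-two-part-var} to $U_R$ to obtain an equivalent variational code $M_R$, using the explicit construction in that lemma's proof: $\Phi_{M_R} = \gH_{M_R} = \gH_{U_R}$, $m_{M_R} = m_{U_R}$, $\alpha_{M_R} = \alpha_{U_R}$, and $\beta_{M_R}(h;\phi) = \llbracket \phi = h \rrbracket$, with the equivalence bijection $u$ equal to the identity. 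The candidate family is then $\{ M_R \mid R \in \gR \}$.

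The key step is to transfer the minimum from one code to the other. Since $u$ is a bijection from $\gH_{U_R}$ onto $\Phi_{M_R}$ and, by the definition of equivalence, $L^{\text{var}}_{M_R}(Y \mid X; u(h)) = L^{\text{two-part}}_{U_R}(Y \mid X; h)$ for every $h \in \gH_{U_R}$, minimizing over $\phi$ is literally the same as minimizing over $h$:
\begin{equation*}
L^{\bigast,\text{var}}_{M_R}(Y \mid X) = \min_{\phi \in \Phi_{M_R}} L^{\text{var}}_{M_R}(Y \mid X; \phi) = \min_{h \in \gH_{U_R}} L^{\text{two-part}}_{U_R}(Y \mid X; h) = L^{\bigast,\text{two-part}}_{U_R}(Y \mid X).
\end{equation*}
Chaining this equality with the asymptotic optimality of $\{ U_R \}$ yields $L^{\bigast,\text{var}}_{M_R}(Y \mid X) \eqc C^{\text{two-part}}_{T,R}(Y \mid X)$, and in particular $L^{\bigast,\text{var}}_{M_R}(Y \mid X) \leqc C^{\text{two-part}}_{T,R}(Y \mid X)$ for all $R$, which is exactly the definition of an asymptotically optimal family of variational codes (see~\ref{sec:asymptotically-optimal-var-codes}).

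The ``hard part'', such as it is, is not an analytic obstacle but a matter of being precise about the equivalence notion: one must confirm that Lemma~\ref{thm:equiv-two-part-var} supplies an \emph{exact} (not merely up-to-an-additive-constant) codelength-preserving bijection, so that no stray constant is incurred when passing between $L^{\bigast,\text{two-part}}_{U_R}$ and $L^{\bigast,\text{var}}_{M_R}$. Given the pointwise equality built into the definition of equivalence, this is immediate, so the proposition really is a corollary rather than a new argument.
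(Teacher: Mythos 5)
Your proposal is correct and follows exactly the same route as the paper: the paper also derives this proposition by combining the asymptotically optimal family of two-part codes for Transformers (Theorem~\ref{thm:transformer-two-part-code}) with Lemma~\ref{thm:equiv-two-part-var}, and your explicit verification that the equivalence preserves minimum codelengths exactly (with no additive slack) is just a spelled-out version of what the paper treats as immediate.
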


\begin{proof}
This trivially follows from Theorem~\ref{thm:exists-asymptotically-optimal-code}, which establishes that there exists an asymptotically optimal family of two-part codes for Transformer encoders, and Lemma~\ref{thm:equiv-two-part-var}, which establishes that for every two-part code there exists an equivalent variational code.
\end{proof}

\subsubsection{Variational Codes with Adaptive Priors}
\label{sec:adaptive-prior}

We extend the definition of a variational code (Section~\ref{sec:variational-codes}) to include an adaptive prior, termed an \emph{adaptive} variational code.

\begin{defn}[adaptive variational code]
An \emph{adaptive variational code} $M$ consists of a hypothesis space $\gH_M$, a mapping $m_M : \gH_M \rightarrow \gF$ from hypotheses to model functions, and a prior $\alpha_M(h)$ over $\gH_M$, specified as in Definition~\ref{def:two-part-code}. Additionally, an adaptive variational code specifies:
\begin{enumerate}
    \item A \emph{prior hypothesis space} $\Psi_M$ and prior distribution over hypotheses, $\alpha_M(h;\psi)$, parameterized by $\psi \in \Psi_M$.
    \item A \emph{posterior hypothesis space} $\Phi_M$ and posterior distribution over hypotheses, $\beta_M(h;\phi)$, parameterized by $\phi \in \Phi_M$.
    \item A description length measure $L^\Psi_M$ for the prior parameters satisfying Kraft's inequality, which can be interpreted as implicitly specifying a hyperprior over the prior parameters.
\end{enumerate}
\end{defn}

Similar to a standard variational code, the conditional distribution over $Y$ given $X$ is therefore specified by the posterior parameters $\phi \in \Phi_M$, and defined by marginalizing over hypotheses:
\begin{equation}
p_M(Y \mid X; \phi) = \mathbb{E}_{h \sim  \beta_M(\cdot;\phi)} ~P(Y \mid X;m_M(h)).
\end{equation}
Different from a standard variational code, the codelength for a variational code $M$ is then defined with respect to both $\phi$ and $\psi$ as:
\begin{align}
L_M^{\text{adaptive-var}}(Y \mid X; \psi,\phi) &= L^\Psi_M(\psi) + \text{KL}\left[ \beta_M(\cdot;\phi) \parallel \alpha_M(\cdot;\psi) \right] - \log_2~ p_M(Y \mid X; \phi). \label{eq:adaptive-variational-code-kl} 
\end{align}
The codelength accounts for the cost of transmitting the prior parameters, $L^\Psi_M(\psi)$. Alice can then first send Bob these prior parameters at a cost of $L^\Psi_M(\psi)$, and the remainder of the transmission then follows that of a standard variational code, following the bits-back argument.

We similarly denote the minimum of an adaptive variational code $M$ as:
\begin{equation}
L_M^{\bigast,\text{adaptive-var}}(Y \mid X) = \min_{\psi \in \Psi_M,\phi \in \Phi_M}~ L_M^{\text{adaptive-var}}(Y \mid X;\psi,\phi).
\end{equation}
The definition of an \emph{asymptotically optimal family of adaptive variational codes} is directly analogous to the definition for standard variational and two-part codes.

Note that a two-part code could also include an adaptive prior, although we don't explicitly consider such codes in this paper.

\subsection{Analysis of Gaussian Mixture Models}

This section provides additional analysis of variational codes constructed via Gaussian Mixture Models (GMMs).

\subsubsection{Parameterization of Gaussian Mixture Models}
\label{sec:gaussian-parameterization}

In formulating variational codes, we consider Gaussian Mixture Models (GMMs) with $K$ components. We parameterize such models with parameters $\omega = \{(\mu_k, \nu_k, w_k)\}_{k=1}^K$, where for the $k$-th component, $\mu_k$ is the mean, $\nu_k$ is a parameter controlling variance, and $w_k$ is a logit for the mixture weight. The probability density function is given by:
\begin{equation}
    \text{GMM}(x; w) = \sum_{k=1}^{K} \pi_k \, \mathcal{N}(x; \mu_k, \sigma_k^2),
\end{equation}
where the \emph{mixing coefficients}, $\pi_k$, are computed via the softmax function applied to the logits $w_k$ to ensure they form a valid probability distribution (i.e., $\sum_k \pi_k = 1$):
    \begin{equation}
        \pi_k = \frac{e^{w_k}}{\sum_{j=1}^{K} e^{w_j}},
    \end{equation}
and the \emph{variances}, $\sigma_k^2$, are parameterized using the softplus function of $\nu_k$ to ensure they are strictly positive:
    \begin{equation}
        \sigma_k^2 = \log(1 + e^{\nu_k}).
    \end{equation}

A key property of this parameterization is that as $\nu_k \to -\infty$, the corresponding variance $\sigma_k^2 \to 0$. In this limit, the $k$-th Gaussian component, $\mathcal{N}(x; \mu_k, \sigma_k^2)$, converges to a Dirac delta function, $\delta(x - \mu_k)$. Consequently, this GMM formulation can approximate any discrete distribution over $K$ points by treating it as a limiting case of the mixture.

\subsubsection{Unimodal vs. Multimodal Priors}
\label{sec:unimodal-vs-multimodal-priors}

This section provides a simple, illustrative example of the benefits of a multimodal GMM prior vs. a unimodal Gaussian prior for encoding discrete information. 

Consider a setting where Alice wants to send Bob a single random bit $b \in \{0, 1\}$. Let us assume we have a simple variational code consisting of a prior and posterior over a single scalar parameter $w$. Further, assume Bob ``decodes'' the bit based on the sign of $w$. 
Assume the posterior $\beta$ is parameterized as a Gaussian, $\beta(w;\mu, \sigma^2) = \mathcal{N}(w; \mu, \sigma^2)$  with mean $\mu$ and variance $\sigma^2$.
We can consider two different parameterizations of the prior $\alpha$, as a single \emph{unimodal} Gaussian, or as a \emph{multimodal} mixture of 2 Gaussians, which we show enables a much more efficient transmission cost.
Under a variational code, Alice selects the parameters of the posterior ($\mu$ and $\sigma^2$) that minimize the cost of the transmission, which is related to the KL divergence between the prior and posterior distributions ($\text{KL}[\beta(w;\mu,\sigma^2) \parallel \alpha(w)]$) and the probability that the bit can be correctly decoded ($\mathbb{E}_{w \sim \beta(w;\mu,\sigma^2)}[\text{sgn}(w) = b]$).

\begin{figure}[ht]
    \centering
    \includegraphics[width=0.85\columnwidth,keepaspectratio]{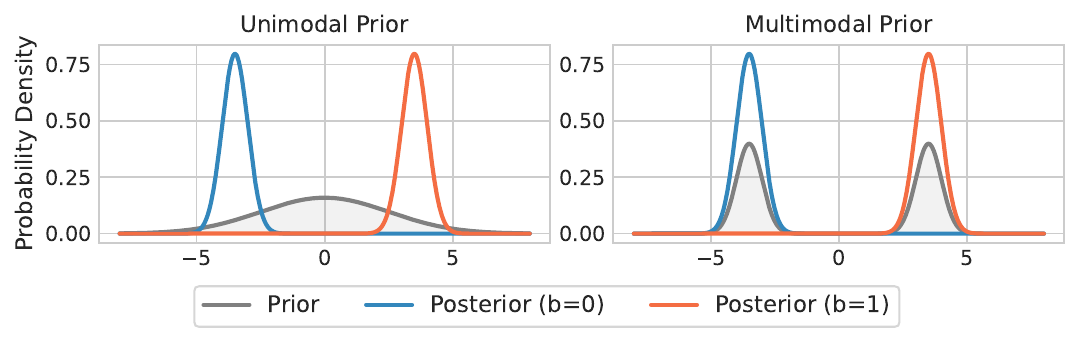}
    \caption{Visualization of posteriors for transmitting a bit $b$ under a unimodal Gaussian prior vs. multimodal GMM prior.}
    \label{fig:unimodal_vs_multimodal_prior}
\end{figure}

Figure~\ref{fig:unimodal_vs_multimodal_prior} visualizes posterior distributions for transmitting bits 0 or 1, along with a unimodal Gaussian prior (left) and a multimodal GMM prior (right). Transmission is significantly more efficient with a GMM prior, highlighting why such a prior is critical for constructing our family of asymptotically optimal variational codes. We provide more detailed analysis of each case below.

\paragraph{Unimodal (Gaussian) Prior} First, consider the case where the prior is a single zero-mean unit Gaussian, $\alpha(w) = \mathcal{N}(w \mid 0, 1)$. 
The KL divergence (in nats) can be computed analytically:
\begin{equation}
\text{KL}\left[ \beta(w;\mu,\sigma^2) \parallel \alpha(w) \right] = \frac{1}{2} \left[ \sigma^2 + \mu^2 - 1 - \log(\sigma^2) \right]
\end{equation}
To make the transmission reliable, the mean $\mu$ must be sufficiently far from $0$ and the variance $\sigma^2$ must be sufficiently small. However, increasing $\mu \gg 0$ or decreasing $\sigma^2 \ll 1$ both increase the KL divergence. Therefore, we must tradeoff the KL divergence with the probability of correct decoding, with the Pareto frontier visualized in Figure~\ref{fig:unimodal_vs_multimodal_frontier}. Reaching a probability of correct decoding close to 1 requires a KL of $\gg 1$ bits.

\begin{figure}[h!]
    \centering
    \includegraphics[width=0.5\columnwidth,keepaspectratio]{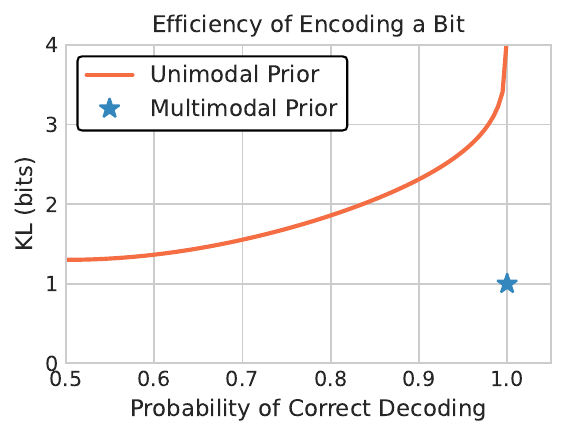}
    \caption{With a unimodal prior, we must tradeoff the KL divergence of the posterior against the expectation that the sign of the sampled parameter matches the sign of a given bit. With a multimodal prior, we can optimally encode the bit.}
    \label{fig:unimodal_vs_multimodal_frontier}
\end{figure}

\paragraph{Multimodal (GMM) Prior}
Now, let the adaptive prior be a two-component Gaussian Mixture Model (GMM) with equal weights, means $-1$ and $1$, and variance $\hat{\sigma}^2 \approx 0$.
\begin{equation*}
\alpha(w) = \frac{1}{2} \mathcal{N}(w; -1, \hat{\sigma}^2) + \frac{1}{2} \mathcal{N}(w; 1, \hat{\sigma}^2)
\end{equation*}

To send a bit $b=1$ or $b=0$, Alice can choose a posterior with mean $1$ or $-1$, respectively, and variance $\hat{\sigma}^2$.
Consider the $b=1$ case. Since $\hat{\sigma}^2 \approx 0$, the probability of correct decoding is $\approx 1$.
Additionally, the probability of any value sampled from the posterior $\beta(w;1,\hat{\sigma}^2)$ according to the prior mixture component with mean $-1$ is negligible, so the KL divergence then simplifies to be the KL divergence between the posterior and the prior component at $1$:
\begin{align*}
\text{KL}[\beta(w;1,\hat{\sigma}^2) \parallel \alpha(w)] &= \mathbb{E}_{w \sim \beta(w;1,\hat{\sigma}^2)} \left[ \log_2 \frac{\beta(w;1,\hat{\sigma}^2)}{\alpha(w)} \right] \\
&\approx \mathbb{E}_{w \sim \beta(w;1,\hat{\sigma}^2)} \left[ \log_2 \frac{\mathcal{N}(w; 1, \hat{\sigma}^2)}{\frac{1}{2} \cdot \mathcal{N}(w; 1, \hat{\sigma}^2)} \right] \\
&= \mathbb{E}_{w \sim \beta(w;1,\hat{\sigma}^2)} [\log_2 2] = 1 \text{ bit}.
\end{align*}

Thus, we can transmit the bit with an optimal KL cost of 1 bit, with near certain probability of correct decoding, as shown in Figure~\ref{fig:unimodal_vs_multimodal_frontier}. 

\subsubsection{Optimal Mixture Priors}\label{sec:optimal-mixture-priors}

In this section we discuss how adaptive GMM priors enable efficient encoding of quantized weights, as previously discussed in \citet{ullrich2017soft} and \citet{nowlan1992simplifying}.
Specifically, we will derive the optimal GMM prior parameters for the following case:
\begin{itemize}
    \item We have a group of $N$ weights with a shared, adaptive GMM prior with $K$ mixture components (\ref{sec:gaussian-parameterization}).
    \item We want to optimize the KL divergence between this prior and a posterior over these weights.
    \item The posterior for each weight approximates a delta function at a particular value.
    \item These posterior delta functions are all clustered at a small set of $M \leq K$ unique values. Let $\{m_1, m_2, \cdots, m_M\}$ denote these unique values, and let $c_i$ be the count of weights that have the value $m_i$, such that $\sum_{i=1}^M c_i = N$.
\end{itemize}
The optimal GMM prior (that minimizes KL divergence) is a mixture of $M$ components, where each component is a Dirac delta function centered at one of the unique value $m_i$. The remaining $K-M$ components should have a mixing weight of zero. Specifically, the optimal parameters, $\{(\mu_k, \nu_k, w_k)\}_{k=1}^K$, for this prior are:
\begin{itemize}
    \item \emph{Means}: The mean of each active component is set to one of the unique weight values. For $i = 1, \dots, M$, the optimal mean is $\mu_i = m_i$.
    \item \emph{Variances}: As the components converge to Dirac delta functions, their variances must approach zero. Thus, the optimal softplus parameter is $\nu_i \to -\infty$, which implies $\sigma_i^2 \to 0$.
    \item \emph{Mixing coefficients}: The mixing coefficient for each active component should be equal to the empirical frequency of the corresponding unique value in the data. For $i = 1, \dots, M$, we select $w_i$ such that the optimal mixing coefficient is $\pi_i = \frac{c_j}{N}$. The mixing coefficients for the remaining $K-M$ components are zero.
\end{itemize}

 By using a GMM with delta function components, the prior perfectly matches the empirical distribution of the weights, leading to the minimum KL divergence.
In this idealized setting, the total KL divergence is equivalent to the Shannon entropy of the empirical distribution of the weights, scaled by the total number of weights $N$. Let $p_i = c_i / N$ be the empirical frequency of the unique weight value $m_i$. The KL divergence for a single weight whose posterior is a delta function at $m_i$ simplifies to the negative log-likelihood under the prior, $-\log_2 \alpha(m_i)$. Since the optimal prior sets $\alpha(m_i) = p_i$, the cost for this single weight is $-\log_2 p_i$. Summing this cost over all $N$ weights in the group, we get the total KL divergence:
\begin{equation*}
\sum_{i=1}^{M} c_i (-\log_2 p_i) = \sum_{i=1}^{M} N p_i (-\log_2 p_i) = N \left( -\sum_{i=1}^{M} p_i \log_2 p_i \right) = N \cdot H(p)
\end{equation*}
where $H(p)$ is the Shannon entropy of the empirical distribution $p = (p_1, \cdots, p_M)$. This is the theoretical minimum number of bits required to encode the specific weight values given their empirical frequencies.

This highlights a close connection between the dynamic quantization method of \citet{han2016deep}, as employed by \citet{lotfi2022pac,lotfi2024non}, and this special case of optimizing a variational objective with an adaptive GMM prior.

\subsection{Proof of Theorem~\ref{thm:gmm-code-is-optimal}}
\label{sec:transformer-var-code-details}

\begin{thm*}[Theorem~\ref{thm:gmm-code-is-optimal} restated] There exists an asymptotically optimal family of adaptive variational codes for Transformer encoders where the adaptive prior and posterior distributions are both specified by products of independent GMMs.
\end{thm*}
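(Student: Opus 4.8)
The plan is to build the required family $\{M_R \mid R \in \gR\}$ on top of the $\zmap$ construction of Section~\ref{sec:two-part-codes-for-transformers}, with two design choices aimed squarely at the new constraint that the prior be a product of \emph{independent} GMMs whose parameter count does not grow with the resource bound. I would take $\gH_{M_R}$ and $m_{M_R}$ to be the layer-tied (Universal Transformer) architecture that $\zmap$ already targets, so that the number of distinct weights is independent of the time bound $R_t$; the only weights that then grow with the space bound $R_s$ are the $R_s$ rows of the prompt-embedding table, whose columns are $(b^{z}, w_1, \dots, w_{|w|})$ with $|w|$ a fixed, $R$-independent constant. I would then define the adaptive prior $\alpha_{M_R}(\cdot;\psi)$ as a product of independent GMMs with \emph{one shared GMM per weight group}, grouping together (i) all $R_s$ entries in each feature column of the prompt-embedding table, and (ii) each of the $\gO(1)$ remaining (non-prompt) Transformer weights; the posterior $\beta_{M_R}(\cdot;\phi)$ would be a product of one independent GMM per individual weight. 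The columnwise prior sharing is what pins the number of prior parameters — and hence $L^\Psi_{M_R}$ — to a constant independent of both $R_t$ and $R_s$.

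Since an asymptotically optimal family of adaptive variational codes only needs $L^{\bigast,\text{adaptive-var}}_{M_R}(Y \mid X) \leqc C^{\text{two-part}}_{T,R}(Y \mid X)$ for all $R$ and $(X,Y)$, it suffices to exhibit one good $(\psi,\phi)$ per $R$. Let $f^\star$ attain the minimum defining $C^{\text{two-part}}_{T,R}(Y \mid X)$ and let $z^\star \in \gZ_{T,R}$ be a shortest program with $f_T^{z^\star} = f^\star$, so $|z^\star| = K_{T,R}(f^\star)$. For the posterior I would place a point-mass GMM (the $\nu \to -\infty$ limit of Section~\ref{sec:gaussian-parameterization}) at the value of every used weight of $\zmap(T,R,z^\star)$, and set the posterior over the program-tape weights beyond register $|z^\star|$ equal to their prior; because those trailing weights are never attended to, the marginalized model function is deterministic and exactly $f_T^{z^\star}$, so the data term equals $-\log_2 p(Y \mid X;f^\star)$ exactly. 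For the prior I would use a two-component GMM with means $-1,1$ and equal mixing weights for the program-bit column (as in the single-bit analysis of Section~\ref{sec:unimodal-vs-multimodal-priors}), and a single-mode GMM at the unique compiled value for every other weight group; the means lie in the finite ALTA weight-value set together with $\{-1,1\}$, and the mixing weights are fixed at $1/2$ or $1$, so these are all $R$-independent, the \emph{same} $\psi$ serves the whole family, and $L^\Psi_{M_R}(\psi) = \gO(1)$. The KL then decomposes over weight groups: every group except the program-bit column contributes zero (posterior equals prior, or both equal the same point mass), while the program-bit column contributes exactly one bit for each of its first $|z^\star|$ entries and zero for the rest, totalling $|z^\star|$ bits. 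Summing gives $L^{\bigast,\text{adaptive-var}}_{M_R}(Y \mid X) \le \gO(1) + |z^\star| - \log_2 p(Y \mid X;f^\star) = K_{T,R}(f^\star) - \log_2 p(Y \mid X;f^\star) + \gO(1) = C^{\text{two-part}}_{T,R}(Y \mid X) + \gO(1)$, which is exactly the asymptotic-optimality condition; the limit and monotonicity statements then follow as in Proposition~\ref{thm:asymptotically-optimal-two-part-codes}.

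The main obstacle, exactly as the paper's sketch states, is the independence-and-boundedness constraint on the prior: in the earlier existence proofs the prior could dump all its mass on one $\zmap$ output with arbitrary inter-parameter dependence, whereas here it must factor over weights and stay finite-dimensional as $R \to \infty$. The resolution is the pair of sharing devices above — layer tying absorbs $R_t$, columnwise prior sharing absorbs $R_s$ — together with the observation that $|z|$ bits can be injected through a single shared \emph{bimodal} GMM prior at $\pm 1$ at a cost of exactly one bit per bit, which is precisely why the unimodal analysis of Section~\ref{sec:unimodal-vs-multimodal-priors} shows the mixture is essential. The one place requiring genuine care is the use of degenerate (zero-variance) GMM components: the point-mass posteriors and single-mode priors are $\nu \to -\infty$ limits, so one must either admit these as valid parameters with finite description length or run a limiting argument, checking that each program bit's KL tends to one from below and that the model function becomes deterministic in the limit so the $\leqc$ bound survives; the columnwise KL accounting for the prompt-embedding table is where I would expect the write-up to be most delicate.
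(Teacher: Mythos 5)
Your proposal matches the paper's proof essentially step for step: the same $\zmap$-based layer-tied construction, columnwise sharing of GMM priors across the prompt-embedding table, a two-component $\pm 1$ prior on the program-bit column with (near) point-mass posteriors costing one bit per program bit, posterior set equal to the prior on the unused tape registers so they are transmitted for free, constant costs for the remaining weights and for encoding the prior parameters, and a final minimization over programs to bound the codelength by $C^{\text{two-part}}_{T,R}(Y \mid X)$ up to an additive constant, with the same caveat about handling degenerate (zero-variance) components as limits. The only cosmetic difference is that you fix a minimizing program $z^\star$ up front, whereas the paper bounds the codelength for every $z \in \gZ_{T,R}$ and then takes the minimum, which is equivalent.
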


\begin{proof}
We construct a family of adaptive variational codes $\{ M_R \mid R \in \gR \}$ and show it is asymptotically optimal with respect to a universal prefix Turing machine $T \in \gT$.

\paragraph{Construction of the code family}
For each resource bound $R \in \gR$, the code $M_R$ is defined as follows.
The hypothesis space $\gH_{M_R}$ consists of the weights for a Transformer encoder architecture that uses layerwise weight sharing, as specified by the construction of $\zmap(T,R,z)$ in Section~\ref{sec:zmap-details}. For a given $h \in \gH_{M_R}$, let $h_i \in h$ denote the specific Transformer weight value for some index $i$, assuming some arbitrary enumeration of the weights.
The mapping function $m_{M_R}$ is also as defined in that section, prepending $R_s$ prompt tokens to the input before the Transformer forward pass. 

The prior and posterior hypotheses spaces, $\Psi_{M_R}$ and $\Phi_{M_R}$, are both parameterized by sets of independent Gaussian mixture models (GMMs), as described in Section~\ref{sec:gaussian-parameterization}. For posterior parameters $\phi \in \Phi_{M_R}$, let $\phi_i$ denote the parameters of a GMM with index $i$. Similarly, for adaptive prior parameters $\psi \in \Psi_{M_R}$, let $\psi_{\texttt{group}(i)}$ denote the parameters of a GMM with index $\texttt{group}(i)$, where $\texttt{group}: \N \rightarrow \N$ is a function defining a grouping of Transformer weights that share the same prior GMM. 

The posterior distribution is defined as:
\begin{equation}
    \beta_{M_R}(h; \phi) = \prod_i \text{GMM}(h_i; \phi_i).
\end{equation}
The adaptive prior distribution is similarly defined as:
\begin{equation}
    \alpha_{M_R}(h; \psi) = \prod_i \text{GMM}(h_i; \psi_{\texttt{group}(i)}).
\end{equation}
Recall that the number of weights in the Transformer constructed by $\zmap$ is finite, except for the number of prompt token embeddings, where the number of rows (i.e. number of prompt tokens) grows with the space resource bound $R_s$, and therefore our asymptotic analysis largely focuses on these weights (see \ref{sec:prompt-embeddings}).

For the prompt token embeddings, a separate GMM prior is shared across each feature column.
For all weights other than the prompt token embeddings, this grouping does not formally affect our results, e.g. we can simply share a single GMM prior across each matrix or bias vector in the Transformer. 

We require that the prior and posterior GMMs corresponding to the weights of the prompt embedding table have at least $2$ components. We will show that the other GMMs only formally require a single component for the desired asymptotic bounds to hold, although a prior with more components can lead to a more efficient code in practice, as discussed below.

Finally, we construct the encoding of the prior parameters, $L_{M_R}^\Psi$, as follows. Our constructed grouping of weights sharing the same prior ensures that the total number of prior parameters in $\Psi_{M_R}$ is a small constant that does not depend on the resource bound $R$. This is because the number of parameters generated by $\zmap$ is constant with respect to $R$, other than the number of rows in the prompt token embedding table, and a single prior for each column is shared across every row of the prompt embedding table. Assuming the prior parameters can be encoded with some finite precision, we select some uniform encoding for the prior parameters, $L_{M_R}^\Psi$, which assigns non-zero probability to any specific set of prior parameters we construct below, such that they can be transmitted in $c_\Psi$ bits, which does not depend on $R$.

\paragraph{Proof of asymptotic optimality}
To prove that the family $\{M_R : R \in \gR \}$ is asymptotically optimal, we must show that for any resource bound $R \in \gR$ and any dataset $X,Y$:
\begin{equation}
L^{\bigast,\text{adaptive-var}}_{M_R}(Y \mid X) \leqc C^{\text{two-part}}_{T,R}(Y \mid X).
\end{equation}
Recall that the minimum codelength is achieved by minimizing over all possible prior and posterior parameters,
\begin{equation*}
    L^{\bigast,\text{adaptive-var}}_{M_R}(Y \mid X) = \min_{\psi \in \Psi_{M_R},\phi \in \Phi_{M_R}} L^{\text{adaptive-var}}_{M_R}(Y \mid X;\psi,\phi),
\end{equation*} 
where,
\begin{align*}
    L_{M_R}^{\text{adaptive-var}}(Y \mid X; \psi,\phi) &= L^\psi_{M_R}(\psi) + \text{KL}\left[ \beta_{M_R}(\cdot;\phi) \parallel \alpha_{M_R}(\cdot;\psi) \right] - \log_2~ p_{M_R}(Y \mid X; \phi) \\
    &= c_\Psi + \text{KL}\left[ \beta_{M_R}(\cdot;\phi) \parallel \alpha_{M_R}(\cdot;\psi) \right] - \log_2~ p_{M_R}(Y \mid X; \phi).
\end{align*}

We will show that for any resource bound $R \in \gR$ and program $z \in \gZ_{T,R}$, we can construct a specific set of prior parameters $\psi^{R}$ -- not dependent on program $z$ -- and posterior parameters $\phi^{R,z}$ such that the resulting variational codelength satisfies the following bound with respect to the function $f_T^z$ computed by prefix Turing machine $T$ with program $z$. 
That is, we will show that there exist $\psi^{R} \in \Psi_{M_R}$ and $\phi^{R,z} \in \Phi_{M_R}$ such that:
\begin{align}
L_{M_R}^{\text{adaptive-var}}(Y \mid X, \psi^R, \phi^{R,z}) \leq c_\Psi + |z| + c_T - \log_2 p(Y \mid X; f_T^z),
\end{align}
where $c_T$ and $c_\Psi$ are constants that do not depend on $X$, $Y$, $z$, or $R$.
We address the data likelihood and KL divergence terms individually next. Then we will show that this condition implies asymptotic optimality. 

Let $h^{R,z} = \zmap(T,R,z)$ be the set of weights generated by the ALTA compiler to emulate the Turing machine $T$ with program $z$.

\paragraph{Data likelihood}

For a given program $z \in \gZ_{T,R}$, we construct posterior parameters such that any weights sampled from the posterior distribution compute $f_T^z$, i.e. any sampled weights compute the same function as the weights $h^{R,z}$.

To accomplish this, we partition the Transformer weights into two disjoint subsets, which we will denote with respect a set of indexes $\gI \in \N$. For the first subset, consisting of weights with indexes $\in \gI$, we can set the parameters of the corresponding posterior GMM $\phi^{R,z}_i$ to approximate a Dirac delta function at the desired weight value, $h^{R,z}_i$. This is possible because a GMM component's variance can approach zero (see~\ref{sec:gaussian-parameterization}). The second subset, consisting of weights with indexes $\notin \gI$, consists of only those weights in the first column of the prompt token embedding table (see~\ref{sec:prompt-embeddings}) corresponding to weights encoding the values of bits on the program tape after $z$, i.e. rows $|z| + 1$ to $R_s$. By construction, these parameters are never ``read'' by the attention head scanning the program tape, so their value does not affect the function being computed by the Transformer. We therefore allow these weights to take on a ``random'' value by setting the corresponding posterior parameters to approximate the Rademacher distribution. This construction allows these bits to be transmitted ``for free'', as we will show in the next section. Formally, the posterior distribution specified by the posterior parameters $\phi^{R,z}$ is therefore:
\begin{equation}
    \text{GMM}(w;\phi^{R,z}_i) = 
    \begin{cases} 
    \delta(w - h_i^{R,z}) &, i \in \gI \\
    \frac{1}{2}\delta(w + 1) + \frac{1}{2}\delta(w - 1) &, i \notin \gI \\
    \end{cases}.
\end{equation}
Therefore, given the posterior parameters $\phi^{R,z}$, the negative log likelihood of the data is equivalent to that under a two-part code with weights $h^{R,z}$:
\begin{align}
-\log_2 p_{M_R}(Y \mid X; \phi^{R,z}) &= 
\mathbb{E}_{h \sim \beta_{M_R}(\cdot;\phi^{R,z})} \left[ -\log_2 p(Y|X; m_{M_R}(h)) \right] \\
&= -\log_2 p(Y|X; m_{M_R}(h^{R,z})) \\
&= -\log_2 p(Y|X; f_T^z).
\end{align}

\paragraph{KL divergence}
We now show that we can select prior parameters $\psi^{R} \in \Psi_{M_R}$ such that the KL divergence term is bounded by $|z| + c_T$, where $c_T$ is a constant depending only on $T$ but not on $z$ or $R$. Because the prior and posterior are parameterized by \emph{independent} GMMs, the overall KL computation factors across the individual weights of the Transformer:
\begin{equation}
    \text{KL}\left[ 
    \beta(\cdot;\phi^{R,z}) \parallel 
    \alpha(\cdot;\psi^{R}) \right] = \sum_i \text{KL}\left[  \beta(\cdot;\phi^{R,z}_i)
    \parallel
    \alpha(\cdot;\psi^{R}_{\texttt{group}(i)})
    \right]
\end{equation}
We consider the weights in the prompt embedding table first, and then discuss the remaining weights.
Recall that a GMM prior is shared for each column in this table. The weights in the first column encodes the contents of the program tape. We set the prior for this first column of the program embedding table to approximate the Rademacher distribution, such that the prior and posterior distribution are the same for the weights corresponding to the $R_s - |z|$ rows in the table after row $|z|$. The KL divergence between two equal distributions is $0$. For the weights in the first $|z|$ rows, the posterior is either a delta function at $1$ or $-1$. In either case, the KL divergence is $1$ bit (see \ref{sec:unimodal-vs-multimodal-priors}, which also highlights the importance of a \emph{multimodal} prior in our construction). For the remaining columns of the prompt embedding table, each row has the same value, and the posterior for all rows within a column approximates a delta function at this value. Therefore, we can set the prior to be equal to the posterior, and the KL divergence is $0$ for these weights. Therefore, the overall KL divergence for the prompt embedding table is $|z|$.

For the finite set of weights outside of the prompt embedding table, the posterior distribution for each weight approximates a delta function. The weights generated by $\zmap$ contain a relatively small set of unique values determined by $T$. We can choose any prior that assigns non-zero probability to these specific weight values. Summing over all fixed weights gives a total constant cost $c_T$ that depends on $T$ but not on $z$ or $R$. 
A single mixture component is formally sufficient, since it can have sufficiently large variance to assign non-zero probability to each unique value in the corresponding weight matrix. However, since the weight values generated by $\zmap$ have values drawn from a finite set determined by $T$, with a sufficient number of components, we can construct the shared GMM priors to be mixtures of delta functions centered at these values, as described in \ref{sec:optimal-mixture-priors}, and therefore $c_T$ can be relatively small.

Combining these parts, the total KL divergence is $|z| + c_T$.

\paragraph{Conclusion}
We have shown that for any program $z \in \gZ_{T,R}$ and resource bound $R \in \gR$, there exists a choice of prior and posterior parameters $(\psi^R, \phi^{R,z})$ such that:
\begin{equation}
L_{M_R}^{\text{adaptive-var}}(Y \mid X, \psi^R, \phi^{R,z}) \leq - \log_2 p(Y \mid X; f_T^z) + |z| + c_T + c_\Psi.
\end{equation}

The minimum codelength for the family $M_R$ is found by minimizing over all $(\psi, \phi)$, so it must be less than or equal to the codelength for this specific choice, minimized over all programs $z$:
\begin{align*}
L^{\bigast,\text{adaptive-var}}_{M_R}(Y \mid X) &= \min_{\psi \in \Psi_{M_R},\phi \in \Phi_{M_R}}~ L^{\text{adaptive-var}}_{M_R}(Y \mid X;\psi,\phi) \\
&\leq \min_{z \in \gZ_{T,R}} L^{\text{adaptive-var}}_{M_R}(Y \mid X;\psi^R,\phi^{R,z}) \\
&\leq \min_{z \in \gZ_{T,R}} \left( c_\Psi + |z| + c_T - \log_2 p(Y \mid X; f_T^z) \right) \\
&\leqc \min_{z \in \gZ_{T,R}} \left( |z| - \log_2 p(Y \mid X; f_T^z) \right)
\end{align*}

This last expression is equal to the definition of $C^{\text{two-part}}_{T,R}(Y \mid X)$:
\begin{align*}
C^{\text{two-part}}_{T,R}(Y \mid X) 
&= \min_{f \in \gF} \left( K_{T,R}(f) - \log_2 p(Y \mid X; f) \right) \\ 
&= \min_{z \in \gZ_{T,R}} \left( |z| - \log_2 p(Y \mid X; f_T^z) \right).
\end{align*}

Therefore our construction satisfies the condition for an asymptotically optimal family of codes.
\end{proof}

\subsection{Analysis of Alternative Two-Part Codes}
\label{sec:quasi-optimal}

Here we provide additional details and discussion related to the bounds in Table~\ref{tab:asymptotic-bounds} introduced in Section~\ref{sec:asymptotic-bounds}. The first row of the table shows a description length bound of $|z| + \log R_s$, leading to an overall codelength bound:
\begin{equation}
    L^{\bigast}_{M_R}(Y \mid X) \leqc C^{\text{two-part}}_{T,R}(Y \mid X) + \log_2 R_s.
\end{equation}

First let us detail the construction of this family of two-part codes in contrast to the construction of the variational code detailed in \ref{sec:transformer-var-code-details}.
One of the key challenges addressed by the adaptive variational code is that we wanted to transmit the $R_s$ prefix token embeddings representing the program tape contents in $|z|$ bits in order to satisfy the conditions of asymptotic optimality. In order for the ``unused'' capacity to be transmitted ``for free'', our proposal required a multimodal GMM posterior that could be set equal to the prior for the $R_s - |z|$ ``unused'' weights. However, implementing and optimizing a multimodal posterior can be challenging. Instead, Alice can adaptively determine the optimal prefix length by selecting $|z| \in \{1, \cdots, R_s \}$, e.g. by sweeping over prefix length as a hyperparameter, using structured dropout, or using some other approach. Alice can then communicate $|z|$ to Bob in $\log_2 R_s$ bits, assuming a naive uniform prior over the integers $\{ 1, \cdots, R_s \}$ for encoding $|z|$. Then, Alice can simply communicate $z$ in $|z|$ bits assuming a Rademacher prior over the $|z|$ weights, as in our previous construction in \ref{sec:transformer-var-code-details}. As in our previous construction, this can be generalized by using an adaptive GMM prior. Alternatively, such a code could potentially be constructed using the quantization method of \citet{han2016deep}, which has also been employed by \citet{lotfi2022pac,lotfi2024non}. Alternatively, we could restrict the prefix embeddings to some fixed vocabulary, akin to discrete prompt optimization. Most directly, in the special case where the vocabulary size is $2$, we reach the same bound.

Next we discuss the bounds in Table~\ref{tab:asymptotic-bounds} resulting from ablating aspects of this recipe. First, if we do not use any form of quantization, and instead use a fixed prior, such as the unimodal Gaussian prior implicit in methods such as weight decay, or a uniform prior implicit in standard MLE objectives, then each weight encoding a bit of $z$ requires more than a single bit to encode. Therefore, the program length term in the bound changes from $|z|$ to $\gO(|z|)$, where the multiplicative factor is $>1$. Second, if we do not adaptively select the prefix length, then we must encode the entire prefix embedding table, which contains $R_s$ rows, regardless of the length of the program it encodes. Therefore, the bound degrades further to $\gO(R_s)$, where $R_s \geq |z|$ as by definition $R_s$ determines the maximum program length that can be represented. Finally, if we do not use layerwise weight sharing, then our bound simply becomes a function of the number of parameters in the Transformer. As $R_t$ specifies the number of layers, we must encode $\gO(R_t)$ weights, in addition to the $\gO(R_s)$ weights in the prefix embedding table. Note that the bounds we have derived assume we are using the Transformer family and specification of $\zmap$ discussed in Section~\ref{sec:two-part-codes-for-transformers}. Alternative constructions could potentially lead to different bounds.

Overall, a weaker bound on the model description length leads to a weaker bound on the minimum of the overall codelength. Therefore, optimizing such an objective may lead to sub-optimal compression, and---from an MDL perspective---sub-optimal model selection. Regardless, the $|z| + \log R_s$ bound is close to the theoretically optimal bound of $|z|$. While pre-trained Transformer decoders are a different setting than the one we have studied here, if our results could be extended to that setting it would provide a strong asymptotic bound for the compression achievable via prompt optimization methods, complementing the positive empirical results from~\citet{akinwande2024understanding}.

\subsection{Scaling Transformer Dimensionality}
\label{sec:mlp-scaling}

Here we provide additional details on the alternative Transformer family discussed in Section~\ref{sec:alternative-families}. In contrast to the Transformer family detailed in \ref{sec:zmap-details}, the program tape contents are implicitly encoded in the MLP parameters. In this construction, we still require prepending $R_s$ ``padding'' tokens to the input in order to scale the context window so that the Transformer can represent Turing machine tapes of size $R_s$. However, the program $z$ is encoded in the MLP layer, and we do not require a prefix embedding table. An ALTA program for populating the program tape contents based on a program encoded in the MLP weights is shown in Figure~\ref{fig:populate_program}. The overall construction first iteratively populates the program tape contents at sequential Transformer positions, and then the Turing machine emulation continues as in our previous construction. Therefore, this construction requires $R_t + R_s + 2$ layers.

\begin{figure}[h!]
\centering
\begin{minted}[
  frame=single,
  framesep=2mm, 
  fontsize=\fontsize{8}{9}\selectfont\ttfamily,
  ]{python}
from alta import program_builder as pb

def build_program_spec(program_input: list[int]) -> pb.ProgramSpec:
  """Returns ALTA program spec for populating program tape."""

  variables = {
      "done": pb.var(2),
      "is_start": pb.input_var(2, init_fn=lambda x: x == START),
      "head": pb.input_var(2, init_fn=lambda x: x == START),
      "symbol": pb.input_var(2),
      "program_index": pb.var(len(program_input)),
  }
  attention_heads = {
      "head_left": pb.v_relative("head", -1),
  }

  def ffn_fn(x):
    if x["program_index"] == len(program_input):
      x["done"] = 1
      return

    if x["head"]:
      x["symbol"] = int(program_input[x["program_index"]])
      x["head"] = 0

    if not x["is_start"] and x["head_left"]:
      x["head"] = 1

    x["program_index"] = x["program_index"] + 1

  return pb.program_spec(
      ffn_fn=ffn_fn, variables=variables, heads=attention_heads,
      output_name="symbol", input_range=2, position_range=None,
  )
\end{minted}
\caption{ALTA program specification for populating a program tape given a program encoded in the MLP layer.}
\label{fig:populate_program}
\end{figure}

The MLP matrices generated by the ALTA program in Figure~\ref{fig:populate_program} require $\gO(|z|^2)$ parameters, and have rank $\gO(|z|)$, where $|z|$ is the program length. While the existence of a mapping analogous to $\zmap$ for this Transformer family indicates that asymptotically optimal description length objectives exist, implementing such a prior introduces complex dependencies across the parameters of the MLP. Given the rank requirements on the MLP matrices under this construction, rank factorization would not be an effective means of compression, but it would be interesting for future work to consider alternative constructions or practical methods for matrix compression that could yield asymptotic bounds approaching the theoretical ideal.

\section{Additional Experimental Results and Details}
\label{sec:appendix-experiments}

Here we provide additional details and results for the Transformer and MLP experiments discussed in Section~\ref{sec:experiments}.

\subsection{Transformer Experiments}
\label{sec:transformer-exp-details}

\paragraph{Parity task details} We follow the setting of \citet{shaw2024alta}. The train set consists of examples with lengths ranging from 1 to 20. The train set includes $100,000$ examples, with roughly an equal number of examples per number of ones. We evaluate out-of-distribution (OOD) accuracy on a test set with lengths ranging from 21 to 40. 

\paragraph{ALTA program for parity} The ALTA program for parity that we use for our manual initialization is based on the sequential algorithm with relative position representations presented in~\citet{shaw2024alta}. This algorithm computes parity by iterating through each position (one per layer), flipping a running parity
bit every time a one is encountered. We simply reverse the direction of iteration from left-to-right to right-to-left, to accommodate that our model's classification decision depends on the \texttt{SEP} token at the beginning of the input sequence (see below).

\paragraph{Architecture and ALTA weight conversion}
One implementation challenge is that our experiments require a trainable Transformer that is compatible with ALTA-compiled weights. The reference ALTA Transformer implementation in \texttt{numpy} is not trainable~\citep{shaw2024alta}. To accomplish this, we implement a trainable version in Jax~\citep{jax2018github}, following the same parameterization expected by the ALTA compiler. This parameterization uses relative position representations~\citep{shaw2018self} following the parameterization of T5~\citep{raffel2020exploring}, which uses a single scalar bias term for relative positions within some window. It also uses an alternative parameterization for the attention head output transformations, as proposed by~\citet{elhage2021mathematical}, to simplify compilation; however, this alternative attention output parameterization is shown to be equivalent to that of the original Transformer. We select the logits for the \texttt{SEP} token as the output of the model, as our experiments focus on classification tasks.

A more significant challenge is that ALTA-compiled weights are not designed to be used with layer normalization~\citep{ba2016layer}. As the sequential algorithm for parity requires at least as many layers as bits in the longest input sequence, and our test set contains sequences up to length $40$, we require a relatively deep Transformer, using $42$ layers in our experiments. In our initial experiments we found that while a shallow Transformer could converge without layer normalization, training a Transformer with $\geq 40$ layers was not feasible on the parity task without some form of normalization. We also evaluated using tanh normalization, as an alternative to layer normalization, as proposed by~\citet{zhu2025transformers}, which we found to perform at least as well as layer normalization in our experiments, with or without using the variant with a dynamic scalar. We therefore use tanh normalization, and adapted the ALTA compiler to generate weights compatible with this form of normalization.

This required a few changes to the ALTA compiler. However, as our parity program only requires categorical variables, the changes are relatively straightforward. First, we changed how categorical variables are represented in the residual stream. In the original ALTA compiler, these are represented as ``one-hot'' vectors, i.e. a sequence of $0$s and $1$s. Instead, we represent categorical variables as ``signed one-hot'' vectors, i.e. a sequence of $-1$s and $1$s, with the position of the $1$ still representing the value of the variable. This change only requires minor modifications to the compiled embedding and MLP parameters. We additionally scale the compiled parameters by configurable scalars $>1$ to ensure the outputs of every sub-layer sufficiently saturate the tanh function. This also makes the compiled parameters more robust to noise. We zero-pad the ALTA-generated weights up to the dimensions specified by the given hyperparameters. Finally, as we are using only categorical variables, we can use the more standard ReLU activation in the MLP layer, as opposed to the clipped ReLU activation required to handle numerical variables in the original ALTA compiler.

We can then convert from the scalar weights produced by the ALTA compiler to posterior distributions used by our experiments with variational codes by setting the mean of the posterior distribution to the scalar weight and setting the variance to be a small, configurable scalar (e.g. $\nu = -10$). As ALTA-generated weights contain few unique values, we can analytical determine the optimal parameters for the adaptive prior (\ref{sec:optimal-mixture-priors}).

We otherwise use a ``tiny'' Transformer encoder, which was previously shown to be sufficient to fit the parity task~\citep{shaw2024alta}, with $2$ attention heads, $128$ model dimensions, and $512$ hidden MLP dimensions. We prepend $20$ prompt tokens to the model input.

\paragraph{Relation to asymptotic bounds} The experiment setting, using a relatively small model on a simple synthetic task, is potentially not well aligned with where the asymptotic guarantees from our theory are most likely to apply, which is in the limit as model size and dataset complexity increase. Regardless, this experiment setting is a useful initial investigation of the proposed variational code. Relatedly, the ALTA program for parity does \emph{not} emulate a Turing machine; rather it represents the algorithm within the weights of the MLP and attention sub-layers and leaves the prompt tokens effectively unused. However, this is not entirely at odds with our theory; as discussed in \ref{sec:practical-considerations}, the minimizer of an asymptotically optimal code may be quite different from the Turing machine emulation used to prove the asymptotic bound, especially under finite resource constraints.

\paragraph{Model training}

We use Monte-Carlo sampling to estimate the expected data likelihood in the adaptive variational codelength objective during training. 
Each gradient step is computed over 2 MC weight samples shared across a minibatch of 128 examples, for a total of 256 forward passes of the model (computed in parallel) per step. Prior work has commonly shared a single MC weight sample across a minibatch. In our initial experiments, we did not find significant improvements from increasing beyond 2 MC weight samples per minibatch. For the posteriors corresponding to weights in the prompt token embeddings table, we use a GMM posterior with 2 components, and use Gumbel-Softmax~\citep{jang2017categorical,maddison2017concrete} sampling with a temperature of $0.1$. For all other posteriors, we use a single Gaussian, and use the standard Gaussian reparameterization trick~\citep{kingma2014auto}. We did not implement the local reparameterization trick of~\citet{kingma2015variational} for reducing variance, although this could be useful for future work.

Following~\citet{blundell2015weight}, we also use MC samples to estimate KL divergence, as there is no closed form expression for the KL divergence of GMMs with multiple components. Specifically, we use $100$ MC samples for estimating KL.
For sampling from posterior distributions consisting of mixtures with more than one component, we use the Straight-Through (ST) Gumbel-Softmax estimator~\citep{jang2017categorical}, as soft estimates can be problematic and lead to negative KL estimates. For example, if the components of the true posterior distribution have relatively low variance, then samples from the Gumbel-Softmax distribution with a sufficiently large temperature can actually have higher probability under a higher-variance prior distribution than under a prior matching the posterior.

We use the Adam optimizer~\citep{kingma2014adam}, with a learning rate of $10^{-3}$, $1000$ warmup steps, and $50,000$ total steps with exponential decay of the learning rate. We determined some hyperparameters such as learning rate on an in-distribution validation set. 

As we train the model on minibatches consisting of $128$ samples from the $100,000$ training examples, we need to scale the coefficient of the KL term in the loss to accommodate that the data likelihood estimate is over a small sample, $\approx 10^{-3}$, of the full dataset. Therefore, we scale the KL term in the loss by a coefficient of $10^{-3}$, which we found to produce the lowest total codelength. Sweeping the KL coefficient produces a tradeoff between the data likelihood and the KL divergence, as observed in Table~\ref{tab:mlp_results} for the MLP experiments.

\paragraph{Random initialization}
For the MLE baseline experiment, we initialize the weight matrices using the standard variant of normal random initialization proposed by~\citet{he2015delving}, as our network uses ReLU activations. Bias vectors are initialized to zeros.
For the experiments with random initialization and the variational objective, there are many possibilities for initializing the parameters of the prior and posterior GMMs, and we did not explore this space exhaustively. However, we did find in our initial experiments that it was necessary to initialize the posterior distributions with relatively low variance in order for the model to converge towards a solution that fits the data. Therefore, for the means of the posterior distributions, we sample random weight values following the same method as the MLE experiments, set these sampled weights as the GMM means, and then set the GMM variance parameter $\nu$ (i.e. the variance prior to the softplus function) to be $-10$. We initialize the prior parameters similarly, but set the initial variance parameter $\nu$ to be $1$ to avoid instability in KL estimates at initialization. We set all mixing weights for the GMMs to be initially equal. Future work could explore alternative initializations.

\paragraph{Optimizer Comparison} We evaluated SGD in comparison to Adam as the optimizer for Transformers with the variational objective for the parity task, following the setting reported in Section~\ref{sec:experiments}.

\begin{table}[h]
    \centering
    \caption{Comparison of Adam and SGD for optimizing variational objective.}
\vspace{0.15cm}
\scalebox{0.85}{
    \label{tab:optimizer_results}
    \begin{tabular}{lccc}
        \toprule
        \textbf{Optimizer} & \textbf{Train NLL} (bits) & \textbf{KL} (bits) & \textbf{OOD Accuracy} \\
        \midrule
        Adam & $2.36 \times 10^{2}$ & $2.81 \times 10^{6}$ & $60.4\%$ \\
        SGD  & $1.58 \times 10^{5}$ & $2.44 \times 10^{6}$ & $49.6\%$ \\
        \bottomrule
    \end{tabular}
}
\end{table}

Adam and SGD exhibit different optimization behavior, and a different trade-off between optimizing the model cost (KL divergence) and data cost (negative log likelihood). Both methods fail to find a codelength comparable to that of the manual initialization. SGD more severely underfits the data, leading to poor generalization.
This highlights the sensitivity of optimizing the variational objective to the specific optimization procedure. Future work could more fully explore alternative hyperparameters, learning rate and KL coefficient schedules, or alternative families of optimizers, such as approximate second-order methods, e.g., K-FAC~\citep{martens2015optimizing}.

\subsection{MLP Experiments}
\label{sec:mlp-experiments}

Here we provide additional details on the MLP experiments discussed in Section~\ref{sec:experiments}.
To understand why our Transformer models underfit the proposed objective when randomly initialized (as evidenced by the significantly lower loss achieved by ALTA-initialized models), we study the simplified setting of a 2-layer MLP. Specifically, we study the task of learning an identity function over a vector of 4 binary values. The model is a MLP with 4 input dimensions, 16 hidden dimensions, and 4 output dimensions, which makes 4 independent binary classifications.
We can again compare random initialization with initializing the model with manually chosen weights that fit the data and have low complexity according to the proposed objective, by selecting weights inspired by how the ALTA compiler generates MLP layers.

\begin{figure}[t]
\vspace{-0.55cm}
    \centering
    \includegraphics[width=0.85\columnwidth,keepaspectratio]{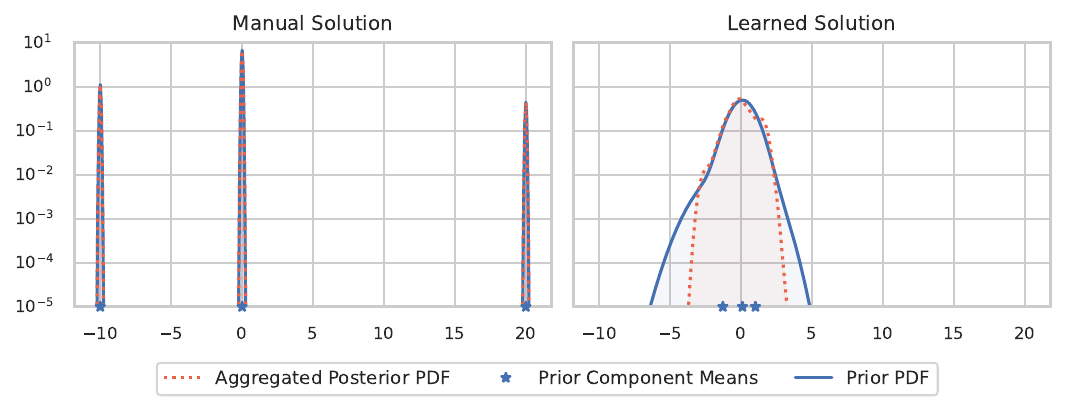}
    \caption{Manually specified vs. learned distributions for MLP trained with a variational objective on the identity task with an adaptive GMM prior. The aggregated posterior shown in the figure is an equally-weighted mixture of the individual Gaussian posteriors for each MLP weight.} %
    \label{fig:mlp_plots}
\end{figure}

\begin{table}[h!]
\centering
\caption{MLP performance on identity task.}
\vspace{0.15cm}
\scalebox{0.85}{
\label{tab:mlp_results}
\begin{tabular}{
  llccccc
}
\toprule
{\textbf{Objective}} &
{\textbf{Init.}} & {\textbf{KL Coef.}} & {\textbf{KL}} (bits) & {\textbf{Train NLL}} (bits) & {\textbf{Codelength}} (bits) & {\textbf{Accuracy}} \\
\midrule
        MLE Baseline & Random & --- & --- & $8.82$ & --- & $100\%$ \\
        \midrule
        Variational & Random & $1.0$ & $2.87$ & $2.64 \times 10^{3}$ & $2.64 \times 10^{3}$ & $54\%$ \\
                    &        & $10^{-1}$ & $3.19 \times 10^{1}$ & $1.90 \times 10^{3}$ & $1.93 \times 10^{3}$ & $63\%$ \\
                    &        & $10^{-2}$ & $2.24 \times 10^{2}$ & $1.12 \times 10^{2}$ & $3.37 \times 10^{2}$ & $100\%$ \\
                    &        & $10^{-3}$ & $2.88 \times 10^{2}$ & $5.39 \times 10^{1}$ & $3.42 \times 10^{2}$ & $100\%$ \\
                    &        & $10^{-4}$ & $2.98 \times 10^{2}$ & $5.09 \times 10^{1}$ & $3.49 \times 10^{2}$ & $100\%$ \\
        \midrule
        Variational & Manual & $10^{-1}$ & $9.87 \times 10^{1}$ & $ 0.0$ & $9.87 \times 10^{1}$ & $100\%$ \\
        \bottomrule
        
\end{tabular}
}
\end{table}

The results are shown in Table~\ref{tab:mlp_results}, and show similar trends as the Transformer experiments. We see that when starting from a random initialization, the optimization process fails to find a loss comparable to that achieved via manual initialization, indicating poor optimization of the proposed objective. To understand why the randomly initialized models underfit, we can inspect the properties of the learned prior and posterior distributions compared to those from manual initialization. Notably, the prior distribution appears to converge to a unimodal distribution, as visualized in Figure~\ref{fig:mlp_plots}, in contrast to the multimodal prior of the manual solution, with lower-variance components.

\paragraph{Task and network}

Our network is a $2$ layer MLP with $4$ inputs dimensions and $16$ hidden dimensions, that computes the function:
\begin{equation}
    f(x;\theta) = \text{sigmoid}( W^2~\text{ReLU}(W^1(x) + b^1) + b^2)
\end{equation}
with parameters $\theta = (W^1$, $b^1$, $W^2$, $b^2$). 

Our goal is to learn the identity function such that $f(x;\theta) \approx x$ for any binary vector $x \in \{0, 1\}^4$.

\paragraph{Posterior and prior distributions} We parameterize the posterior distribution with an independent Gaussian corresponding to each weight in the MLP. We use a single adaptive GMM prior with 3 components shared across all weights.

\paragraph{Manual solution} A relatively simple ``manual'' solution to the identity task can be constructed by choosing some scalar $\lambda \gg 0$ (we choose $\lambda=20$ in our experiments) and defining the network weights as follows:
\begin{align*}
    W^2 = (W^1)^\top = 
\begin{bmatrix}
\lambda & 0 & 0 & 0 & 0 & \cdots & 0 \\
0 & \lambda & 0 & 0 & 0 & \cdots & 0 \\
0 & 0 & \lambda & 0 & 0 & \cdots & 0 \\
0 & 0 & 0 & \lambda & 0 & \cdots & 0 \\
\end{bmatrix}
&&
    b^1 =
\begin{bmatrix}
-\frac{\lambda}{2} \\
\vdots \\
-\frac{\lambda}{2}
\end{bmatrix}
&&
    b^2 =
\begin{bmatrix}
-\frac{\lambda}{2} \\
\vdots \\
-\frac{\lambda}{2}
\end{bmatrix}
\end{align*}

\paragraph{Training details} The negative log-likelihood of $y$ given $x$ with respect to $\theta$ follows the standard binary cross-entropy loss:
\begin{equation}
    -\log_2 p(y \mid x;\theta) = \sum_{i=1}^4 -y_i \log_2(f(x;\theta)_i) - (1 - y_i) \log_2(1 - f(x;\theta)_i)
\end{equation}
We train the MLP with a variational objective. We specify posterior and prior distributions over $\theta$ as specified above, minimizing the variational objective consisting of the sum of the KL divergence between posterior and prior  distributions, and the expected binary cross entropy loss with respect to sampling weights $\theta$ from the posterior distribution.
We over-sample the space of $2^4$ possible binary vectors of length $4$ to create minibatches for training of $128$ examples.

Sampling, initialization, and training details are similar to the Transformer setting detailed above, except training only required $1,000$ steps. We show a sweep over different KL coefficients in Table~\ref{tab:mlp_results}.

\section{Extended Related Work}
\label{sec:extended-related-work}

Here we offer an extended discussion of related work previously summarized in Section~\ref{sec:related-work}.

\subsection{Theoretical Foundations} Our work is closely related to the theoretical notion of \emph{universal induction}~\citep{solomonoff1964formal,hutter2000theory,lattimore2013no,achille2021information} -- and related resource-bounded variants~\citep{levin1973universal,nakkiran2021turing}. Previous work has developed theoretical frameworks applying these notions to density estimation~\citep{barron1985logically,barron1991minimum}, sequential decision theory~\citep{hutter2005universal}, and kernel methods~\citep{hamzi2024bridging}.  For neural networks, \citet{schmidhuber1997discovering} introduced a probabilistic search algorithm for discovering neural networks with low Kolmogorov complexity. However, none of these theoretical frameworks directly lead to practical and scalable training objectives for neural networks.

\subsection{Variational Inference}
 Variational inference was proposed as a regularizer to improve generalization~\citep{hinton1993keeping}, under the ``bits back'' argument, which can also be viewed as a form of Bayesian inference~\citep{honkela2004variational}. 
Such variational methods were popularized by the success of variational autoencoders (VAEs)~\citep{kingma2014auto}, and related advances that enabled applying such methods to neural networks using standard gradient-based optimizers~\citep{graves2011practical,blundell2015weight}. Variational methods have found success for network compression -- enabling weight pruning~\citep{louizos2017bayesian} and quantization~\citep{achterhold2018variational,ullrich2017soft} -- as well as for probing~\citep{voita2020information} and improving uncertainty modeling~\citep{sankararaman2022bayesformer,gal2016dropout}. Other work has pursued applying variational bottlenecks to network \emph{activations}~\citep{fehr2024nonparametric}, as opposed to network \emph{weights}.
However, in general, the effectiveness of variational methods for implementing MDL-inspired regularizers for improving generalization has remained elusive~\citep{blier2018description,cinquin2021pathologies}. Prior work has typically found tighter compression bounds via prequential coding methods~\citep{blier2018description,bornschein2023sequential}. Our work provides some new perspectives on the poor performance of variational approaches: objectives evaluated by prior work may have poor asymptotic bounds, and variational objectives can be difficult to optimize. 

\subsection{Non-Variational Complexity Measures}\label{sec:non-variational-related-work} As regularization is a foundational concept in machine learning, there are many related methods for neural networks~\citep{jiang2020fantastic}, and we give only a brief survey here. We focus on recently proposed compression-based methods with connections to the MDL principle, as these are most relevant to the description length objectives discussed in this paper.

Motivated by prior work on the intrinsic dimensionality of neural networks~\citep{li2018measuring}, \citet{lotfi2022pac} propose a complexity measure based on subspace sampling combined with dynamic quantization, similar to the quantization method of~\citet{han2016deep}. 
\citet{lotfi2024non} combines the method of \citet{lotfi2022pac} with LoRA~\citep{hu2022lora,dettmers2023qlora}, to further drive compression. Both \citet{lotfi2022pac} and \citet{lotfi2024non} discuss generalization bounds for neural networks related to compression, a relevant topic, but not one we study explicitly in this paper.
\citet{demoss2025complexity} studies the connection between the complexity of a neural network and ``Grokking'', i.e. the transition from memorization to generalization during training. To this end, they propose a new complexity measure and regularizer based on coarse-grained quantization and spectral entropy, which encourages low rank parameter matrices.
\citet{abudy2025minimum} similarly advocate for regularizers grounded in MDL instead of standard norm-based penalties on weights. They demonstrate how norm-based regularizers actively push RNN weights away from perfect initializations on algorithmic tasks, while their MDL-inspired objective does not. Their proposed MDL measure can be interpreted as a two-part code, using a non-differentiable prior that encodes individual weight values according to their representation as a rational number.
\citet{dwivedi2023revisiting} similarly studies MDL-inspired complexity measures in over-parameterized models where parameter count does not provide a suitable measure of complexity, and proposes a principled measure, MDL-COMP. However, their scope is limited to specific classes of linear models and kernel methods.

The variational code proposed in Section~\ref{sec:transformer-practical-code} drives compression by encouraging soft quantization of model weights around the means of the components of the GMM prior, and by reducing the effective number of weights by encouraging higher uncertainty posterior distributions close to the prior for ``unused'' capacity. The non-variational approaches discussed here similarly combine some form of quantization with some alternative means of reducing the effective number of parameters, e.g. through low-rank approximation. While prior work has not established asymptotic guarantees for these alternative encoding schemes, future work could aim to construct families of asymptotically optimal, or quasi-optimal, codes based on these alternative methods.

\subsection{Computational Universality of Transformers}
\label{sec:zmap-related-work}

There has been considerable interest in establishing the theoretical expressivity of various classes of Transformers~\citep{perez2021attention,chiang2023tighter,yun2020are,feng2023towards,merrill2024expressive,nowak2024representational,merrill2024little}. Most relevant to our constructed mapping between computable, rational-valued distributions and Transformer weights is prior work establishing various equivalences between Transformers and Turing machines. \citet{perez2021attention} established Turing completeness of an encoder-decoder Transformer in a non-probabilistic language recognition setting, with \citet{merrill2024expressive} and \citet{feng2023towards} extending this result to decoder-only variants. Most relevant to our result is \citet{nowak2024representational}, which demonstrates Turing completeness in a probabilistic setting. One difference is that our result holds for Transformer encoders (in the limit as context size and number of layers increase) compared with \citet{nowak2024representational}'s result for Transformer decoders with intermediate decoding steps. Their result also does not directly extend to prefix Turing machines, which is necessary to support our theoretical results. However, their work could be potentially helpful for future work seeking to establish the existence of families of asymptotically optimal codes for Transformer decoders with intermediate decoding steps.
Finally, \citet{schuurmans2023memory} shows that a Transformer decoder augmented with an external memory tape is computationally universal. Broadly, considering description length objectives over models with external tools could be of interest for future work. 

We use the ALTA compiler~\citep{shaw2024alta} to support our demonstration of universality. ALTA was inspired by RASP~\citep{weiss2021thinking}, an alternative language for compiling programs to Transformers, and the related Tracr~\citep{lindner2024tracr} compiler. ALTA offers better support for programs with loops, which is useful for emulating a Turing machine.

\subsection{Simplicity Bias} While our work focuses on implementing a simplicity bias \emph{explicitly} in a training objective, several studies have evaluated the \emph{implicit} simplicity bias of neural networks~\citep{zhou2023algorithms,abbe2023generalization,bhattamishra2023simplicity,tsoy2024simplicity,chen2024sudden,mingard2025deep}, or lack thereof~\cite[e.g.,][]{nikankin2025arithmetic}. Others study the simplicity bias of \emph{in-context learning} with pre-trained models~\citep{goldblum2023no,deletang2024language}, or proposed tasks to enhance this bias~\citep{grau-moya2024learning}.

\end{document}